\documentclass{amsart}
\usepackage{graphicx,fullpage,xcolor,comment,bm}
\usepackage{amsmath,amsfonts}
\usepackage[english]{babel}
\usepackage{amsthm}

\usepackage{algorithm, setspace}
\usepackage{textcomp}
\usepackage{algpseudocode}
\usepackage{bbm}
\usepackage[shortlabels]{enumitem}
\newtheorem{corollary}{Corollary}

\usepackage[hidelinks]{hyperref}

\def\brow{\beta_{\text{row}}}
\def\ncor{n_{\text{corr}}}
\def\tbrow{\tilde{\beta}_{\text{row}}}
\def\tbeta{\tilde {\beta}}

\newtheorem{theorem}{Theorem}
\newtheorem{lemma}{Lemma}
\newtheorem{remark}{Remark}
\newtheorem{definition}{Definition}

\newcommand{\E}{\mathbb{E}}
\newcommand{\C}{\mathbb{C}}


\newcommand{\tens}[1]{\bm{\mathcal{#1}}}
\newcommand{\mat}[1]{\bm{#1}}
\def\tA{{\tens{A}}}  
\def\tB{{\tens{B}}}  

\def\tE{{\tens{E}}}

\def\tH{{\tens{H}}}

\def\tX{{\tens{X}}}  
\def\tY{{\tens{Y}}}


\def\vb{{\bm{b}}}

\def\vv{{\bm{v}}}

\def\vx{{\bm{x}}}

\def\R{{\mathbb{R}}} 
\def\E{{\mathbb{E}}} 

\def\bcirc{{\mathrm{bcirc}}}
\def\unfold{{\mathrm{unfold}}}
\def\fold{{\mathrm{fold}}}


\def\bcirc{{\mathrm{bcirc}}}

\title{Quantile-Based Randomized Kaczmarz for Corrupted Tensor Linear Systems}
\author{Alejandra Castillo, Jamie Haddock, Iryna Hartsock, Paulina Hoyos, Lara Kassab, Alona Kryshchenko, Kamila Larripa, Deanna Needell, Shambhavi Suryanarayanan, Karamatou Yacoubou Djima}
\date{}

\begin{document}


\begin{abstract}
The reconstruction of tensor-valued signals from corrupted measurements, known as tensor regression, has become essential in many multi-modal applications such as hyperspectral image reconstruction and medical imaging. In this work, we address the tensor linear system problem 
$\tA \tX=\tB$, where $\tA$ is a measurement operator, $\tX$ is the unknown tensor-valued signal, and $\tB$ contains the measurements, possibly corrupted by arbitrary errors. Such corruption is common in large-scale tensor data, where transmission, sensory, or storage errors are rare per instance but likely over the entire dataset and may be arbitrarily large in magnitude. We extend the Kaczmarz method, a popular iterative algorithm for solving large linear systems, to develop a Quantile Tensor Randomized Kaczmarz (QTRK) method robust to large, sparse corruptions in the observations 
$\tB$. This approach combines the tensor Kaczmarz framework with quantile-based statistics, allowing it to mitigate adversarial corruptions and improve convergence reliability. We also propose and discuss the Masked Quantile Randomized Kaczmarz (mQTRK) variant, which selectively applies partial updates to handle corruptions further. We present convergence guarantees, discuss the advantages and disadvantages of our approaches, and demonstrate the effectiveness of our methods through experiments, including an application for video deblurring.

\end{abstract}
\maketitle

\section{Introduction}
The problem of reconstructing a tensor-valued signal from corrupted measurements, also known as \textit{tensor regression}, has become an important problem in applications such as hyperspectral image reconstruction~\cite{fan2017hyperspectral} and medical imaging~\cite{zhou2013tensor}. 
\emph{Multi-modal} data can be represented as a multidimensional array. 
Mathematically this is known as a \textit{tensor}, a higher-order generalization of a matrix.
A tensor $\tA \in \mathbb R^{n_1 \times \cdots \times n_d}$ is a tensor of \textit{order} $d$ which refers to the number of \textit{modes} in the tensor.
For example, a matrix has 2 modes and therefore is a tensor of order 2.
The modes in \emph{multi-modal} data can represent measurements along different data dimensions, e.g., spatial or temporal.

We consider the tensor linear system 
\begin{equation}
\label{eq:tensor-system}
    \tA \tX = \tB,
\end{equation}
where $\tA \in \R^{m \times l \times n}$ is the measurement operator, $\tX \in \R^{l \times p \times n}$ is the (unknown) signal of interest, and $\tB \in \R^{m \times p \times n}$ represents observed measurements. 
The notation $\tA \tX$ refers to the t-product between $\tA$ and $\tX$ (defined in Section~\ref{subsec:t-product}). 
The t-product, which was introduced in~\cite{kilmer2011factorization}, has gained significant traction and shown to be useful in dictionary learning~\cite{soltani2016tensor, zhang2015denoising, newman2020nonnegative}, low-rank tensor completion~\cite{semerci2014tensor, zhang2014novel, zhang2016exact, zhou2017tensor}, facial recognition~\cite{hao2013facial, zhang2018randomized}, and neural networks~\cite{newman2018stable, wang2020tensor}.

We consider the setting in which the tensor linear system has arbitrary or possibly adversarial corruptions in $\tB$. 
This setting is relevant in most modern applications where measurements must be collected, stored, and repeatedly accessed.  
These steps often introduce transmission or transcription corruptions in the data; on any one instance this corruption is rare, but across a large-scale tensor, corruption is likely and could be of arbitrary magnitude.
We distinguish between \textit{corruption}, in which there are few but relatively large errors in the measurement tensor, and \textit{noise}, in which there are many but relatively small errors in the measurement tensor.

In our work, we build upon the Kaczmarz method~\cite{Kaczmarz1937Angen}, an iterative algorithm for solving linear systems that has gained recent attention due to its low memory and computational requirements.
Simple iterative methods like the Kaczmarz method are prime candidates for corruption-robust methods.  
The information calculated in-iteration (e.g., residual entries) can often additionally provide information about the problem's geometry and consistency, the data's trustworthiness, and how nearby the solution is to the current iterate. 

It has become common to aggregate information across multiple iterations to attempt to mitigate the effect of benign noise~\cite{bach2014adaptivity, moorman2020randomized}.
However, variants using this approach in the case of adversarial corruptions
are newer and less well-understood~\cite{quantHNRS20, steinerberger2021quantile, pmlr-v108-shah20a, diakonikolas2019sever}.  
Since these problems arise in medical imaging, sensor networks, error correction, and data science, the effects of adversarial corruptions in the data could be catastrophic downstream. 
An ill-timed update using corrupted data can reverse or abolish the valuable information learned in prior iterations on uncorrupted data, making iterative methods, in these settings, challenging to employ on large-scale data.

\subsection{Contributions}
We propose \textit{Quantile Tensor Randomized Kaczmarz} (QTRK), a method for solving tensor linear systems under the t-product $\tA \tX = \tB$ with large corruptions in $\tB$. 
The method extends the tensor RK algorithm proposed in~\cite{ma2022randomized} to handle corruptions using quantile-based statistics for the randomized Kaczmarz method, as proposed in~\cite{quantHNRS20}.  
We prove that this method converges at least linearly in expectation to the solution of the unperturbed tensor system under mild assumptions on the quantile parameter, the measurement tensor $\tA$, and the rate and distribution of corruptions. 

Higher-order tensors offer a wider array of possible corruption patterns than the matrix case. 
As evidenced by this work, both the algorithm and the theory for the third-order tensor case is drastically different than that of the matrix case. 
Most notably, QTRK's performance suffers when the corruptions appear in many of the horizontal slices in $\tB$.
To address this shortcoming,
we additionally propose a ``relaxed" method that applies coordinate-wise masking to make use of partial projection updates; we call this method \textit{Masked Quantile Randomized Kaczmarz} (mQTRK).
We provide some intuition as to when mQTRK can and cannot be guaranteed to converge to a solution. 
Throughout, we note major differences in the behavior and theoretical limitations of the QTRK methods in contrast to their simpler matrix counterparts.  
Finally, we illustrate the potential of our proposed methods with numerical experiments on synthetic data and an application to video deblurring. 

\subsection{Organization}

In Section~\ref{subsec:notation}, we present the notation used throughout this paper.  
Next, we present some relevant literature focusing on the Kaczmarz method in Section~\ref{subsec:Kaczmarz}, Kaczmarz variants for systems with noise or corruption, such as that of~\cite{quantHNRS20}, in Section~\ref{subsec:QRK}, the tensor t-product algebra in Section~\ref{subsec:t-product}, and the randomized Kaczmarz method for tensor t-product linear systems in Section~\ref{subsec:TRK}.  
In Section~\ref{sec:QTRK}, we introduce QTRK and provide a convergence analysis for this method.  
In Section~\ref{sec:masking}, we discuss using coordinate-wise masking to make partial projection updates and propose an initial such approach that we call masked QTRK and provide some initial convergence analysis for this method.  
In Section~\ref{sec:experiments}, we present numerical experiments illustrating our results and probing the behavior of the QTRK and mQTRK methods, including an application for video deblurring.

\section{Background and Related Work}

\subsection{Notation}\label{subsec:notation}
We use boldfaced lower-case Latin letters (e.g., $\vx$) to denote vectors, bolded upper-case Latin letters (e.g., $\mat A$) to denote matrices, and bolded upper-case calligraphic Latin letters (e.g., $\tA$) to denote higher-order tensors.  We use unbolded lower-case Latin and Roman letters (e.g., $q$ and $\beta$) to denote scalars.  We let $[m]$ denote the set $\{1, 2, \cdots, m\}$.   

We denote by $\mat A_{i :}$ the $i$th row of matrix $\mat A$.
For a third-order tensor $\tA \in \C^{m \times l \times n}$, we denote by $\tA_{i::} \in \C^{1 \times l \times n}$ the $i$th horizontal slice, $\tA_{: j :} \in \C^{m \times 1 \times n}$ the $j$th lateral slice, $\tA_{::k} \in \C^{m \times l \times 1}$ the $k$th frontal slice. 
We denote the conjugate transpose of the tensor $\tA \in \C^{m \times l \times n}$ by $\tA^* \in \C^{l \times m \times n}$ which is obtained by taking the conjugate transpose of each of the frontal slices $\tA_{::i}$ for $i = 1, \cdots, n$ then reversing the order of the transposed frontal slices 2 through $n$.
We will use the terms \textit{row slice} and \textit{column slice} (instead of horizontal slice and lateral slice, respectively) to ease analogies with matrix linear system problems.

The notation $\|\vv\|$ denotes the Euclidean norm of a vector $\vv$ and $\|\cdot\|_F$ the Frobenius norm of a tensor. 
We denote by $\sigma_{\min}{(\mat A)}$ (and  $\sigma_{\max}{(\mat A)}$) the smallest (and largest, respectively) singular value of the matrix $\mat A$.
For a matrix $\mat A \in \C^{l \times n}$, we use $\mat A^*$ to denote the conjugate transpose and $\mat A^\dagger \in \C^{n \times l}$ to denote the pseudoinverse of the matrix $\mat A$.

While we focus our attention on real-valued tensors in this work, the definitions provided in this section are defined more generally on $\C$.

\subsection{The Kaczmarz Method}\label{subsec:Kaczmarz}
The Kaczmarz method~\cite{Kaczmarz1937Angen} (later rediscovered for use in computerized tomography as the Algebraic Reconstruction Technique~\cite{Herman1993Algebr}) is an iterative algorithm used for solving overdetermined consistent linear systems.
Suppose $\mat A \vx= \vb$ where $\mat A \in \R ^{m\times n}$ and $\vb \in \R^n$ ($m > n$) is a system of equations with unique solution $\vx_\star$.
Starting with initial point $\vx^{(0)}$, the Kaczmarz method aims to find a solution $\vx_\star$ satisfying the equations by selecting one equation at a time and projecting the current solution onto the hyperplane defined by that equation.
The update at the $k$th iteration is defined by:
\begin{equation*}
    \vx^{(k)} = \vx^{(k-1)} + \frac{b_{i} - \mat{A}_{i :} \vx^{(k-1)}}{\|\mat{A}_{i :}\|^2}\mat{A}_{i :}^\top
\end{equation*}
where $i = k \text{ mod $m$}$ and $\mat{A}_{i :}^\top$ denotes the transpose of $\mat{A}_{i :}$.
In the landmark paper~\cite{strohmer2009randomized}, the authors introduced a randomized variant of the Kaczmarz method where at iteration $k$ row $i$ is selected with probability $\|\mat{A}_{i :} \|^2/\|\mat A\|_F^2$.
The method is called Randomized Kaczmarz (RK) and it can be viewed as a special case of Stochastic Gradient Descent (SGD) with a specific step size. 
The authors proved the following expected linear convergence guarantee:
\begin{equation*}
    \mathbb{E} \|\vx^{(k)} - \vx^*\|^2\le \left(1 - \frac{\sigma_{\min}^2(\mat{A})}{\|\mat{A}\|_F^2}\right)^k \|\vx^{(0)} - \vx^*\|^2. 
\end{equation*} 
Many variants and extensions of the Kaczmarz method followed, including convergence analyses for inconsistent and random linear systems~\cite{Nee10:Randomized-Kaczmarz, CP12:Almost-Sure-Convergence}.

\subsection{Kaczmarz Method for Corrupted Systems} \label{subsec:QRK}
Several results have shown convergence of the Kaczmarz method (or SGD more generally) in the case of additive noise to the measurement vector $\vb$. 
For example, in~\cite{needell2010randomized,schopfer2022extended}, the iterates converge linearly to the least squares solution up to some radius that depends on the noise magnitude. 
In particular, there is a body of work on the problem of solving large-scale systems of linear equations $\mat A \vx = \vb$ that are inconsistent due to noise and corruptions in the measurement vector $\vb$~\cite{quantHNRS20, steinerberger2021quantile, pmlr-v108-shah20a,  diakonikolas2019sever, zouzias2013randomized}. 

Indeed, when some entries in $\vb$ have corruptions (arbitrarily large levels of additive noise), RK periodically projects onto the associated corrupted hyperplanes and therefore does not converge. 
Given that the corruptions can be large, iterative steps that encounter such corruption should be statistically different than non-affected iterative steps. 

The proposed remedy in~\cite{quantHNRS20} called \textit{Quantile-Randomized Kaczmarz} (QRK) utilizes the residual error $\mat A \vx^{(k)}- \vb$ to detect corruptions in each iteration.
If the magnitude of a residual entry is above a certain quantile of all residual entries, that entry is deemed unreliable and the corresponding hyperplane, if selected in the $k$th iteration, will not be used. 
Otherwise, QRK projects in the same manner as RK.
The threshold value assigned to be the $q$-quantile of the residuals is a parameter of the method.
The QRK method shows reliable convergence to the true solution of the uncorrupted system under mild assumptions on the number of corruptions both empirically and theoretically. 

\subsection{Tensor t-product Algebra}
\label{subsec:t-product}

In~\cite{ma2022randomized}, RK is extended to solve multi-linear systems under the t-product, as defined in \eqref{eq:tensor-system}, with convergence guarantees analogous to those of RK for matrix linear systems.
We first provide background on the t-product before discussing the method proposed in~\cite{ma2022randomized}.

The t-product was defined in the foundational work~\cite{kilmer2011factorization} as the product between two tensors of order three. 
The authors subsequently derived formulations of the associated tensor identity, inverse, pseudoinverse, and transpose and extended orthogonal matrix factorizations such as the SVD and QR factorizations to tensors.

\begin{definition}
\label{def:t-product}
The \emph{tensor-tensor t-product} between $\tA \in \C^{m \times l \times n}$ and $\tB \in \C^{l \times p \times n}$ is defined as
\begin{equation*}
\tA \tB = \fold (\bcirc(\tA) \unfold (\tB)) \in \C^{m \times p \times n}
\end{equation*}
where $\bcirc (\tA)$ denotes the block-circulant matrix
\[\bcirc(\tA) = \begin{pmatrix}
\tA_{::1} & \tA_{::n} & \tA_{::n-1} & \dots & \tA_{::2} \\
\tA_{::2} & \tA_{::1} & \tA_{::n} & \dots & \tA_{::3} \\
\vdots & \vdots & \vdots & \dots & \vdots \\
\tA_{::n} & \tA_{::n-1} & \tA_{::n-2} & \dots & \tA_{::1} \\
\end{pmatrix} \in \C^{mn \times ln}\]
and $\unfold(\tB)$ denotes the unfolding operation defined as
\[\unfold(\tB) = \begin{pmatrix}
 \tB_{::1}\\
 \tB_{::2}\\
 \vdots \\
 \tB_{::n}
\end{pmatrix} \in \C^{ln \times p}.\]
and $\fold(\unfold (\tB)) = \tB$.
\end{definition}

A useful property for our analysis is given in~\cite[Lemma 2]{castillo2024randomized} and stated below.

\begin{lemma}[\cite{castillo2024randomized}] 
\label{lem:boundsforFrobeniusnorm}
For any two tensors $\tA \in \C^{m \times l \times n}$ and $\tB \in \C^{l \times p \times n}$,
\begin{align*}
        \sigma_{\min}(\bcirc(\tA)) \|\tB\|_F 
        &\le \|\tA \tB\|_F \le \sigma_{\max}(\bcirc(\tA)) \|\tB\|_F. 
        \
\end{align*}
\end{lemma}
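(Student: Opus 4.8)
The plan is to reduce the tensor inequality to the familiar matrix fact that left-multiplication by $\bcirc(\tA)$ distorts the Frobenius norm by at most its extreme singular values. First I would observe that the unfolding map is a linear bijection that preserves the Frobenius norm: since $\unfold(\tB)$ merely stacks the frontal slices of $\tB$ without changing any entry, $\|\tB\|_F = \|\unfold(\tB)\|_F$, and likewise $\fold$ is its inverse, so by Definition~\ref{def:t-product},
$\|\tA \tB\|_F = \|\fold(\bcirc(\tA)\unfold(\tB))\|_F = \|\bcirc(\tA)\unfold(\tB)\|_F$. Setting $\mat{M} = \bcirc(\tA) \in \C^{mn \times ln}$ and $\mat{X} = \unfold(\tB) \in \C^{ln \times p}$, the claim becomes the purely matricial statement $\sigma_{\min}(\mat{M})\,\|\mat{X}\|_F \le \|\mat{M}\mat{X}\|_F \le \sigma_{\max}(\mat{M})\,\|\mat{X}\|_F$.

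Next I would prove this matrix inequality column by column. Writing the columns of $\mat{X}$ as $\vx_1, \dots, \vx_p$, we have $\|\mat{X}\|_F^2 = \sum_{j=1}^p \|\vx_j\|^2$ and $\|\mat{M}\mat{X}\|_F^2 = \sum_{j=1}^p \|\mat{M}\vx_j\|^2$. For each $\vx_j$ the standard singular-value bounds $\sigma_{\min}(\mat{M})\,\|\vx_j\| \le \|\mat{M}\vx_j\| \le \sigma_{\max}(\mat{M})\,\|\vx_j\|$ hold; these follow from the SVD $\mat{M} = \mat{U}\mat{\Sigma}\mat{V}^*$ together with unitary invariance of the Euclidean norm. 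Squaring each of these, summing over $j$, and taking square roots gives the two-sided bound for $\mat{M}\mat{X}$, and undoing the identifications of the previous step recovers the lemma.

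The only point worth flagging is the lower bound: $\sigma_{\min}(\bcirc(\tA))$ (the smallest of the $\min(mn, ln)$ singular values) gives a nonvacuous bound only when $\bcirc(\tA)$ has full column rank $ln$; otherwise $\sigma_{\min}(\bcirc(\tA)) = 0$ and the left inequality holds trivially. So no case analysis is needed, but it should be noted that, just as with $\sigma_{\min}(\mat{A}) > 0$ in the matrix RK guarantee, the left inequality is only informative in the overdetermined, full-rank regime in which the subsequent convergence theory is stated. I do not anticipate a genuine obstacle here — once one recognizes that $\fold$ and $\unfold$ are Frobenius isometries, the statement is simply the block-matrix shadow of a textbook singular-value estimate.
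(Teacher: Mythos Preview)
Your argument is correct. The reduction via the Frobenius-isometry of $\unfold$/$\fold$ to the matrix inequality $\sigma_{\min}(\mat{M})\|\mat{X}\|_F \le \|\mat{M}\mat{X}\|_F \le \sigma_{\max}(\mat{M})\|\mat{X}\|_F$, followed by the column-wise singular-value bound, is exactly the right route, and your remark about the lower bound being vacuous when $\bcirc(\tA)$ is rank-deficient is accurate and well placed.

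Note, however, that the paper does not actually supply a proof of this lemma: it is quoted verbatim from~\cite{castillo2024randomized} (as Lemma~2 there) and used as a black box. So there is no in-paper proof to compare against. That said, your argument is the canonical one and is almost certainly what appears in the cited source; there is essentially no alternative once one recognizes, as you did, that the t-product is by definition $\fold(\bcirc(\tA)\unfold(\tB))$ and that the folding operations are entry-preserving.
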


\subsection{Tensor Randomized Kaczmarz}
\label{subsec:TRK}
Recently, Kaczmarz-type iterative methods have been proposed for a variety of tensor linear systems and regression problems.  
Several variants of Kaczmarz methods have been generalized to tensor regression problems~\cite{ma2022randomized,chen2021regularized,tang2023sketch}. 
In~\cite{ma2022randomized},
the authors propose Tensor Randomized Kaczmarz (TRK), a generalization of the randomized Kaczmarz method for matrix linear systems to multi-linear systems $\tA \tX = \tB$, as defined in \eqref{eq:tensor-system}.

The TRK algorithm iteratively samples a row slice of the system to obtain $\tA_{i_k::}$ and $\tB_{i_k::}$ and projects $\tX^{(k-1)}$ onto the solution space of this sampled subsystem. 
The pseudocode is provided in Algorithm~\ref{alg:TRK}. 
The authors prove that TRK converges at least linearly in expectation to the system's unique solution $\tX_\star$.

\begin{algorithm}
\caption{Tensor Randomized Kaczmarz (TRK)~\cite{ma2022randomized} }\label{alg:TRK}
	\begin{algorithmic}[1]
		\Procedure{TRK}{$\tA,\tB,\tX^{(0)}, K$} 
		\For{$k = 1, \ldots, K$}
		      \State{Sample $i_k \in [m]$.}
                \State{$\tX^{(k)} = \tX^{(k-1)} - \tA_{i_k : :}^*(\tA_{i_k : :} \tA_{i_k : :}^*)^{-1}(\tA_{i_k : :}\tX^{(k-1)} - \tB_{i_k : :})$}
		\EndFor{} \\
		\Return{$\tX^{(K)}$}
		\EndProcedure
	\end{algorithmic}
\end{algorithm}

Building on TRK~\cite{ma2022randomized} and QRK~\cite{quantHNRS20} (in the matrix case), we propose a quantile-based randomized Kaczmarz method for solving tensor linear systems under the t-product $\tA \tX = \tB$ with large corruptions in $\tB$.

\section{Quantile Tensor Randomized Kaczmarz}
\label{sec:QTRK}
We consider a tensor linear system $\tA \tX = \tB$ defined by $\tA \in \mathbb{R}^{m \times l \times n}$ and $\tB \in \mathbb{R}^{m \times p \times n}$. 
The observed measurement tensor $\tB$ decomposes as \[\tB = \tB_\star + \tB_{\text{corr}}\] where $\tB_\star \in \mathbb{R}^{m \times p \times n}$ is the ideal, unobserved measurement tensor for the consistent system $\tA \tX = \tB_\star$ and $\tB_{\text{corr}} \in \mathbb{R}^{m \times p \times n}$ is the tensor of corruptions which we assume to be sparse.
Let $\tX_\star \in \mathbb{R}^{l \times p \times n}$ denote the solution of the true underlying, consistent system
$\tA \tX_\star = \tB_\star$.  Our problem is to compute $\tX_\star$ given only knowledge of $\tA$ and $\tB$.
We refer to a tensor linear system as overdetermined if $m \geq l$.

We first remark the following fact that describes how a single corruption in $\tB$ would affect solving for $\tX_\star$. 

\begin{remark}\label{lem:corr_propag}
If a single entry $i,j,h$ of $\tB_\star$ is perturbed to create $\tB$, the system defined by $\tA$ and $\tB$ no longer need be consistent.  However, if we let $\tilde{\tX}$ be the solution to the least-squares problem $\min_{\tX} \|\tA \tX - \tB\|_F^2$ then $\tilde{\tX}$ and $\tX_\star$ are guaranteed to agree 
in all column slices besides the $j$th one. Only the entries in the $j$th column slice of $\tilde{\tX}$ may differ from those of $\tX_\star$.
This follows from the column decoupling of the t-product linear system.
\end{remark}

We now define the following parameters that will be useful for our analysis.

\begin{definition}
\label{def:ustar}
We define $U_\star$ as the set of indices corresponding to the rows slices in $\tB$ that are entirely uncorrupted, 
\begin{equation*}
    U_\star = \{i \in [m]: (\tB_{\text{corr}})_{i::} = {\bf 0}\}. 
\end{equation*}
\end{definition}

\begin{definition}
\label{def:beta}
The set of all corrupted indices in $\tB$ is defined as
$$ C = \{(i,j,h) \in [m] \times [p] \times [n]: (\tB_{\text{corr}})_{ijh} \neq 0\}.$$
We let $\ncor = |C|$ denote the number of corruptions in $\tB$.
Further, $0 \leq \beta \leq 1$ is defined as the proportion $\beta = |C|/(mpn)$ of corrupted entries in $\tB$ and $0 \leq \brow \leq 1$ the proportion of rows in $\tB$ that are corrupted, 
\begin{equation}
\label{eq:betarow}
\brow = \frac{|U_{\star}^c|}{m} = \frac{m-|U_{\star}|}{m}.
\end{equation}
\end{definition}

\begin{definition}
\label{def:Vi}
For a given index $i \in [m]$, we define $V_i^c$ to be the set of all indices $j \in [p]$ where $\tB_{ijh}$ is corrupted for some $h \in [n]$:
$$ V_i^c = \{j\in [p] : (\tB_{\text{corr}})_{ijh} \neq 0 \text{ for some } h \in [n] \}.$$
Further, we denote by $V_i = [p] \backslash V_i^c$ the complement of $V_i^c$.   
\end{definition}

By Remark~\ref{lem:corr_propag}, the set $V_i^c$ represents the indices $j$ of column slices $\tX_{:j:}$ that could be affected by the corruptions $\tB_{ijh}$ for some $h \in [n]$.
Note that in Definitions \ref{def:ustar} -- \ref{def:Vi} the parameters are defined based on true knowledge of corruptions.

We now present our method QTRK which extends the TRK approach to corrupted tensor linear systems setting using quantile-based statistics to estimate corrupted entries.

\begin{definition}[Estimated Corruptions]
\label{def:q-quantile}
Let $Q_q(\tE)$ denote the empirical $q$-quantile of the residual tensor $\tE = \tA \tX -\tB$ over all its entries,
\begin{equation*}
Q_q(\tE) = q\text{-quantile} \left \{ |\tE_{ijh}|: i \in [m], j \in [p], h \in [n] \right \},
\end{equation*} 
where $| \cdot |$ denotes the entry-wise absolute value. Here, the $q$-quantile of a finite set $S \subset \mathbb{R}$ is defined to be the $\lfloor  q|S| \rfloor$-th smallest element of $S$.
In other words, it is the element $s \in S$ such that $|\{r \in S: r \le s\}| = \lfloor  q|S| \rfloor$. 
The entries $\tB_{ijh}$ are estimated to be corrupted if $ |\tE_{ijh}| > Q_q(\tE )$.
\end{definition}

In each iteration $k$ of QTRK, the $q$-quantile $Q_q(\tE^{(k)})$ is computed on residual tensor $\tE^{(k)}= \tA \tX^{(k)} -\tB$.
Based on this estimate, QTRK omits projecting onto row slice $\tA_{i::}$ if $|\tE^{(k)}_{ijh}| > Q_q(\tE^{(k)})$ for some $h \in [n], j \in[p]$.
The algorithm then randomly samples a row slice from the estimated set of uncorrupted rows and projects $\tX^{(k-1)}$ onto the solution space of this subsystem. 
We present QTRK in Algorithm~\ref{alg:QTRK}.  We note that an algorithm similar to QTRK was proposed previously in~\cite{tensorregression24}, but utilized a quantile-threshold over the norms of the slices of the residual tensor. Unlike in this current work, no theoretical results were provided in~\cite{tensorregression24}. 
 We additionally explore a novel masking variant of this method in Section~\ref{sec:masking}.

\begin{algorithm}[h]
\caption{Quantile Tensor Randomized Kaczmarz (QTRK)}
\label{alg:QTRK}
\begin{algorithmic}[1]
\Procedure{QTRK}{$\tens{A},\tens{B}$, $\tens{X}^{(0)}$, quantile level $q$, $K$}
\For{$k = 0, 1, \ldots, K-1$}
\State{Compute residual tensor $\tE^{(k)}= \tA \tX^{(k)} -\tB$ \text{ and $q$-quantile } $Q_q(\tE^{(k)})$} \Comment{See Definition~\ref{def:q-quantile}}
\State{Estimate corrupted row slices $U^c = \left \{ i \in [m] :  |\tE^{(k)}_{ijh}| > Q_q(\tE^{(k)}) \text{ for some } h \in [n], j \in[p] \right \}$ }
\State{Sample row slice $i\sim\text{Uniform}(U)$}
\State{$\tens{X}^{(k+1)} = \tens{X}^{(k)} - \tens{A}^*_{i ::}(\tens{A}_{i ::}\tens{A}^*_{i ::})^{-1}(\tens{A}_{i ::}\tens{X}^{(k)} -\tens{B}_{i::})$}  
\EndFor{}
\\
\Return{$\tens{X}^{(K)}$}
\EndProcedure
\end{algorithmic}
\end{algorithm}

We note that under the assumption that $\tA_{i::} \tA_{i::}^*$ is invertible, 
\begin{equation}
\label{eq:projop}
{\tens P}_{i}(\tX) := \left(\tA_{i::}^*\left (\tA_{i::}\tA_{i::}^*\right )^{-1} \tA_{i::}\right) \tX
\end{equation}
is an orthogonal projection onto the range of $\tA_{i::}$.
This is proven in~\cite{kilmer2013third}. 

\subsection{Convergence Analysis of QTRK}
\label{subsec:QTRK_convergence}
We now provide an analysis of the convergence of QTRK (Algorithm~\ref{alg:QTRK}) using a similar proof technique to that in~\cite{steinerberger2021quantile} which is a modification from~\cite{quantHNRS20}. 
We recall the dimensions of the tensor linear system,
\[\tA \tX = \tB,\]
with $\tA \in \R^{m \times l \times n}$, $\tX \in \R^{l \times p \times n}$, and $\tB \in \R^{m \times p \times n}$.

First, in Lemma~\ref{qbound}, we bound the quantile value $Q_q(\tE^{(k})$ from above at each iteration $k$. 
In Lemma~\ref{lem:qtrk_corr}, we show that if we update some entries in $\tX$ that are affected by corruptions in $\tB$, then due to the upper bound on $Q_q(\tE^{(k})$, these corruptions cannot be too detrimental. 
Then, in Lemma~\ref{lem:qtrk_cautious}, we argue that if we update only non-corrupted entries, the improvement is the same as in the TRK method in~\cite{ma2022randomized}.  

\begin{lemma}[Bounding the Quantile Value]\label{qbound}
    Let the quantile level $q$ be $0<q \leq 1-\beta$ and let $\tX^{(k)}$ be any arbitrary tensor of the same dimensions as $\tX_\star$ (which for our purposes will be an arbitrary iterate in QTRK).  
    Then,
    $$
    Q_q(\tE^{(k)}) \leq \frac{\sigma_{\max}(\bcirc(\tA))}{\sqrt{mpn(1-\beta-q)}} \|\tX^{(k)} - \tX_\star\|_F,
    $$
    where $\tE^{(k)} = \tA\tX^{(k)} - \tB$ denotes the current residual tensor.
\end{lemma}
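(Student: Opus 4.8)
The plan is to exploit the elementary fact that on the uncorrupted entries of $\tB$ the residual $\tE^{(k)}$ coincides with $\tA(\tX^{(k)} - \tX_\star)$, whose Frobenius norm is controlled by Lemma~\ref{lem:boundsforFrobeniusnorm}, together with a counting argument showing that many of those uncorrupted entries must lie above the quantile threshold. Note that no special structure of $\tX^{(k)}$ is needed, consistent with the statement allowing an arbitrary tensor of the right dimensions.

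First I would rewrite the residual: since $\tB = \tB_\star + \tB_{\text{corr}}$ and $\tA\tX_\star = \tB_\star$, linearity of the t-product gives $\tE^{(k)} = \tA(\tX^{(k)} - \tX_\star) - \tB_{\text{corr}}$, so for every index $(i,j,h)\notin C$ we have $(\tB_{\text{corr}})_{ijh}=0$ and hence $\tE^{(k)}_{ijh} = \bigl(\tA(\tX^{(k)} - \tX_\star)\bigr)_{ijh}$; the number of such uncorrupted entries is $mpn - \ncor = mpn(1-\beta)$. Next I would count entries above the threshold: writing $t := Q_q(\tE^{(k)})$, i.e. the $\lfloor qmpn\rfloor$-th smallest element of the multiset $\{\,|\tE^{(k)}_{ijh}|\,\}$, at most $\lfloor qmpn\rfloor-1 < qmpn$ of these values are strictly below $t$, so at least $mpn - qmpn$ entries satisfy $|\tE^{(k)}_{ijh}| \ge t$. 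Discarding the $\ncor = \beta mpn$ corrupted indices, at least $mpn(1-\beta-q)$ of the \emph{uncorrupted} entries satisfy $|\tE^{(k)}_{ijh}| \ge t$, a quantity that is nonnegative precisely because $q \le 1-\beta$.

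Finally I would combine the two facts. Summing $|\tE^{(k)}_{ijh}|^2$ over just those uncorrupted entries lying above $t$ gives the lower bound $mpn(1-\beta-q)\,t^2$; on the other hand, by the first step this sum equals a sum of a subset of the squared entries of $\tA(\tX^{(k)} - \tX_\star)$, hence is at most $\|\tA(\tX^{(k)} - \tX_\star)\|_F^2 \le \sigma_{\max}(\bcirc(\tA))^2\,\|\tX^{(k)} - \tX_\star\|_F^2$ by Lemma~\ref{lem:boundsforFrobeniusnorm}. Chaining these and taking square roots yields the claimed bound, the case $q = 1-\beta$ being vacuous since the right-hand side is then infinite.

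The only genuinely delicate point is the bookkeeping in the counting step: handling the floor in the definition of the quantile and ensuring the set of uncorrupted entries above the threshold has size at least $mpn(1-\beta-q)$ — this is exactly where the hypothesis $q \le 1-\beta$ enters. Everything else is routine substitution and an application of Lemma~\ref{lem:boundsforFrobeniusnorm}, so I do not anticipate a real obstacle in this lemma; the substance of the convergence analysis will come in the two lemmas that follow.
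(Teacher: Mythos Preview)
Your proposal is correct and follows essentially the same argument as the paper: restrict the residual to the uncorrupted index set where it agrees with $\tA(\tX^{(k)}-\tX_\star)$, bound its Frobenius norm via Lemma~\ref{lem:boundsforFrobeniusnorm}, and use the counting argument that at least $mpn(1-\beta-q)$ uncorrupted entries lie at or above $Q_q(\tE^{(k)})$. Your handling of the floor and the edge case $q=1-\beta$ is slightly more careful than the paper's write-up, but the substance is identical.
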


\begin{proof}
Suppose we restrict the residual $\tE^{(k)} $ to the set of entries in $\tB$ that are uncorrupted, 
$$C^c = \{(i,j,h)\in [m] \times [p] \times [n] : (\tB_{\text{corr}})_{ijh} = 0\}.$$ 
On this set, we have
\begin{align*}
\|\tE^{(k)}_{C^c}\|_F 
&= \|(\tA(\tX^{(k)} - \tX_*))_{C^c}\|_F
\leq   \|\tA(\tX^{(k)}-\tX_\star)\|_F 
\overset{\text{Lem }\ref{lem:boundsforFrobeniusnorm}}{\leq}  \sigma_{\max}(\bcirc(\tA))\|\tX^{(k)}-\tX_\star\|_F.
\end{align*}

By definition of $Q_q(\tE^{(k)})$ and the bound $0<q \leq 1-\beta$, we have at least $(1-q)mpn$ entries in $|\tE^{(k)}|$ that are greater than or equal to $Q_q(\tE^{(k)})$ and at least $(1 - q) (mpn) - \beta (mpn)$ of those correspond to entries in $\tB$ that have not been corrupted.
Thus, we have
$$
    \sqrt{mpn(1-\beta-q)}  Q_q(\tE^{(k)}) \leq  \|\tE^{(k)}_{C^c}\|_F \leq \sigma_{\max}(\bcirc(\tA))\|\tX^{(k)}-\tX_\star\|_F,
$$
and therefore
$$
    Q_q(\tE^{(k)}) \leq \frac{\sigma_{\max}(\bcirc(\tA))}{\sqrt{mpn(1-\beta-q)}} \|\tX^{(k)} - \tX_\star\|_F.
$$
\end{proof}

In the next lemma, we bound the expected error $\|\tX^{(k+1)} - \tX_\star\|_F$ given that in iteration $k$ a row $i_k$ is sampled where $B_{i_k::}$ has corruptions, but were not detected by the quantile test with value $Q_q(\tE^{(k)})$.
In other words, entries in $\tX^{(k)}$ that are affected by corruptions were updated.

\begin{lemma}[Corrupted Row Selected] 
\label{lem:qtrk_corr}
    Let the quantile level $q$ be $0<q \leq 1-\beta$.
    Suppose in a given iteration $k$, QTRK samples a row $i_k \in [m]$ such that $i_k\notin U_\star$ and $|\tE_{i_kjh}|  < Q_q(\tE^{(k)})$ for all $j \in [p]$ and $h \in [n]$. Then, 
    $$
    \E\left(\|\tX^{(k+1)} - \tX_\star\|_F \;|\; i_k\notin U_\star \right) \leq \|\tX^{(k)}-\tX_\star\|_F \left( 1 + \frac{\sigma_{\max}(\bcirc(\tA))}{\sqrt{m(1-\beta-q)}}\eta\right),
    $$
    where $\eta = \max \limits _i \sigma_{\max}(\bcirc(\tA _{i::}^\dagger))$ and the expectation is taken over the choice of row slice $i_k$.
\end{lemma}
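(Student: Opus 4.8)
The plan is to track how far the QTRK update moves the iterate when a corrupted row slice $i_k \notin U_\star$ is (undetectably) selected, and to split that displacement into the "ideal" part — what the update would have done on the clean subsystem $\tA_{i_k::}\tX = (\tB_\star)_{i_k::}$ — and the "corruption" part caused by using $\tB_{i_k::}$ instead of $(\tB_\star)_{i_k::}$. First I would write the update as $\tX^{(k+1)} = \tX^{(k)} - \tA_{i_k::}^*(\tA_{i_k::}\tA_{i_k::}^*)^{-1}(\tA_{i_k::}\tX^{(k)} - \tB_{i_k::})$ and insert $(\tB_\star)_{i_k::}$: the term involving $\tA_{i_k::}\tX^{(k)} - (\tB_\star)_{i_k::}$ is exactly the TRK projection step, so by the argument of~\cite{ma2022randomized} (nonexpansiveness of the orthogonal projection ${\tens P}_{i_k}$ in \eqref{eq:projop}) the resulting point is no farther from $\tX_\star$ than $\tX^{(k)}$. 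The remaining term is $\tA_{i_k::}^*(\tA_{i_k::}\tA_{i_k::}^*)^{-1}\big((\tB_\star)_{i_k::} - \tB_{i_k::}\big) = \tA_{i_k::}^*(\tA_{i_k::}\tA_{i_k::}^*)^{-1}\,(-(\tB_{\text{corr}})_{i_k::})$, a pure error vector that I would bound in Frobenius norm by the triangle inequality.

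Next I would bound that error term. Note that on the selected row, $(\tB_{\text{corr}})_{i_k jh} = (\tB_\star)_{i_k jh} - \tB_{i_k jh}$, and since $\tA_{i_k::}\tX_\star = (\tB_\star)_{i_k::}$, we have $(\tB_\star - \tB)_{i_k jh} = (\tA_{i_k::}\tX_\star - \tB)_{i_k jh}$... but more directly, $-(\tB_{\text{corr}})_{i_k::} = (\tA_{i_k::}\tX_\star - \tB_{i_k::}) - (\tA_{i_k::}\tX_\star - (\tB_\star)_{i_k::}) = \tE^{(k)}_{i_k::} - \tA_{i_k::}(\tX^{(k)} - \tX_\star)$ restricted appropriately; the clean, bookkeeping-friendly identity is simply that on row $i_k$ the corruption magnitude is controlled because $\tB_{i_k::}$ passed the quantile test: $|\tE^{(k)}_{i_k jh}| = |\tA_{i_k::}\tX^{(k)} - \tB_{i_k::}|_{jh} < Q_q(\tE^{(k)})$ for all $j,h$. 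Hence $\|(\tA_{i_k::}\tX^{(k)} - \tB_{i_k::})\|_F \le \sqrt{pn}\,Q_q(\tE^{(k)})$ — wait, one must be careful: the error we want to bound is $\|\tA_{i_k::}^*(\tA_{i_k::}\tA_{i_k::}^*)^{-1}(\tB_{\text{corr}})_{i_k::}\|_F$, and I would relate $(\tB_{\text{corr}})_{i_k::}$ to the residual. Writing $(\tB_{\text{corr}})_{i_k::} = (\tA_{i_k::}\tX^{(k)} - \tB_{i_k::}) - (\tA_{i_k::}\tX^{(k)} - (\tB_\star)_{i_k::}) = \tE^{(k)}_{i_k::} - \tA_{i_k::}(\tX^{(k)}-\tX_\star)$; the piece $\tA_{i_k::}^*(\tA_{i_k::}\tA_{i_k::}^*)^{-1}\tA_{i_k::}(\tX^{(k)}-\tX_\star) = {\tens P}_{i_k}(\tX^{(k)}-\tX_\star)$ is the genuine projection part (already absorbed above), and the genuinely "bad" leftover is $-\tA_{i_k::}^*(\tA_{i_k::}\tA_{i_k::}^*)^{-1}\tE^{(k)}_{i_k::}$. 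Using Lemma~\ref{lem:boundsforFrobeniusnorm} (applied with the tensor $\tA_{i_k::}^\dagger$, whose block-circulant has largest singular value $\sigma_{\max}(\bcirc(\tA_{i_k::}^\dagger)) \le \eta$) together with the fact that $\tA_{i_k::}^*(\tA_{i_k::}\tA_{i_k::}^*)^{-1}$ is the t-product pseudoinverse action of $\tA_{i_k::}$, this leftover is bounded by $\eta \|\tE^{(k)}_{i_k::}\|_F$. Since each of the $pn$ entries of $\tE^{(k)}_{i_k::}$ has magnitude at most $Q_q(\tE^{(k)})$, we get $\|\tE^{(k)}_{i_k::}\|_F \le \sqrt{pn}\,Q_q(\tE^{(k)})$, and then Lemma~\ref{qbound} gives $Q_q(\tE^{(k)}) \le \frac{\sigma_{\max}(\bcirc(\tA))}{\sqrt{mpn(1-\beta-q)}}\|\tX^{(k)}-\tX_\star\|_F$. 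Multiplying the $\sqrt{pn}$ factor against the $\sqrt{mpn}$ in the denominator cancels $pn$, leaving the claimed $\frac{\sigma_{\max}(\bcirc(\tA))}{\sqrt{m(1-\beta-q)}}$.

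Putting it together: by the triangle inequality, $\|\tX^{(k+1)} - \tX_\star\|_F \le \|{\tens P}_{i_k}^{\perp}(\tX^{(k)}-\tX_\star)\|_F + \eta\|\tE^{(k)}_{i_k::}\|_F \le \|\tX^{(k)}-\tX_\star\|_F + \eta\sqrt{pn}\,Q_q(\tE^{(k)}) \le \|\tX^{(k)}-\tX_\star\|_F\big(1 + \frac{\sigma_{\max}(\bcirc(\tA))}{\sqrt{m(1-\beta-q)}}\eta\big)$, which is deterministic in $i_k$ (given $i_k \notin U_\star$), so taking the conditional expectation over the choice of $i_k$ among undetected corrupted rows preserves the bound. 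The main obstacle I anticipate is the bookkeeping in the second paragraph: cleanly identifying which part of the update is the nonexpansive TRK projection and which part is the "corruption error," and then justifying that the operator $\tA_{i_k::}^*(\tA_{i_k::}\tA_{i_k::}^*)^{-1}$ acting on a row-slice tensor has the operator-norm bound $\eta = \max_i \sigma_{\max}(\bcirc(\tA_{i::}^\dagger))$ — this requires correctly invoking Lemma~\ref{lem:boundsforFrobeniusnorm} for the t-product with the pseudoinverse tensor and confirming that $(\tA_{i::}\tA_{i::}^*)^{-1}\tA_{i::}$ realizes the action of $\bcirc(\tA_{i::}^\dagger)$. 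Everything else is triangle inequality plus the two prior lemmas.
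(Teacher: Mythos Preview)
Your final bound is correct and your core ingredients (Lemma~\ref{lem:boundsforFrobeniusnorm} applied to $\tA_{i_k::}^\dagger$, the entrywise quantile control $\|\tE^{(k)}_{i_k::}\|_F \le \sqrt{pn}\,Q_q(\tE^{(k)})$, and Lemma~\ref{qbound}) are exactly those of the paper. The paper's route is simply more direct: it never introduces $(\tB_\star)_{i_k::}$ or $(\tB_{\text{corr}})_{i_k::}$, and applies the triangle inequality straight to $\tX^{(k+1)} - \tX_\star = (\tX^{(k)} - \tX_\star) - \tA_{i_k::}^\dagger\tE^{(k)}_{i_k::}$ to obtain $\|\tX^{(k+1)} - \tX_\star\|_F \le \|\tX^{(k)} - \tX_\star\|_F + \|\tA_{i_k::}^\dagger\tE^{(k)}_{i_k::}\|_F$, then bounds the second term as you do.

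Your detour, however, contains an actual error. The intermediate inequality $\|\tX^{(k+1)} - \tX_\star\|_F \le \|{\tens P}_{i_k}^{\perp}(\tX^{(k)}-\tX_\star)\|_F + \eta\|\tE^{(k)}_{i_k::}\|_F$ is false in general. When you re-expand $(\tB_{\text{corr}})_{i_k::} = \tA_{i_k::}(\tX^{(k)}-\tX_\star) - \tE^{(k)}_{i_k::}$ and apply $\tA_{i_k::}^\dagger$, the piece ${\tens P}_{i_k}(\tX^{(k)}-\tX_\star)$ is \emph{not} ``already absorbed'' into the ${\tens P}_{i_k}^\perp$ term --- adding it back exactly reconstitutes the full $\tX^{(k)}-\tX_\star$, which is the paper's starting point. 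Concretely, if $\tX^{(k)} = \tX_\star + \tA_{i_k::}^\dagger(\tB_{\text{corr}})_{i_k::}$ then $\tE^{(k)}_{i_k::} = 0$ and ${\tens P}_{i_k}^\perp(\tX^{(k)}-\tX_\star) = 0$, so your intermediate bound would force $\tX^{(k+1)} = \tX_\star$; yet the update leaves $\tX^{(k+1)} = \tX^{(k)} \ne \tX_\star$. Because you immediately relax $\|{\tens P}_{i_k}^\perp(\cdot)\|_F$ to $\|\tX^{(k)}-\tX_\star\|_F$ anyway, the final conclusion survives --- just drop the detour and argue as the paper does.
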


\begin{proof}
If we select $i_k \notin U_\star$, then by the QTRK update rule, we have:

\begin{align*}
        \|\tX^{(k+1)} - \tX_\star\|_F &= \|\tX^{(k)} -\tA_{i_k::}^*(\tA_{i_k::}\tA_{i_k::}^*)^{-1}\left(\tA_{i_k::}\tX^{(k)}-\tB_{i_k::}\right) - \tX_\star\|_F \\
        &= \|\tX^{(k)} -\tA_{i_k::}^\dagger\left(\tA_{i_k::}\tX^{(k)}-\tB_{i_k::}\right) - \tX_\star\|_F\\
        &\leq \|\tX^{(k)} - \tX_\star\|_F + \|\tA_{i_k::}^\dagger\left(\tA_{i_k::}\tX^{(k)}-\tB_{i_k::}\right)\|_F.
    \end{align*} 
Then, by Lemma~\ref{lem:boundsforFrobeniusnorm} and matrix norm properties, we have
\begin{align*}
\|\tA_{i_k::}^\dagger\left(\tA_{i_k::}\tX^{(k)}-\tB_{i_k::}\right)\|_F & \leq \sigma_{\max}(\bcirc(\tA_{i_k::}^{\dagger})) \|(\tA_{i_k::}\tX^{(k)}-\tB_{i_k::})\|_F \\
& \leq \sqrt{pn} \sigma_{\max}(\bcirc(\tA_{i_k::}^{\dagger})) \|(\tA_{i_k::}\tX^{(k)}-\tB_{i_k::})\|_\infty.
\end{align*}
Given $i_k\notin U_\star$ and $|\tE_{i_kjh}|  < Q_q(\tE^{(k)})$ for all $j \in [p]$ and $h \in [n]$ and from Lemma~\ref{qbound}, we obtain
\begin{align*}
\|(\tA_{i_k::}\tX^{(k)}-\tB_{i_k::})\|_\infty & \leq  Q_q(\tE^{(k)}) \leq \frac{\sigma_{\max}(\bcirc(\tA))}{\sqrt{mpn(1-\beta-q)}} \|\tX^{(k)} - \tX_\star\|_F.\\
\end{align*}
Employing the definition $\eta = \max \limits _i \sigma_{\max}(\bcirc(\tA _{i::}^\dagger))$, we obtain
\begin{equation*}
\|\tX^{(k+1)} - \tX_\star\|_F \leq \|\tX^{(k)}-\tX_\star\|_F \left( 1 + \frac{\sigma_{\max}(\bcirc(\tA))}{\sqrt{m(1-\beta-q)}}\eta\right).
\end{equation*}
Then, taking the expected value conditioned on choosing $i_k$, where $i_k\notin U_\star$, yields the result.
\end{proof}

\begin{lemma}[Uncorrupted Row Selected,~\cite{ma2022randomized}]
\label{lem:qtrk_cautious}
Suppose in a given iteration $k$, QTRK samples a row $i_k \in U_\star$.  Then,
  $$
    \E\left(\|\tX^{(k+1)} - \tX_\star\|_F \;|\; i_k\in U_\star \right) \leq \left( 1 - \sigma_{\min}(\E_{U_{\star}}\left[\bcirc(\tens{P}_i)\right]\right))\|\tX^{(k)}-\tX_\star\|_F ,
    $$
    where $\E_{U_{\star}}$ denotes the expectation over the row selection in $U_\star$ and ${\tens P}_{i}$ the orthogonal projection operator defined in Equation \eqref{eq:projop}.
\end{lemma}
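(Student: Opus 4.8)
The plan is to run the classical randomized-Kaczmarz ``good step'' computation transported to the t-product setting; this lemma is, after all, a restatement of the TRK guarantee of~\cite{ma2022randomized}. Abbreviate $\tZ^{(k)} := \tX^{(k)} - \tX_\star$. The first step exploits the hypothesis $i_k \in U_\star$: by Definition~\ref{def:ustar} the entire row slice $(\tB_{\text{corr}})_{i_k::}$ vanishes, so $\tB_{i_k::} = (\tB_\star)_{i_k::} = \tA_{i_k::}\tX_\star$, and the residual appearing in the QTRK update of Algorithm~\ref{alg:QTRK} collapses to $\tA_{i_k::}\tX^{(k)} - \tB_{i_k::} = \tA_{i_k::}\tZ^{(k)}$. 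Hence
\[
\tZ^{(k+1)} = \tZ^{(k)} - \tA_{i_k::}^{*}\bigl(\tA_{i_k::}\tA_{i_k::}^{*}\bigr)^{-1}\tA_{i_k::}\,\tZ^{(k)} = \tZ^{(k)} - \tens{P}_{i_k}\!\bigl(\tZ^{(k)}\bigr),
\]
with $\tens{P}_{i_k}$ the orthogonal projection of~\eqref{eq:projop}; that is, on this event a QTRK step coincides with a TRK step.

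The second step moves to the block-circulant picture, where t-product orthogonality becomes ordinary Euclidean orthogonality. Using the identities $\unfold(\tA\tX) = \bcirc(\tA)\unfold(\tX)$, $\bcirc(\tA\tB) = \bcirc(\tA)\bcirc(\tB)$, $\bcirc(\tA^{*}) = \bcirc(\tA)^{*}$, and $\|\fold(\cdot)\|_F = \|\cdot\|_F$ from~\cite{kilmer2011factorization,kilmer2013third}, the matrix $\bcirc(\tens{P}_i)$ — meaning $\bcirc$ of the generating tensor $\tA_{i::}^{*}(\tA_{i::}\tA_{i::}^{*})^{-1}\tA_{i::}$ — is symmetric and idempotent, hence an orthogonal projection on $\R^{ln}$; this is the matricized form of the fact, proved in~\cite{kilmer2013third}, that $\tens{P}_i$ is an orthogonal projection. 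Then $\mat{I}_{ln} - \bcirc(\tens{P}_{i_k})$ is also an orthogonal projection, and writing $\vr^{(k)} := \unfold(\tZ^{(k)})$ the Pythagorean identity yields
\[
\|\tZ^{(k+1)}\|_F^2 = \bigl\|\bigl(\mat{I} - \bcirc(\tens{P}_{i_k})\bigr)\vr^{(k)}\bigr\|^2 = \|\vr^{(k)}\|^2 - \bigl\|\bcirc(\tens{P}_{i_k})\vr^{(k)}\bigr\|^2 = \|\tZ^{(k)}\|_F^2 - \bigl\|\tens{P}_{i_k}\!\bigl(\tZ^{(k)}\bigr)\bigr\|_F^2.
\]

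The third step takes the conditional expectation over $i_k \sim \text{Uniform}(U_\star)$. Symmetry and idempotency give $\bigl\|\bcirc(\tens{P}_{i_k})\vr^{(k)}\bigr\|^2 = \langle \vr^{(k)}, \bcirc(\tens{P}_{i_k})\vr^{(k)}\rangle$, so by linearity of $\bcirc$ and of the unfolding, $\E_{U_\star}\bigl[\|\tens{P}_{i_k}(\tZ^{(k)})\|_F^2\bigr] = \bigl\langle \vr^{(k)}, \E_{U_\star}[\bcirc(\tens{P}_i)]\,\vr^{(k)}\bigr\rangle$. Being an average of positive semidefinite matrices, $\E_{U_\star}[\bcirc(\tens{P}_i)]$ is positive semidefinite, so this quadratic form is at least $\sigma_{\min}\bigl(\E_{U_\star}[\bcirc(\tens{P}_i)]\bigr)\|\vr^{(k)}\|^2 = \sigma_{\min}\bigl(\E_{U_\star}[\bcirc(\tens{P}_i)]\bigr)\|\tZ^{(k)}\|_F^2$. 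Combining with the previous display,
\[
\E\bigl(\|\tZ^{(k+1)}\|_F^2 \;|\; i_k \in U_\star\bigr) \le \Bigl(1 - \sigma_{\min}\bigl(\E_{U_\star}[\bcirc(\tens{P}_i)]\bigr)\Bigr)\|\tZ^{(k)}\|_F^2,
\]
and the bound stated in the lemma follows by Jensen's inequality $\E\|\cdot\| \le (\E\|\cdot\|^2)^{1/2}$.

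There is no deep obstacle here — the content is essentially that of~\cite{ma2022randomized} — so the only points needing care are bookkeeping ones: (i) correctly transferring the t-product orthogonal-projection facts of~\cite{kilmer2013third} to the symmetry and idempotency of the ordinary matrix $\bcirc(\tens{P}_i)$, which is exactly what legitimizes the Pythagorean split and the vanishing of the cross term; and (ii) recognizing the conditional expectation of $\|\tens{P}_{i_k}(\tZ^{(k)})\|_F^2$ as precisely the quadratic form of $\vr^{(k)}$ against $\E_{U_\star}[\bcirc(\tens{P}_i)]$, so that positive semidefiniteness and $\sigma_{\min}$ deliver the contraction factor. It is also worth noting that the estimate is informative only when $\sigma_{\min}\bigl(\E_{U_\star}[\bcirc(\tens{P}_i)]\bigr) > 0$, i.e., when the uncorrupted row slices $\{\tA_{i::} : i \in U_\star\}$ jointly span, equivalently when the uncorrupted subsystem has full column rank.
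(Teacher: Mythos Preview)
Your derivation is exactly the TRK ``good step'' computation that the paper invokes by citation to~\cite{ma2022randomized}; the paper's own proof is a one-line appeal to that reference applied to the subsystem indexed by $U_\star$, so you have spelled out precisely what they cite. One small slip: your final Jensen step yields the factor $\sqrt{1-\sigma_{\min}(\E_{U_\star}[\bcirc(\tens{P}_i)])}$, not $1-\sigma_{\min}(\cdot)$, since $\sqrt{1-x}\ge 1-x$ on $[0,1]$; the squared-error inequality you derive is the natural (and correct) form of the cited TRK result, and the lemma as written is a slightly loose restatement of it, but your sentence ``the bound stated in the lemma follows by Jensen'' overstates what Jensen gives.
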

\begin{proof}
This follows immediately from the convergence guarantee of TRK~\cite[Theorem 6]{ma2022randomized} applied to the subsystem of row slices indexed by $U_{\star}$. 
Further, it is shown that $1 - \sigma_{\min}(\E_{U_{\star}}\left[\bcirc(\tens{P}_i)\right]) < 1$.
\end{proof}

Putting Lemmas~\ref{lem:qtrk_corr} and~\ref{lem:qtrk_cautious} together gives a convergence guarantee for QTRK.

\begin{theorem}[QTRK Convergence Guarantee]
\label{thm:qtrk}
Let the quantile level $q$ be $0<q \leq 1-\beta$.
The expected error at
the $k$th iteration of QTRK satisfies
$$
 \E\|\tX^{(k)} - \tX_\star\|_F \leq R^k\|\tX^{(0)} - \tX_\star\|_F,
$$
where 
$$
R = \left(1 - \frac{1-\brow}{q}\right)\left( 1 + \frac{\sigma_{\max}(\bcirc(\tA))}{\sqrt{m(1-\beta-q)}}\eta\right) +
\left(1 - \frac{\brow}{q}\right)\left( 1 - \sigma_{\min}(\E_{U_{\star}}\left[\bcirc(\tens{P}_i)\right]\right),
$$
where $\brow$, $\tens P_i$, and $U_{\star}$ is defined previously in Section~\ref{sec:QTRK} and $\eta = \max \limits _i \sigma_{\max}(\bcirc(\tA _{i::}^\dagger))$ as in Lemma~\ref{lem:qtrk_corr}.

\end{theorem}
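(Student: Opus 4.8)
The plan is to convert the two per-iteration estimates — Lemma~\ref{lem:qtrk_corr} (a corrupted but undetected row slice is sampled) and Lemma~\ref{lem:qtrk_cautious} (a genuinely uncorrupted row slice is sampled) — into a single contraction inequality by a law-of-total-expectation argument, and then to unroll that inequality over $k$. First I would condition on the current iterate $\tX^{(k)}$, which fixes the residual tensor $\tE^{(k)}$, the threshold $Q_q(\tE^{(k)})$, the flagged set $U^c$, and hence the admissible set $U=[m]\setminus U^c$ from which $i_k$ is drawn uniformly. The sample space of $i_k$ then splits into two cases. If $i_k\in U_\star$ we are exactly in the setting of Lemma~\ref{lem:qtrk_cautious} and obtain the contraction factor $g:=1-\sigma_{\min}(\E_{U_\star}[\bcirc(\tens{P}_i)])<1$. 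If $i_k\in U\setminus U_\star$ then $i_k$ is corrupted, but its membership in $U$ means it passed the quantile test, i.e.\ $|\tE^{(k)}_{i_kjh}|\le Q_q(\tE^{(k)})$ for all $j\in[p]$, $h\in[n]$, which is precisely the hypothesis of Lemma~\ref{lem:qtrk_corr} (the strict-versus-nonstrict inequality at the threshold is harmless and can be absorbed, noting for instance that Lemma~\ref{qbound} also holds with $\le$); this yields the expansion factor $b:=1+\tfrac{\sigma_{\max}(\bcirc(\tA))}{\sqrt{m(1-\beta-q)}}\eta\ge 1$. No other case arises, since a detected corrupted row slice is excluded from $U$ and is never sampled.

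Next I would estimate the probabilities of the two cases. Set $p_k:=\P(i_k\notin U_\star\mid\tX^{(k)})=|U\cap U_\star^c|/|U|$. The numerator is bounded by the total number of corrupted row slices, $|U\cap U_\star^c|\le|U_\star^c|=\brow\, m$ from \eqref{eq:betarow}. For the denominator, the definition of the $q$-quantile forces at most $mpn-\lfloor q\,mpn\rfloor$ residual entries to exceed $Q_q(\tE^{(k)})$, and since a row slice is flagged only if it contains such an entry, this bounds $|U^c|$ from above and hence $|U|$ from below in terms of $q$ (and $m,p,n$). Combining these gives an upper bound $\bar p$ on $p_k$, and a matching lower bound $1-\bar p$ on $\P(i_k\in U_\star\mid\tX^{(k)})$, expressed through $\brow$ and $q$. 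Plugging the two conditional bounds into
$$\E\big[\|\tX^{(k+1)}-\tX_\star\|_F\,\big|\,\tX^{(k)}\,\big]=(1-p_k)\,\E[\,\cdot\mid i_k\in U_\star\,]+p_k\,\E[\,\cdot\mid i_k\notin U_\star\,]\le\big((1-p_k)g+p_k b\big)\|\tX^{(k)}-\tX_\star\|_F,$$
and using $g<1\le b$ — so the right-hand side is nondecreasing in $p_k$ — to replace $p_k$ by the counting bound, produces the single-step inequality $\E[\|\tX^{(k+1)}-\tX_\star\|_F\mid\tX^{(k)}]\le R\,\|\tX^{(k)}-\tX_\star\|_F$ with $R$ the claimed combination of $b$, $g$, $\brow$, and $q$. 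Taking expectations and applying the tower property gives $\E\|\tX^{(k+1)}-\tX_\star\|_F\le R\,\E\|\tX^{(k)}-\tX_\star\|_F$, and iterating this from $k=0$ yields the theorem.

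The main obstacle is the counting step that controls $|U|$. In the matrix QRK analysis the quantile is taken over exactly $m$ residual values, one per equation, so the admissible set automatically has size at least $\lfloor qm\rfloor$; here the quantile ranges over all $mpn$ entries while sampling is over only the $m$ row slices, and a single above-threshold entry flags an entire row slice. Consequently, corruptions — or merely large residual entries — spread thinly across many row slices can shrink $|U|$ far more than in the matrix case, which is exactly why $R$ can fail to be $<1$ once $\brow$ is not small, matching the paper's remark that QTRK degrades when corruptions hit many horizontal slices. I would therefore be careful to pin down the regime of $q$ (and, implicitly, of the corruption pattern) in which the lower bound on $|U|$ is non-vacuous and $R<1$, so that the claimed ``at least linear'' convergence is genuinely delivered rather than only formally stated.
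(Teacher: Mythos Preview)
Your plan—condition on $\tX^{(k)}$, split by $i_k\in U_\star$ versus $i_k\notin U_\star$, invoke Lemmas~\ref{lem:qtrk_cautious} and~\ref{lem:qtrk_corr} on the two branches, bound the branching probability by a count on $|U|$, and iterate—is exactly the paper's route. The one substantive divergence is the counting step. You seek a \emph{lower} bound on $|U|$ so that $p_k=|U\cap U_\star^c|/|U|\le\brow\, m/|U|$ can be capped, and you correctly flag this as delicate because the quantile runs over $mpn$ entries while $U$ is indexed by only $m$ row slices. The paper argues in the opposite direction: the $(1-q)mpn$ above-threshold entries necessarily occupy at least $(1-q)m$ row slices, giving the \emph{upper} bound $|U|\le qm$; then the two conditional probabilities are bounded \emph{separately} as $\P(i_k\notin U_\star)\le 1-(1-\brow)/q$ and $\P(i_k\in U_\star)\le 1-\brow/q$ (note these need not sum to $1$), and both are inserted as multipliers into the total-expectation identity. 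That is precisely what produces the particular two-term $R$ in the statement. Your complementary-probability scheme $(1-p_k)g+p_k\,b\le(1-\bar p)g+\bar p\,b$, coupled with a lower bound on $|U|$, would yield a different constant and (as you anticipate) a more restrictive range of $q$. So the architecture matches, but to land on the paper's $R$ you should use the $|U|\le qm$ estimate together with the two independent probability caps rather than a single $\bar p$ and its complement.
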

\begin{proof}
We first note that, by definition of $\brow$, we have $|U_*^c| = \brow m$. Then, noting that the largest acceptable row set $U$ would be given when all $(1-q)mpn$ entries in $|\tE^{(k)}|$ that are at least $Q_q(\tE^{(k)})$ lie in $(1-q)m$ rows, we have $|U| \le m - (1-q)m = qm$. Thus, we have that $$\mathbb{P}(i_k \in U_*) = \frac{|U \cap U_*|}{|U|} \le 1 - \frac{|U_*^c|}{|U|} \le 1 - \frac{\brow}{q}.$$  A similar calculation yields $\mathbb{P}(i_k \not\in U_*) \le 1 - \frac{1-\brow}{q}.$

Then, using the law of total expectation, we have:
\begin{align*}
 \E\|\tX^{(k+1)} &- \tX_\star\|_F   = \underbrace{\E\left(\|\tX^{(k+1)} - \tX_\star\|_F | {i_k} \notin U_{\star}\right)}_{\text{Lemma~\ref{lem:qtrk_corr}}} \mathbb P({i_k} \notin U_{\star}) +
\underbrace{\E\left(\|\tX^{(k+1)} - \tX_\star\|_F | {i_k} \in U_{\star} \right)}_{\text{Lemma~\ref{lem:qtrk_cautious}}} \mathbb P({i_k} \in U_{\star})   \\
&\leq \left\{ \left(1 - \frac{1-\brow}{q}\right)\left( 1 + \frac{\sigma_{\max}(\bcirc(\tA))}{\sqrt{m(1-\beta-q)}}\eta\right) +
\left(1 - \frac{\brow}{q}\right)\left( 1 - \sigma_{\min}(\E_{U_{\star}}\left[\bcirc(\tens{P}_i)\right]\right)\right\}\|\tX^{(k)}-\tX_\star\|_F,
\end{align*}
which by rearranging and applying recursion completes the claim. 
\end{proof}

\begin{remark}
    We comment here that in the large-scale regime of high number of rows $m$, a small enough proportion of corruptions $\brow$ will guarantee that $R<1$ and thus that the algorithm converges to the solution. The quantity below which this parameter needs to be is, of course, dependent on the other parameters in the theorem and the conditioning of the system, but this is typically the regime where these methods are applied (large-scale data and bounded proportion of corruptions, with no need to bound the \textit{magnitude} of the corruptions).  
\end{remark}

We also immediately observe the following corollary from Theorem~\ref{thm:qtrk}.

\begin{corollary}[Case Without Corruptions]
\label{cor:qtrk_no_corr} 
In the case of no corruptions; that is $\brow=0$, $q= 0$, and $U_{\star} = [m]$, QTRK recovers the same convergence bound as TRK~\cite{ma2022randomized},
    $$
 \E\|\tX^{(k)} - \tX_\star\|_F \leq (1-\sigma_{\min}(\E\left[\bcirc(\tens{P}_i)\right]))^k\|\tX^{(0)} - \tX_\star\|_F.
$$
In~\cite[Theorem 11]{ma2022randomized} the authors show that this also provides a bound specified through the Fourier domain,
    $$
 \E\|\tX^{(k)} - \tX_\star\|_F \leq \left(1-\min_{h\in[n-1]}\frac{\sigma_{\min}(\tens{\widehat{A}}_{::h})}{\sqrt{m}\|\tens{\widehat{A}}_{::h}\|_{\infty,2}}\right)^k\|\tX^{(0)} - \tX_\star\|_F,
$$
where $\tens{\widehat{A}}$ denotes the tensor resulting from applying the discrete Fourier transform (DFT) matrix to each of the tube fibers of $\tA$, $\tens{\widehat{A}}_{::h}$ denotes the $h$th frontal slice of $\tens{\widehat{A}}$, and $\|\cdot\|_{\infty,2}$ denotes the operator norm from $L_{\infty}$ to $L_2$.
\end{corollary}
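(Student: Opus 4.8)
The plan is to derive the stated corollary as a direct specialization of Theorem~\ref{thm:qtrk}, so essentially no new machinery is needed; the work is in carefully substituting the ``no corruptions'' parameter values and confirming that every term behaves as claimed rather than producing an indeterminate expression. First I would substitute $\brow = 0$ into the rate $R$ from Theorem~\ref{thm:qtrk}. The first summand carries the factor $\left(1 - \frac{1-\brow}{q}\right) = \left(1 - \frac{1}{q}\right)$, which formally vanishes (or is nonpositive) only in a limiting sense as $q \to 0$; the cleanest route is to observe that with no corrupted rows, the event $i_k \notin U_\star$ simply cannot occur, so the probability $\mathbb{P}(i_k \notin U_\star) = 0$ and the entire Lemma~\ref{lem:qtrk_corr} branch of the law-of-total-expectation decomposition drops out. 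In other words, rather than manipulating the closed form of $R$ symbolically at $q=0$, I would go back one step in the proof of Theorem~\ref{thm:qtrk} to the total-expectation identity and note that when $U_\star = [m]$ the conditional expectation given $i_k \notin U_\star$ is weighted by zero.

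Second I would handle the surviving branch. With $U_\star = [m]$, the estimated acceptable row set is all of $[m]$, sampling is uniform over $[m]$, and Lemma~\ref{lem:qtrk_cautious} applies with $\E_{U_\star} = \E$ the expectation over the uniform choice of $i \in [m]$. The factor $\left(1 - \frac{\brow}{q}\right)$ becomes $1$ (again most safely read as $\mathbb{P}(i_k \in U_\star) = 1$), so the one-step bound reduces to
$$
\E\|\tX^{(k+1)} - \tX_\star\|_F \le \left(1 - \sigma_{\min}\!\left(\E\left[\bcirc(\tens{P}_i)\right]\right)\right)\|\tX^{(k)} - \tX_\star\|_F,
$$
and iterating over $k$ gives the first displayed inequality of the corollary. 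The strict inequality $1 - \sigma_{\min}(\E[\bcirc(\tens{P}_i)]) < 1$ needed for genuine convergence is already recorded in Lemma~\ref{lem:qtrk_cautious} (citing~\cite{ma2022randomized}), so I would simply invoke it.

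Third, for the Fourier-domain refinement I would not reprove anything: the quantity $\sigma_{\min}(\E[\bcirc(\tens{P}_i)])$ is exactly the contraction constant analyzed in~\cite[Theorem 11]{ma2022randomized}, where it is shown to be bounded below by $\min_{h\in[n-1]} \sigma_{\min}(\tens{\widehat{A}}_{::h}) / (\sqrt{m}\,\|\tens{\widehat{A}}_{::h}\|_{\infty,2})$ via the block-diagonalization of $\bcirc(\cdot)$ under the DFT. I would state that the second displayed bound follows by combining the first with this cited inequality. The only ``obstacle'' worth flagging is purely expository: making sure the reader sees that the apparently singular factors $1 - 1/q$ and $1 - \brow/q$ at $q = 0$ are artifacts of writing $R$ in a single closed form, and that the honest statement is the probabilistic one ($\mathbb{P}(i_k \notin U_\star) = 0$), which is why the corollary's hypotheses bundle $\brow = 0$, $q = 0$, and $U_\star = [m]$ together. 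No genuinely hard step arises; the proof is a half-page verification that the general bound degenerates correctly.
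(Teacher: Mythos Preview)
Your proposal is correct and matches the paper's approach: the paper states this corollary as an immediate consequence of Theorem~\ref{thm:qtrk} without giving any further proof. Your discussion is in fact more careful than the paper's, since you explicitly address the $0/0$ indeterminacy at $q=0$ by reverting to the probabilistic interpretation $\mathbb{P}(i_k \notin U_\star)=0$, whereas the paper simply asserts the specialization.
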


Note that, as expected from its formulation, QTRK penalizes an entire row slice if it corresponds to a single (entry) corruption in $\tB$. In that sense, the convergence guarantee for QTRK is ``density blind" to the corruptions -- for the right choice of quantile $q$, the bound is the same whether a row slice has a single corruption or is completely corrupted in every entry of the slice. In the former case, one may suppose that the row slice still contains enough information for QTRK to use in some way. Additionally, one might hope that utilizing such information could yield a convergence guarantee that depends only on $\beta$ and not $\brow$. We consider this query in the next section.

\section{A Conversation on Masking}\label{sec:masking}

In the matrix setting, we seek to solve a linear system $\mat A \vx = \vb$ with corruptions in the vector $\vb$.
In QRK~\cite{quantHNRS20}, corruptions are estimated using a quantile test that dictates whether or not to project onto the selected row space. 
QTRK, our proposed method in Section~\ref{sec:QTRK},  is defined similarly but on row slices of the tensor $\tA$. 
However, this severely limits the number of corruptions that can be tolerated.
If a single entry $\tB_{ijh}$ is estimated to be corrupted, QTRK disregards all information in the row slice $\tA_{i::}$.
In the tensor setting, we therefore have more flexibility when choosing what information in row slice $\tA_{i::}$ we wish to disregard and may instead do a partial projection.  

Suppose in some iteration $k$ we select a row slice that contains an entry $\tB_{ijh}$ that is labeled corrupted by QTRK.
The row slice $\tB_{i::}$ still contains useful information outside of the single corruption $\tB_{ijh}$. 
Remark~\ref{lem:corr_propag} notes that a perturbation in $\tB_{ijh}$ makes the consistent system 
$\tA \tX = \tB$ inconsistent only in the $j$th column slice: $\tA \tX_{:j:} \not= \tB_{:j:}$.
Then for the $k$th iteration, we may want to perform a projection where we disregard or \textit{mask} the column slices $\tX_{:j:}$ that are affected by the corruptions $\tB_{ijh}$.  This intuition is the crux of our proposed method mQTRK, presented in Algorithm~\ref{alg:mQTRK}.

The method uses the same quantile test as QTRK to estimate the locations of the corrupted entries in $\tB$. 
Then, for the selected row slice at iteration $k$, it only updates the column slices in $\tX$ that are not affected by the entries in $\tB$ that are estimated to be corrupted.
The set of these column indices is defined as in Definition \ref{def:Vi} and denoted by $\hat{V}_i$ in Algorithm~\ref{alg:mQTRK}.
We will observe that mQTRK works well in practice, but it suffers from unlikely but possible catastrophic situations that impede convergence.
We propose the masked Quantile Tensor Randomized Kaczmarz (mQTRK) method that employs masking in QTRK.
The pseudocode is given in Algorithm~\ref{alg:mQTRK}. 

\begin{algorithm}[h!]
\caption{Masked Quantile Tensor Randomized Kaczmarz (mQTRK) }\label{alg:mQTRK}
\begin{algorithmic}[1]
\Procedure{mQTRK}{$\tens{A},\tens{B}$, quantile level $q$, $\tens{X}^{(0)}$, $K$}
\For{$k = 0, 1, \ldots, K-1$}

\State{Compute residual tensor $\tE^{(k)}= \tA \tX^{(k)} -\tB$ \text{ and $q$-quantile } $Q_q(\tE^{(k)})$} \Comment{See Definition~\ref{def:q-quantile}}

\State{Sample row slice $i\sim\text{Uniform}(1, \ldots, m)$}
         
\State{Estimated corrupted column slices $\hat{V}_i^c = \left \{ j \in [p] :  |\tE^{(k)}_{ijh}| > Q_q(\tE^{(k)}) \text{ for some } h \in [n] \right \}$ }

\State{Update the approximation $\tens{X}^{(k+1)}_{:\hat{V}_i:} = \tens{X}^{(k)}_{:\hat{V}_i:} - \tA_{i::}^* \left (\tA_{i::}\tA_{i::}^* \right )^{-1} \left (\tA_{i::} \tX^{(k)}_{:\hat V_{i}:} - \tB_{i \hat V_{i} :} \right )$}
	
\EndFor{} \\
\Return{$\tX^{(K)}$}
	\EndProcedure
\end{algorithmic}
\end{algorithm}

We show that we cannot guarantee convergence of mQTRK, unless we guarantee our iterates remain in a ``nice" subspace of $\R^{l \times p \times n}$
that cannot be controlled either algorithmically or theoretically (see the discussion below).  
Experimentally we will see that mQTRK exhibits good convergence, but through some theoretical observations, we will explain that this cannot be guaranteed.

Fix an iteration $k$.
Suppose we are fortunate and select $i_k$ such that $\hat V_{i_k}\subseteq V_{i_k}$. 
Thus, mQTRK will not update any column slices in $\tX^{(k)}$ that are affected by corruptions.
However, it may be the case that too few column slices in $\tX^{(k)}$ are updated.
Note that since $\hat V_{i_k} \subseteq  V_{i_k}$ we have $\tA_{i_k::} \left (\tX_\star \right )_{:\hat V_{i_k}:} = \tB_{i_k,\hat V_{i_k},:}$. 
We obtain the following,

\begin{align*}
\left\|\tX^{(k+1)}-\tX_\star\right\|_F^2 & 
=\left\|\left(\tX^{(k+1)}-\tX_\star\right)_{:\hat V_{i_k}^c:} \right\|_F^2+\left\|\left(\tX^{(k+1)}-\tX_\star\right)_{:\hat V_{i_k}:}\right\|_F^2 \\  
& =\left\|\left(\tX^{(k)}-\tX_\star\right)_{:\hat V_{i_k}^c:}\right\|_F^2 + \left \|\tX^{(k)}_{:\hat V_{i_k}:} - \tA_{i_k::}^*\left (\tA_{i_k::}\tA_{i_k::}^* \right )^{-1} \left (\tA_{i_k::} \tX^{(k)}_{:\hat V_{i_k}:} - \tB_{i_k,\hat V_{i_k},:} \right ) - \left (\tX_\star \right )_{:\hat V_{i_k}:} \right \|_F^2 \\
& = \left\|\left(\tX^{(k)}-\tX_\star\right)_{:\hat V_{i_k}^c:}\right\|_F^2 
+ \left \|\left (\tX^{(k)} - \tX_\star \right )_{:\hat V_{i_k}:} - \tA_{i_k::}^*\left (\tA_{i_k::}\tA_{i_k::}^* \right )^{-1} \tA_{i_k::} \left (\tX^{(k)} - \tX_\star \right )_{:\hat V_{i_k}:}   \right \|_F^2 \\
& = \left\|\left(\tX^{(k)}-\tX_\star\right)_{:\hat V_{i_k}^c:}\right\|_F^2 
+ \left \| \left( I- \tA_{i_k::}^*\left (\tA_{i_k::}\tA_{i_k::}^* \right )^{-1} \tA_{i_k::} \right)\left (\tX^{(k)} - \tX_\star \right )_{:\hat V_{i_k}:} \right\|_F^2  \\
& = \left\|\tX^{(k)}-\tX_\star\right\|_F^2 - \left\| {\tens P}_{i_k}\left ( \left(\tX^{(k)} - \tX_\star\right)_{:\hat V_{i_k}:} \right )\right\|_F^2,
\end{align*}
by definition \eqref{eq:projop} of the projection.
The last equality holds from the properties of orthogonal projection operators and 2-norm inner products. 
Then, 
we have

\begin{align*}
    \mathbb{E}_{i_k} \left [ \left\|\tX^{(k+1)}-\tX_\star\right\|_F^2 \right ] & = \left\|\tX^{(k)}-\tX_\star\right\|_F^2 - \mathbb{E}_{i_k} \left\| {\tens P}_{i_k}\left ( \left(\tX^{(k)} - \tX_\star\right)_{:\hat V_{i_k}:} \right )\right\|_F^2.
\end{align*}
Now, at this point, following similar analysis strategies in e.g.,~\cite{quantHNRS20}, we might hope that
$$\mathbb{E}_{i_k} \left\| {\tens P}_{i_k}\left ( \left(\tX^{(k)} - \tX_\star\right)_{:\hat V_{i_k}:} \right )\right\|_F^2 \ge \psi \left\|\tX^{(k)}-\tX_\star\right\|_F^2$$ for some $\psi > 0$.   However, this is not likely to be the case in general.  
Additionally, this is hard to guarantee in practice, but shows that convergence is likely possible unless your iterates move into a ``bad" space.
This will also highlight an important fact, that for a fixed $\tX^{(k)}-\tX_\star$ and any row slice choice $i_k$, there is always some possibility that the column set $\hat{V}_{i_k}$ is such that mQTRK makes no progress this iteration. 
This happens because it is always possible that, even when all of our selected indices in $\hat{V}_{i_k}$ are uncorrupted, we mask out column indices in $\hat{V}_{i_k}^c$ that were vital for making progress. In other words, ${\tens P}_{i_k}\left ( \left(\tX^{(k)} - \tX_\star \right)_{:\hat V_{i_k}:} \right )$ may be zero in this unlucky case. In fact, mQTRK is \textit{by design} going to choose column indices $\hat V_{i_k}$ that make the norms of these projections small (because large ones appear like corruptions).
In fact, one can see that, given knowledge about the system $\tA$ and $\tB$, one can select an initialization such that mQTRK never makes progress at all, see e.g., Remark~\ref{rem:mQTRK-bad-behavior} below.  Therefore, in an adversarial setting, mQTRK may catastrophically fail to converge. With random initialization, of course, it is unlikely this will happen, which is why we see good empirical performance, shown in the next section.

\begin{remark}
\label{rem:mQTRK-bad-behavior}
    Consider the following example in which mQTRK will fail catastrophically. 
    Let $\tA$ and $\tB$ be given, with unique solution $\tX_\star$, so that $\tA\tX_\star = \tB.$  
    Corrupt $\tB$ such that every row slice $i$ contains at least one corruption in the tube fiber $\tB_{i1:} \in \R^n$ obtained by fixing row index $i$ and column index 1.
    Note this can be done with $m$ corruptions, meaning $\beta=1/(pn)$. Create $\tX^{(0)}$ to be equal to $\tX_\star$ everywhere except a single entry, say $\tX^{(0)}_{111} \ne (\tX_\star)_{111}$.  Assume we use a quantile $q$ arbitrarily close to $1-\beta$.  It is clear we need to update the $\tX^{(0)}_{111}$ entry or we will not make progress toward the solution.  However, if the corruptions are each large enough, each $\hat{V}_i^c$ will be sure to contain the first column, and thus mQTRK will not make any progress.    
\end{remark}

\begin{remark}
\label{rem:mQTRK-mod-behavior}
Let us remark on an opposing situation. 
    Suppose $\brow$ as defined in \eqref{eq:betarow} is small. Suppose $q$ is chosen small enough that we do not admit too many corruptions, and large enough that we do not mask too many columns, then mQTRK should do no worse than QTRK since at worst there is still a selection of rows for which even mQTRK will perform a full projection. This is not guaranteed to happen deterministically though -- it could be that each algorithm identifies the same locations for estimated corruptions, but QTRK ignoring an entire row is actually better than mQTRK using that row and projecting using missed corruptions.  However, more often than not, we expect mQTRK to utilize rows and make progress when QTRK would ignore those rows entirely. We explore this comparison empirically more in the next section and use it as motivation for the following theoretical guarantee.
\end{remark}

\begin{theorem}\label{thm:mqtrk}
Let $1-1/(pn) < q < 1 - \beta$.  The expected error at the $k$th iteration of mQTRK exhibits the following convergence guarantee:
$$
\E\|\tX^{k} - \tX\|_F \leq \left\{q(1-R)+ \left(\min \{1-\beta, (1-q)pn\} + \beta pn\right)\left( 1 + \frac{\sigma_{\max}(\bcirc(\tA))}{\sqrt{m(1-\beta-q)}}\eta\right)\right\}^k\|\tX^{(0)} - \tX\|_F.
$$
\end{theorem}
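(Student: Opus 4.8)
The plan is to condition on the uniformly sampled row slice $i_k$ at iteration $k$, apply the law of total expectation, and split $[m]$ into two groups: the \emph{favorable} rows, meaning $i\in U_\star$ together with the quantile test flagging no entry of row $i$ (so $\hat V_i=[p]$), and \emph{all other} rows. On a favorable row $\tB_{i::}=\tA_{i::}\tX_\star$ and no columns are masked, so mQTRK performs exactly the unmasked update $\tX^{(k+1)}=\tX^{(k)}-\tP_i(\tX^{(k)}-\tX_\star)$, i.e.\ a full TRK step; hence Lemma~\ref{lem:qtrk_cautious} gives the contraction by the factor $1-R$ appearing in the statement, where $R=\sigma_{\min}(\E_{U_\star}[\bcirc(\tens P_i)])$.

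For a row that is not favorable, I claim the error grows by at most $b:=1+\frac{\sigma_{\max}(\bcirc(\tA))}{\sqrt{m(1-\beta-q)}}\eta$. There are two sub-cases. If $\hat V_{i_k}\subseteq V_{i_k}$ (the mask covers all truly corrupted columns of that row, which in particular includes every uncorrupted row with a false-positive flag), then $\tB_{i_k\hat V_{i_k}:}=(\tA\tX_\star)_{i_k\hat V_{i_k}:}$, so the identity already derived in Section~\ref{sec:masking} applies verbatim and yields $\|\tX^{(k+1)}-\tX_\star\|_F^2=\|\tX^{(k)}-\tX_\star\|_F^2-\|\tP_{i_k}((\tX^{(k)}-\tX_\star)_{:\hat V_{i_k}:})\|_F^2\le\|\tX^{(k)}-\tX_\star\|_F^2$, a non-expansive step, so $1\le b$ suffices. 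If instead some genuinely corrupted column survives the mask, the update does use corrupted data, but every column of $\hat V_{i_k}$ is unflagged, so the sub-block residual $\tA_{i_k::}\tX^{(k)}_{:\hat V_{i_k}:}-\tB_{i_k\hat V_{i_k}:}$ has all entries bounded by $Q_q(\tE^{(k)})$ in modulus; copying the proof of Lemma~\ref{lem:qtrk_corr} (triangle inequality, Lemma~\ref{lem:boundsforFrobeniusnorm}, passing to the $\infty$-norm over at most $pn$ entries, the bound of Lemma~\ref{qbound}, and the definition of $\eta$) gives exactly $\|\tX^{(k+1)}-\tX_\star\|_F\le b\,\|\tX^{(k)}-\tX_\star\|_F$.

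It remains to count. The number of rows outside $U_\star$ is $\brow m$, and since each corrupted row holds at least one corrupted entry while $\beta mpn$ entries are corrupted, $\beta\le\brow\le\beta pn$; the hypothesis $q>1-1/(pn)$ forces $\beta<1/(pn)$, so $\brow m\le(\beta pn)m$. The number of uncorrupted rows carrying a flagged entry is at most $\min\{|U_\star|,\#\{\text{flagged entries}\}\}\le\min\{(1-\brow)m,(1-q)mpn\}\le\min\{1-\beta,(1-q)pn\}\,m$ (using $\brow\ge\beta$). Hence the favorable rows number at least $fm$ with $f:=1-\min\{1-\beta,(1-q)pn\}-\beta pn$, and a short check using $pn\ge1$ gives $f\le q$. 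Combining through the law of total expectation,
\begin{align*}
\E\|\tX^{(k+1)}-\tX_\star\|_F
&\le \P(\text{fav})(1-R)\|\tX^{(k)}-\tX_\star\|_F+\P(\text{not fav})\,b\,\|\tX^{(k)}-\tX_\star\|_F\\
&\le \bigl(f(1-R)+(1-f)b\bigr)\|\tX^{(k)}-\tX_\star\|_F
\le \Bigl(q(1-R)+\bigl(\min\{1-\beta,(1-q)pn\}+\beta pn\bigr)b\Bigr)\|\tX^{(k)}-\tX_\star\|_F,
\end{align*}
where the middle step uses that $x\mapsto x(1-R)+(1-x)b$ is decreasing (since $1-R\le1\le b$) together with $\P(\text{fav})\ge f$, and the last step uses $f\le q$, $1-R\ge0$, and $1-f=\min\{1-\beta,(1-q)pn\}+\beta pn$. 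Taking expectations again (tower property) and recursing over $k$ yields the claim.

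The main obstacle is the favorable-row step: Lemma~\ref{lem:qtrk_cautious} furnishes the TRK contraction for a uniform draw from all of $U_\star$, whereas here the favorable set is a strictly smaller, $\tX^{(k)}$-dependent sub-collection, so one must argue the $\sigma_{\min}(\E_{U_\star}[\bcirc(\tens P_i)])$ bound transfers to it (or else settle for a weaker favorable rate). The remaining work is purely the bookkeeping that turns the raw counts into the constants $\min\{1-\beta,(1-q)pn\}+\beta pn$ and $q$ — in particular the chain $\beta\le\brow\le\beta pn$ and the inequality $f\le q$, both of which hinge on $pn\ge1$ and on $q$ lying strictly between $1-1/(pn)$ and $1-\beta$.
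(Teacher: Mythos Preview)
Your proposal follows essentially the same route as the paper: condition on whether the sampled row is simultaneously uncorrupted and unflagged, bound that favorable case by the TRK contraction and the complement by the factor $b$ from Lemma~\ref{lem:qtrk_corr}, and count rows to obtain the constants $q$, $\min\{1-\beta,(1-q)pn\}$, and $\beta pn$ (the paper splits the unfavorable case into two sub-cases rather than your two sub-sub-cases, but the bounds coincide). The obstacle you flag at the end --- that the TRK contraction is stated for a uniform draw over $U_\star$, not over the iterate-dependent favorable subset --- is equally present in the paper's argument, which simply invokes Theorem~\ref{thm:qtrk} for the favorable term without further justification.
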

\begin{proof}
    We use the law of total expectation and condition on whether $V_{i_k}^c$ is empty or not, and in the first case whether $\hat{V}_{i_k}^c$ is empty or not, that is, whether or not mQTRK indicates there may be a corruption.  We have
  \begin{align*}
 \E\|\tX^{(k+1)} &- \tX_\star\|_F   = \underbrace{\E\left(\|\tX^{(k+1)} - \tX_\star\|_F | V_{i_k}^c =\emptyset \text{ and } \hat{V}_{i_k}^c =\emptyset\right)}_{\text{(i) Setting of TRK}} \mathbb P(V_{i_k}^c =\emptyset \text{ and } \hat{V}_{i_k}^c =\emptyset) \\
 &+\underbrace{\E\left(\|\tX^{(k+1)} - \tX_\star\|_F |V_{i_k}^c =\emptyset \text{ and } \hat{V}_{i_k}^c \not=\emptyset\right)}_{\text{(ii) uncorrupted projection (overly cautious)}} \mathbb P(V_{i_k}^c =\emptyset \text{ and } \hat{V}_{i_k}^c \not=\emptyset)  \\
 &+ \underbrace{\E\left(\|\tX^{(k+1)} - \tX_\star\|_F |V_{i_k}^c \not=\emptyset\right)}_{\text{(iii) Lemma~\ref{lem:qtrk_corr}}} \mathbb P(V_{i_k}^c \not=\emptyset).  
\end{align*}
First, we begin by bounding the probabilities $\mathbb P(\hat{V}_{i_k}^c \not=\emptyset)$ and $\mathbb P(V_{i_k}^c \not=\emptyset)$. We begin by counting the number of rows with $\hat{V}_i^c \not= \emptyset$, that is with entries flagged for corruption.  This number is at least $(1-q)m$ and no more than $(1-q)mpn$.  Thus, the probability of randomly sampling such a row from the $m$ rows satisfies $$1-q \le \mathbb P(\hat{V}_{i_k}^c \not=\emptyset) \le (1-q)pn.$$  Similarly, the number of rows with $V_i^c \not= \emptyset$, that is affected by corruption, is at least $\beta m$ and no more than $\beta mpn$.  Thus, the probability of sampling such a row from the $m$ total rows satisfies $$\beta \le \mathbb P(V_{i_k}^c \not=\emptyset) \le \beta pn.$$
Now, the probability in case (i), $\mathbb P(V_{i_k}^c =\emptyset \text{ and } \hat{V}_{i_k}^c =\emptyset)$, satisfies $$\mathbb P(V_{i_k}^c =\emptyset \text{ and } \hat{V}_{i_k}^c =\emptyset) \le P(\hat{V}_{i_k}^c =\emptyset) \le q.$$ Next, the probability in case (ii), $\mathbb P(V_{i_k}^c =\emptyset \text{ and } \hat{V}_{i_k}^c \not=\emptyset)$ satisfies $$\mathbb P(V_{i_k}^c =\emptyset \text{ and } \hat{V}_{i_k}^c \not=\emptyset) \le \min \{P(V_{i_k}^c =\emptyset), P(\hat{V}_{i_k}^c \not=\emptyset)\} = \min \{1-\beta, (1-q)pn\}.$$  Finally, the probability in case (iii), $\mathbb P(V_{i_k}^c \not=\emptyset)$ satisfies $$\mathbb P(V_{i_k}^c \not=\emptyset) \le \beta pn.$$

Now we apply bounds that are most likely loose.  We apply Lemma~\ref{lem:qtrk_corr} to the term marked (iii) by noting that $V_{i_k}^c \not= \emptyset$ if and only if $i_k \not\in U_*$.  To bound the term marked (ii) we observe it is at most $\|\tX^{(k)}-\tX_\star\|_F$ (since performing an uncorrupted partial projection can do no harm) and thus can also be loosely upper bounded again by the upper bound of Lemma~\ref{lem:qtrk_corr}.  For the term marked (i), we observe that, conditioned on selecting a row for which no corruption has been flagged and no corruption is present, we perform precisely a standard TRK update, and may apply our main theorem, Theorem~\ref{thm:qtrk} there.  Putting these together and using the notation as in Theorem~\ref{thm:qtrk} yields 

  \begin{align*}
 \E\|\tX^{(k+1)} &- \tX_\star\|_F   \leq q(1-R)\|\tX^{(k)} - \tX_\star\|_F + \mathbb P(V_{i_k}^c =\emptyset \text{ and } \hat{V}_{i_k}^c \not=\emptyset)\|\tX^{(k)} - \tX_\star\|_F \\
 &\quad+ \mathbb P(V_{i_k}^c \not=\emptyset)\left( 1 + \frac{\sigma_{\max}(\bcirc(\tA))}{\sqrt{m(1-\beta-q)}}\eta\right)\|\tX^{(k)} - \tX_\star\|_F\\
 &\leq q(1-R)\|\tX^{(k)} - \tX_\star\|_F \\&\quad+ \left(\mathbb P(V_{i_k}^c =\emptyset \text{ and } \hat{V}_{i_k}^c \not=\emptyset)
 + \mathbb P(V_{i_k}^c \ne\emptyset)\right)\left( 1 + \frac{\sigma_{\max}(\bcirc(\tA))}{\sqrt{m(1-\beta-q)}}\eta\right)\|\tX^{(k)} - \tX_\star\|_F\\
&\leq q(1-R)\|\tX^{(k)} - \tX_\star\|_F + \left(\min \{1-\beta, (1-q)pn\} + \beta pn\right) \left( 1 + \frac{\sigma_{\max}(\bcirc(\tA))}{\sqrt{m(1-\beta-q)}}\eta\right)\|\tX^{(k)} - \tX_\star\|_F\\
 &= \left\{q(1-R)+ \left(\min \{1-\beta, (1-q)pn\} + \beta pn\right)\left( 1 + \frac{\sigma_{\max}(\bcirc(\tA))}{\sqrt{m(1-\beta-q)}}\eta\right)\right\}\|\tX^{(k)} - \tX_\star\|_F.
\end{align*}

Iterating recursively completes the claim.
\end{proof}

Note that this result recovers the same result as Theorem~\ref{thm:qtrk} when $\beta\rightarrow 0$ and thus $q\rightarrow 1$. Note also that the bound in Theorem~\ref{thm:mqtrk} is very pessimistic. Indeed, as seen by the proof, there are two sources of pessimism. First, the bound implicitly assumes the worst case alignment of corruptions, i.e., a pattern that affects the largest possible number of rows and columns. Second, the term (ii) corresponding to the masked but uncorrupted projection is trivially bounded as if that projection made no progress at all. Because of the masking, the concern is that in the uncorrupted case, the masking ignores columns for precisely the columns that require an update to make progress. This is not likely to happen in all iterations, and a more detailed analysis, perhaps with stochastic assumptions on the corruption locations, could improve this guarantee. 

\section{Numerical Experiments}\label{sec:experiments}

In this section, we illustrate the empirical performance of QTRK (Algorithm~\ref{alg:QTRK}) and mQTRK (Algorithm~\ref{alg:mQTRK}) for solving tensor linear systems $\tA \tX = \tB$ under various corruption settings.
Our code is publicly available\footnote{\url{https://github.com/lara-kassab/qtrk-code/}}. In our implementations of QTRK and mQTRK, we use the publicly available Tensor-Tensor Product Toolbox \cite{lu2018tproduct}.

\subsection{Experimental Design}
\label{sec:exp design}
We use the same definitions and notations as presented in Section \ref{sec:QTRK}.
In the construction of synthetic data, for all of the experiments, the entries of $\tA$ and $\tX_\star$ are sampled i.i.d.\ from a standard Gaussian distribution.
The uncorrupted observed measurements $\tB_\star$ are computed as $\tA\tX_\star$ and the observed right-hand side of the corrupted system is defined as $\tB = \tB_\star + \tB_{\text{corr}}$. 
In the different experiments, the tensor of corruptions $\tB_{\text{corr}}$ is generated differently.
In particular, we vary the number of corruptions, their magnitudes, and their distribution across the row slices of $\tB_{\text{corr}}$.

As in Definition~\ref{def:beta}, $0 \leq \beta \leq 1$ is the proportion of corrupted entries in $\tB$ and $0 \leq \brow \leq 1$ is the proportion of row slices in $\tB$ that contain corruptions.
We define $0 \leq \tbeta , \tbrow \leq 1$. 
In construction of the $\tB_{\text{corr}}$, we sample uniformly at random without replacement~$(m \tbrow) \in \mathbb N$ row slices to which the corrupted entries are restricted to. 
From these row slices, we then sample uniformly at random with replacement a total of~$(\tbeta mpn) \in \mathbb N$ entries to be corrupted.  
With this sampling scheme, not every of the $(m \tbrow) \in \mathbb N$ rows is guaranteed to contain a corruption; thus,~$\tbrow \geq \brow$.
It is also not guaranteed that there are~$(\beta mpn) \in \mathbb N$ unique entries that are corrupted; thus,~$\tilde \beta \geq \beta$.  

In all sets of experiments, we apply QTRK and/or mQTRK to solve a corrupted tensor linear system defined by $\tA \in \R^{25 \times 5 \times 10}$, $\tB \in \R^{25 \times 4 \times 10}$, and $\tX \in \R^{5 \times 4 \times 10}$. 
We denote the dimensions by $m = 25$, $p = 4$, $n = 10$, and $l = 5$. 
We initialize QTRK and mQTRK with $\tX^{(0)}$ where the entries are sampled i.i.d.\ from a standard Gaussian distribution and we run each algorithm for 2000 iterations.
We define the relative error as $\| \tX_\star - \tX^{(k)}\|_F / \| \tX_\star\|_F$ where $\tX^{(k)}$ is the output at the $k$-th iteration of either algorithm.

In Figures~\ref{fig:qtrk-large-corr} and~\ref{fig:mqtrk-large-corr}, the magnitudes of the corruptions are sampled from $\mathcal{N}(100,20)$ and in Figures~\ref{fig:qtrk-small-corr} and~\ref{fig:mqtrk-small-corr} the magnitudes are sampled from $\mathcal{N}(10,5)$.
In all figures, we report the median relative residual error in each iteration over 150 trials. 
In each trial, we define a new system $\tA \tX = \tB$ with a new initialization $\tX^{(0)}$. 
In Figures~\ref{fig:qtrk-large-corr} through \ref{fig:comp_plots}, the y-axes of all plots are set to $\log$ scale and in each figure all subplots have the same legend.
Further, for quantile values $q = 1 - \tbeta$ the corresponding graphs are thicker for convenience.

\begin{figure}[h!]
    \centering  
    \includegraphics[width=0.3\textwidth]{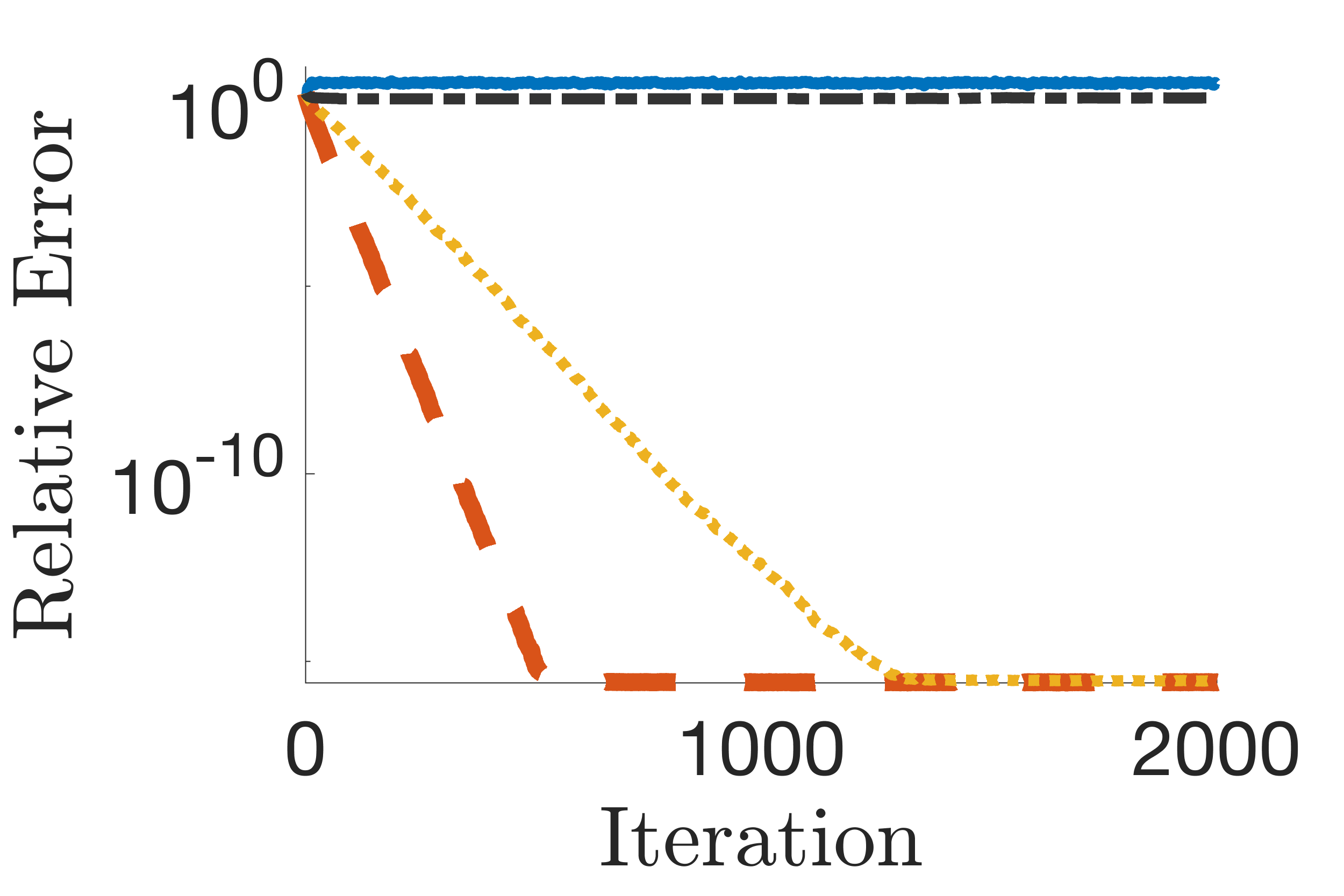}%
    \includegraphics[width=0.3\textwidth]{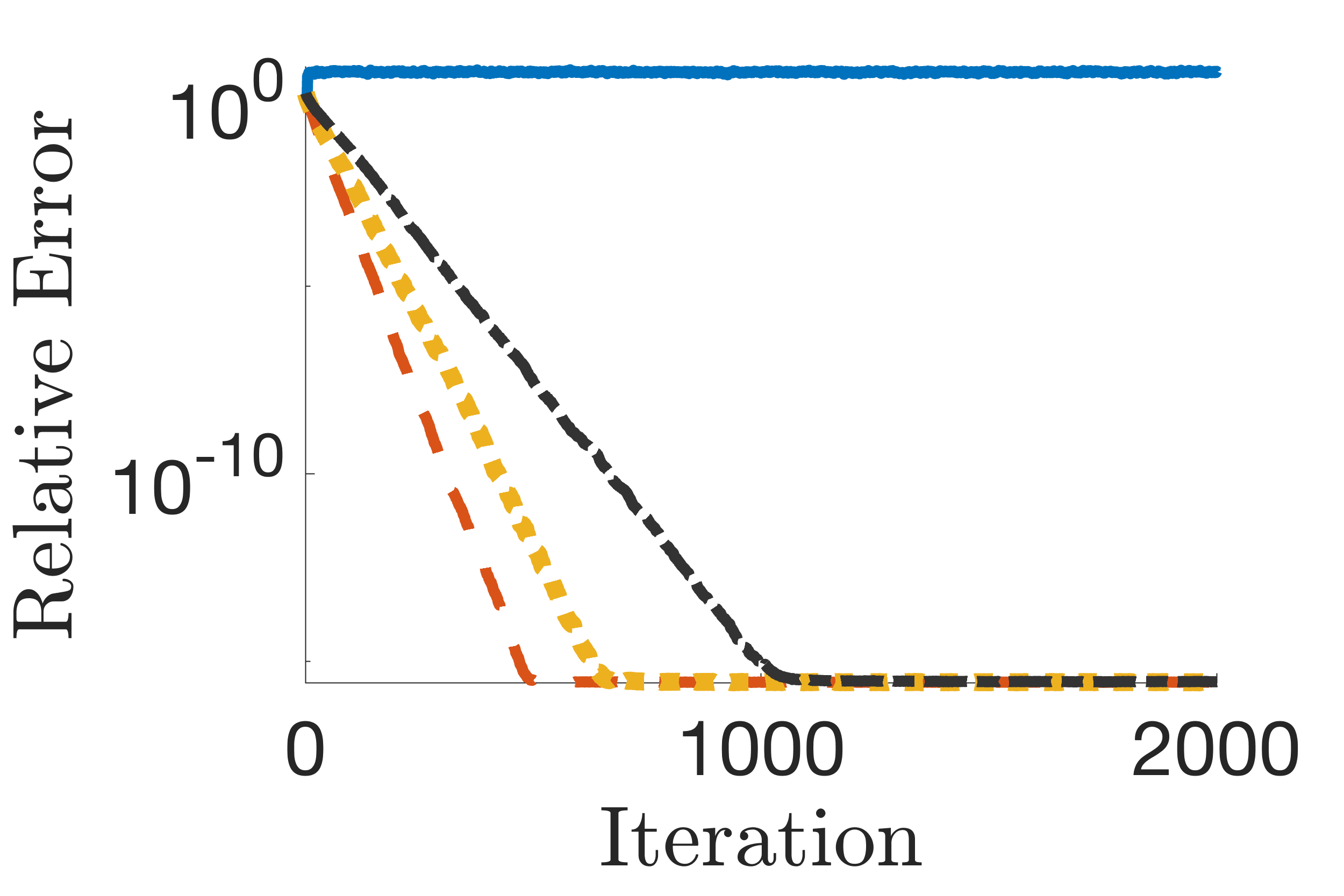}%
    \includegraphics[width=0.3\textwidth]{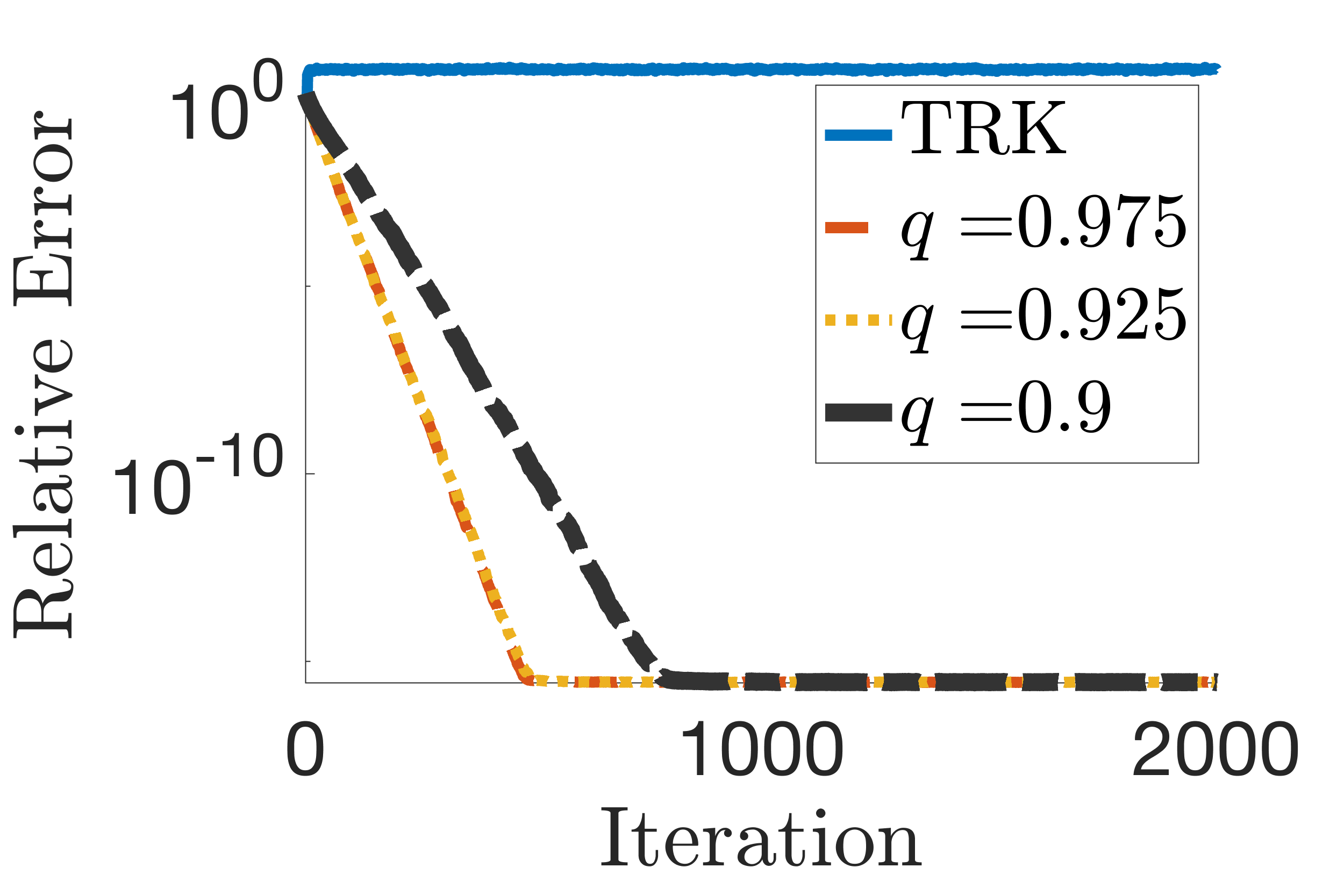}
    \\
    \includegraphics[width=0.3\textwidth]{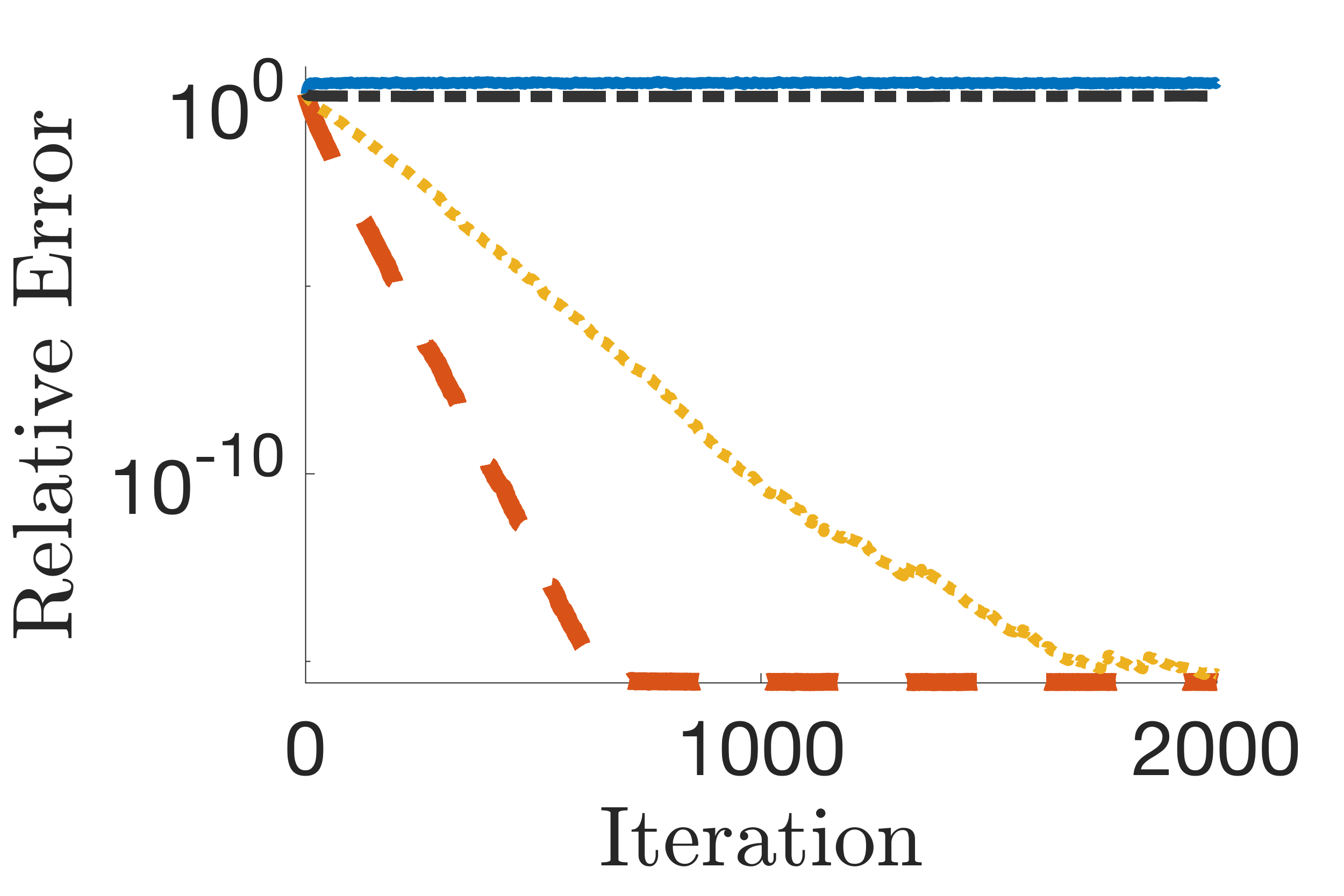}%
    \includegraphics[width=0.3\textwidth]{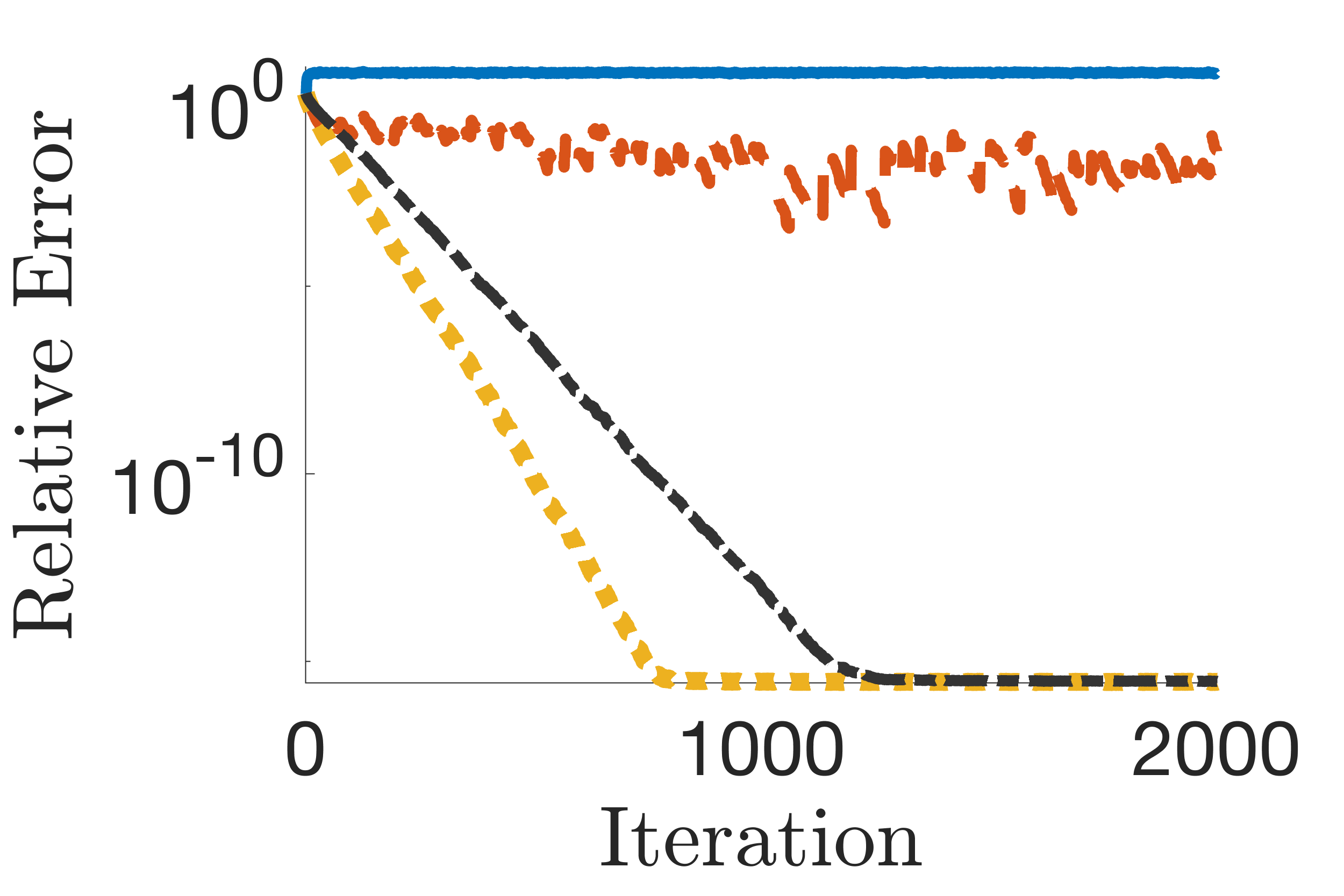}%
    \includegraphics[width=0.3\textwidth]{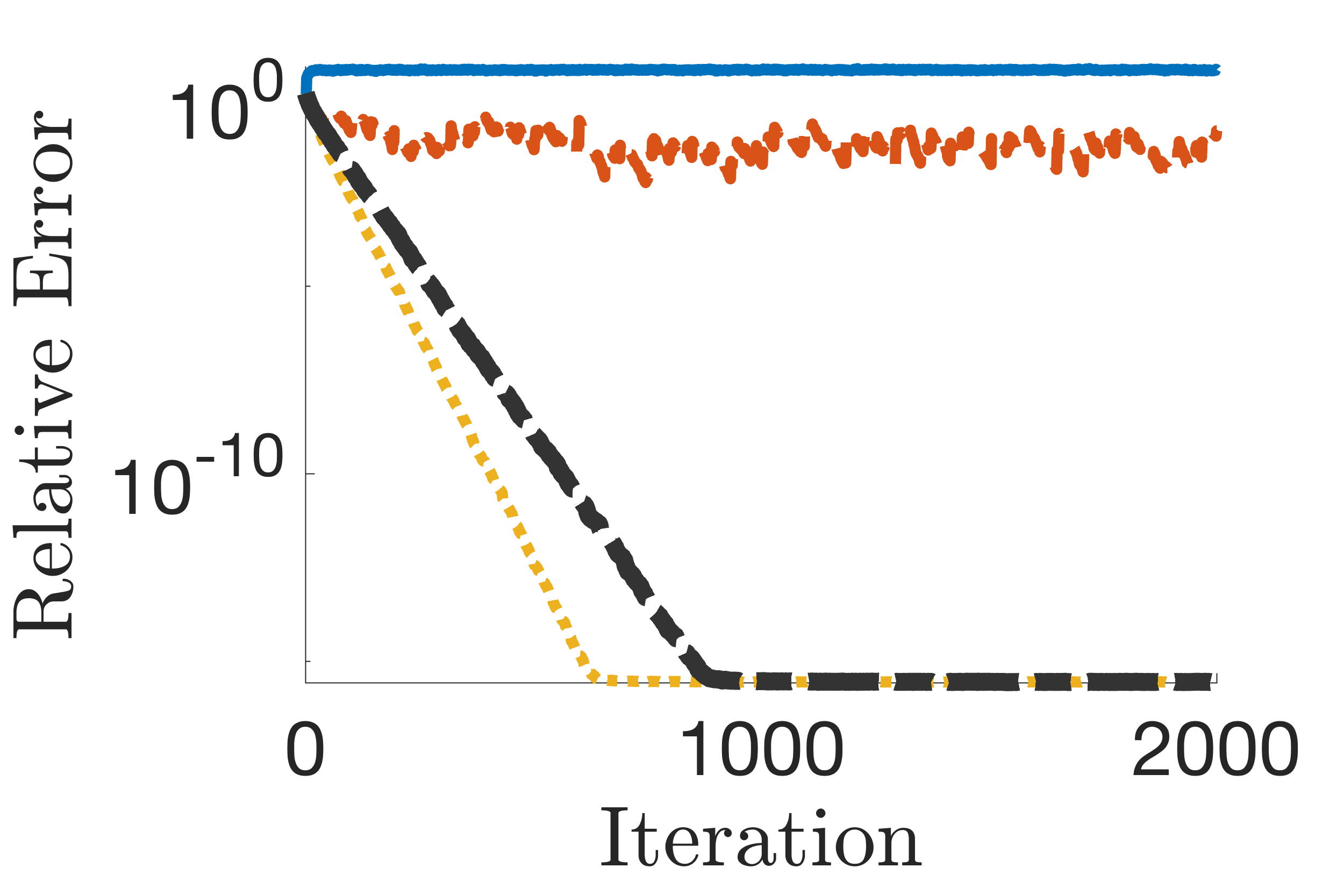}
    \\
    \includegraphics[width=0.3\textwidth]{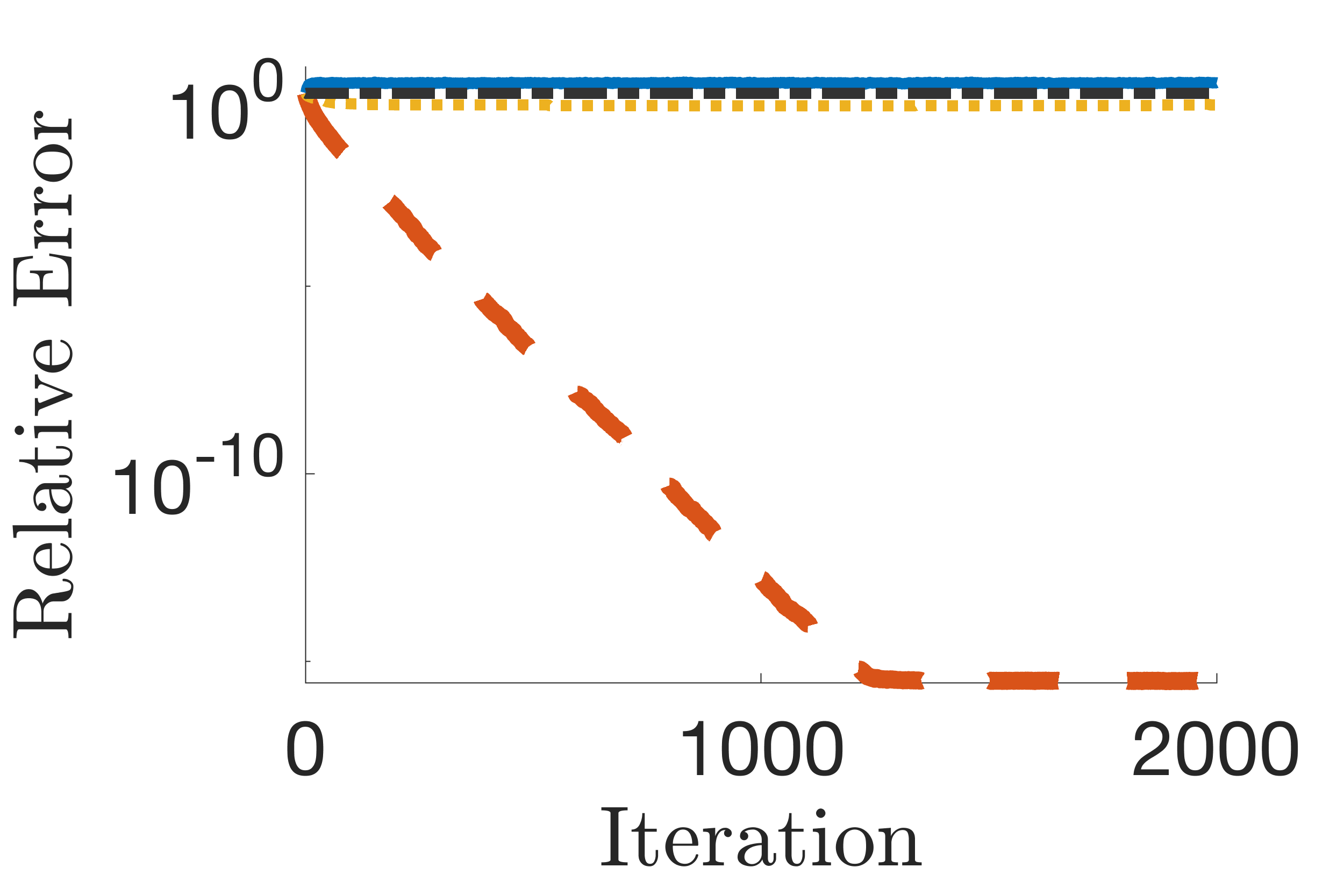}%
    \includegraphics[width=0.3\textwidth]{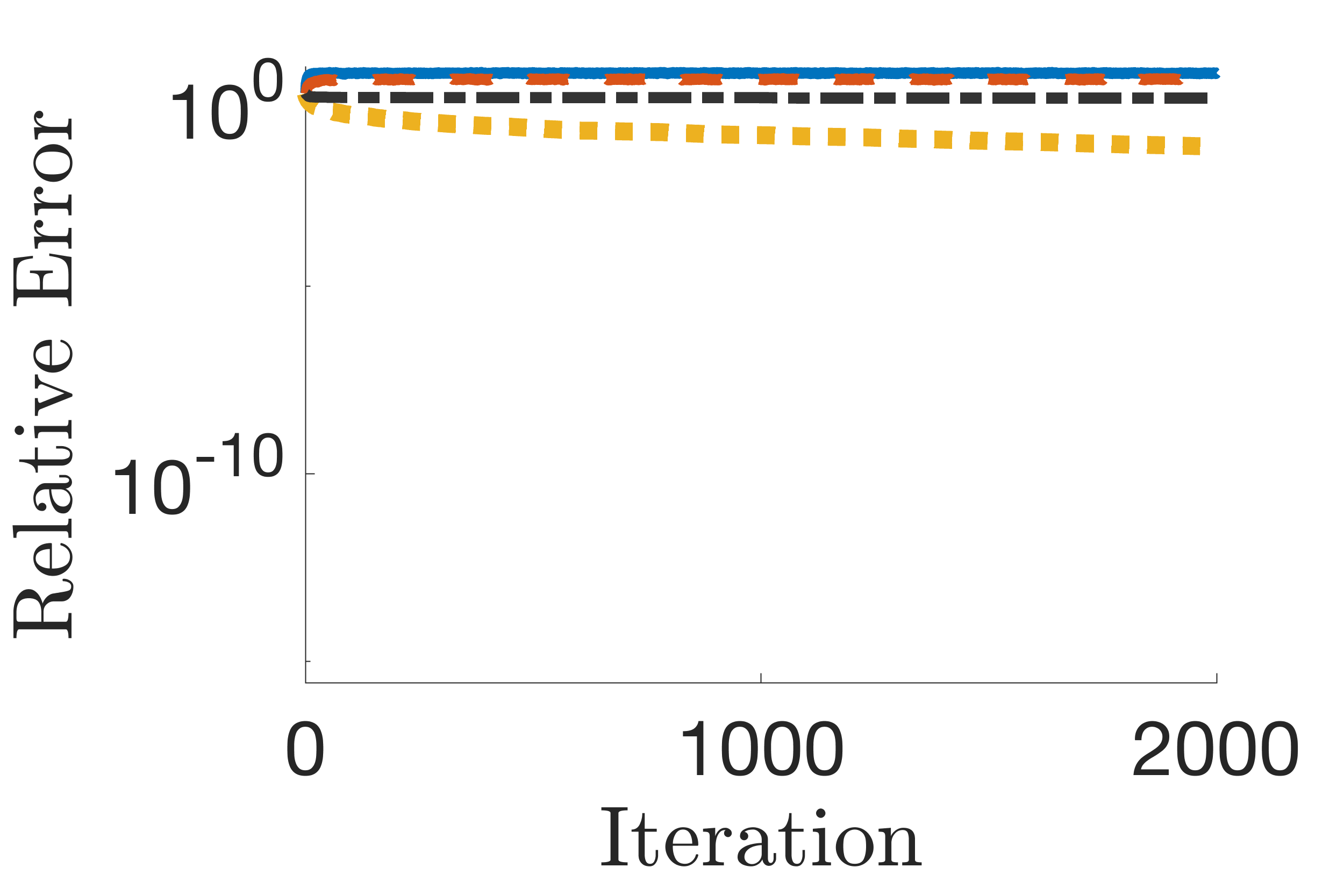}%
    \includegraphics[width=0.3\textwidth]{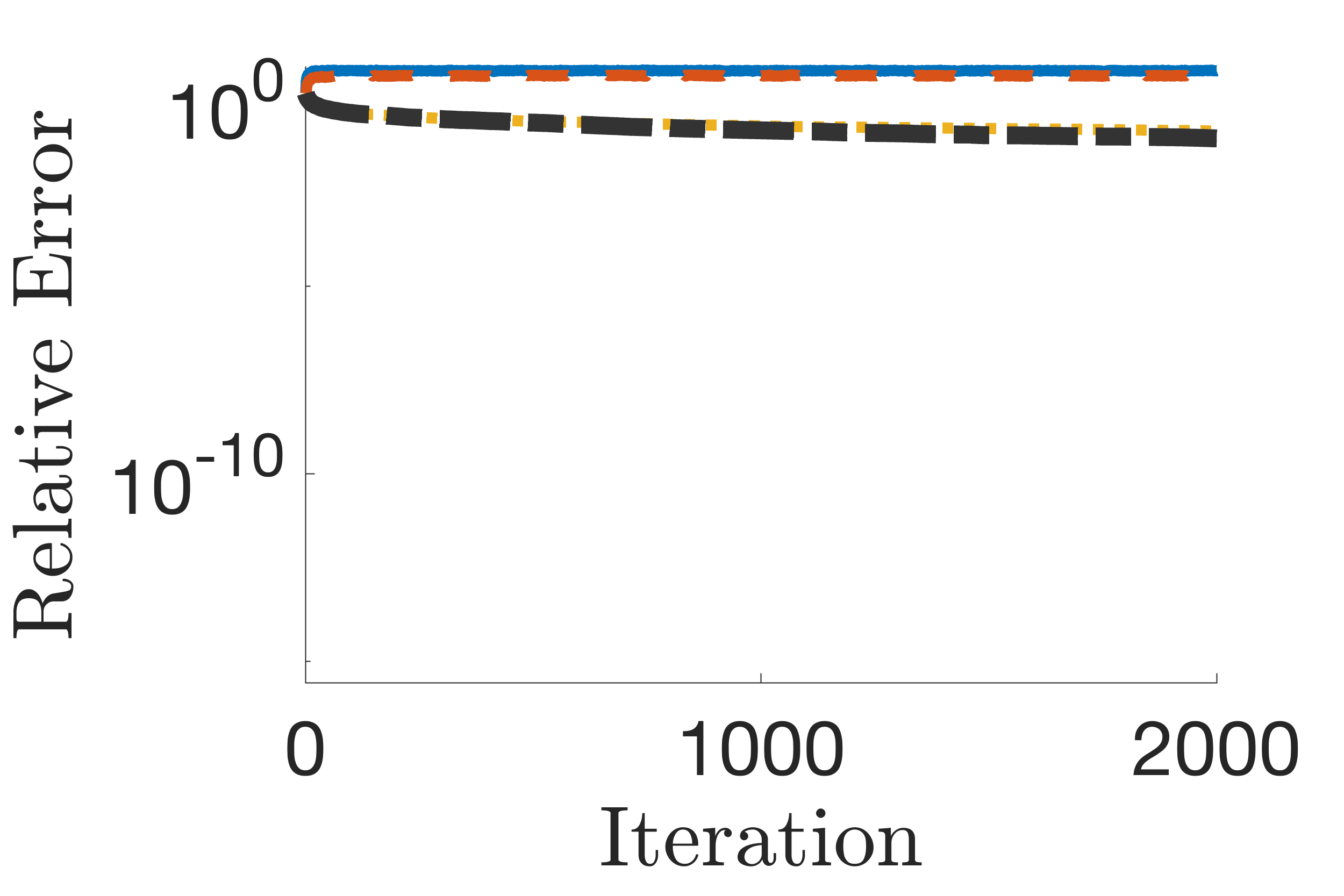}
    \caption{Median relative residual errors of QTRK applied on a system $\tA \tX = \tB$ where $\tA \in \R^{25 \times 5 \times 10}$, $\tB \in \R^{25 \times 4 \times 10}$, and the corruptions are generated from $\mathcal{N}(100,20)$. In the left column plots, $\tbeta = 0.025$, the middle column plots $\tbeta = 0.075$, and the right column plots $\tbeta = 0.1$. In the top row plots, $\tbrow = 0.2$, the middle row plots, $\tbrow = 0.4$, and the bottom row plots $\tbrow = 0.8$. }
     \label{fig:qtrk-large-corr}
\end{figure}

\subsection{QTRK on Synthetic Data}
\label{subsec:QTRK_experiments}

In the first set of experiments, presented in Figures~\ref{fig:qtrk-large-corr} and~\ref{fig:qtrk-small-corr}, we apply QTRK to solve corrupted tensor linear systems as defined in Section \ref{sec:exp design}. 
We consider values for $\tbrow \in \{0.2, 0.4, 0.8\}$ and $\tbeta \in \{0.025, 0.075, 0.01\}$. 
As an input for QTRK, we choose quantile values $q = 1 - \tbeta$ for each value of $\tbeta$ and consider the trivial case $q=1$ where QTRK assumes no corruptions and is thus equivalent to TRK. 
We remind the reader that, for quantile values $q = 1 - \tbeta$ the corresponding graphs are thicker.

In both Figures \ref{fig:qtrk-large-corr} and \ref{fig:qtrk-small-corr}, we observe the following:
\begin{enumerate}[label=(\alph*)]

\item When $\tbrow = 0.2$, QTRK converges, and does so faster, for the underestimates of $\tbeta$ (i.e., $q > 1 - \tbeta$). 
That is, when QTRK is not cautious. 
    
\item For the overestimates of $\tbeta$ (i.e., $q < 1 - \tbeta$), QTRK is more restrictive and thus could potentially be (frequently or completely) omitting row slices that contain important information for finding a solution. This is particularly apparent in the left-most plots (i.e, $\tbeta = 0.025$) with $q = 0.9$.

\item For $\tbrow = 0.8$, the convergence of QTRK is slower and for some values of $q$ it ceases to converge. Note that for $\tbrow = 0.8$ only 5 row slices are guaranteed to be uncorrupted.
This results in an uncorrupted subsystem with the same number of row and column slices (i.e., not highly overdetermined). \\

\end{enumerate}

In Figure \ref{fig:qtrk-large-corr}, we additionally observe the following.
For $\tbrow = 0.2$ and 0.4, as $\tbeta \ge 0.075$ increases, the set of $q$ values for which QTRK converges remains the same.
Further, the convergence of QTRK is not significantly affected by the density of corruptions in the restricted row slices.
Indeed, QTRK is designed to mask entire row slices where corruptions are estimated to be.
Interestingly, for the underestimates of $\tbeta$, it appears that QTRK converges faster when $\tbeta$ increases. 
Roughly speaking, we expect that if QTRK converges for some quantile values $q_1$ and $q_2$ where $q_1 > q_2 > 1 - \tbeta$, then it would converge faster for the larger quantile value $q_1$ (i.e., when QTRK is less cautious).  Finally, we point out that QTRK consistently outperforms TRK, illustrating the need for methods tailored to avoid projecting with corrupted data.

\begin{figure}[h!]
    \centering  
    \includegraphics[width=0.3\textwidth]{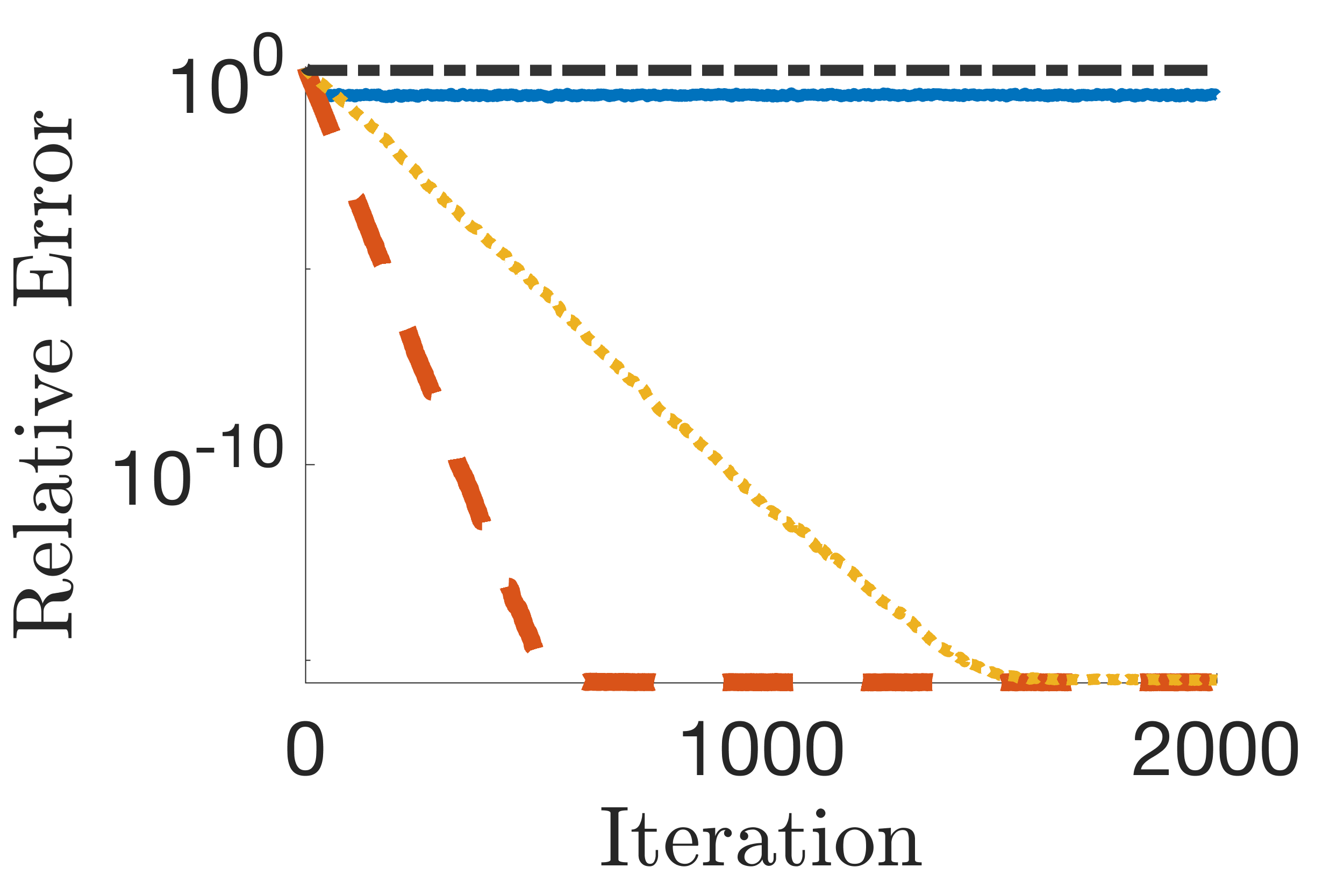}%
    \includegraphics[width=0.3\textwidth]{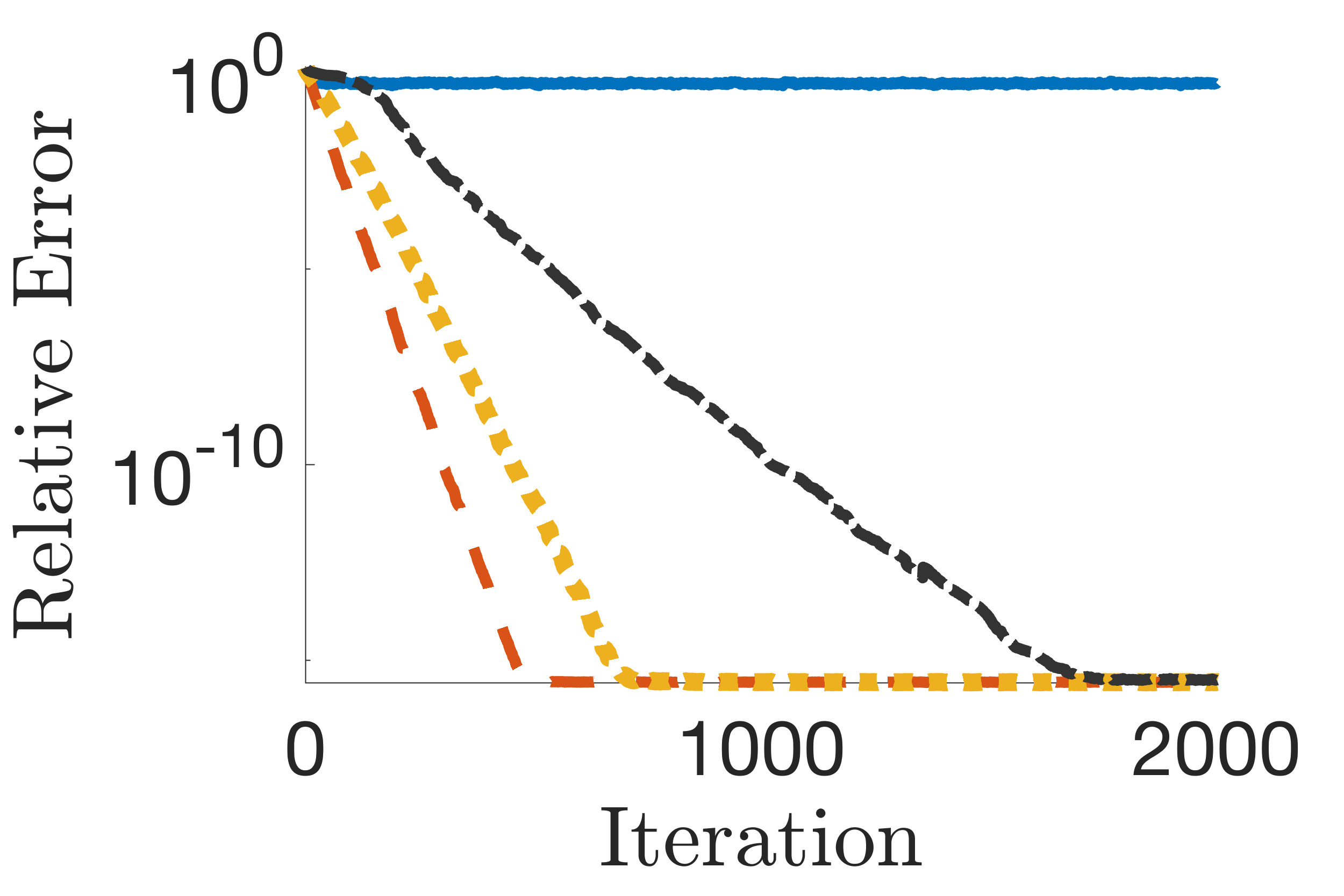}%
    \includegraphics[width=0.3\textwidth]{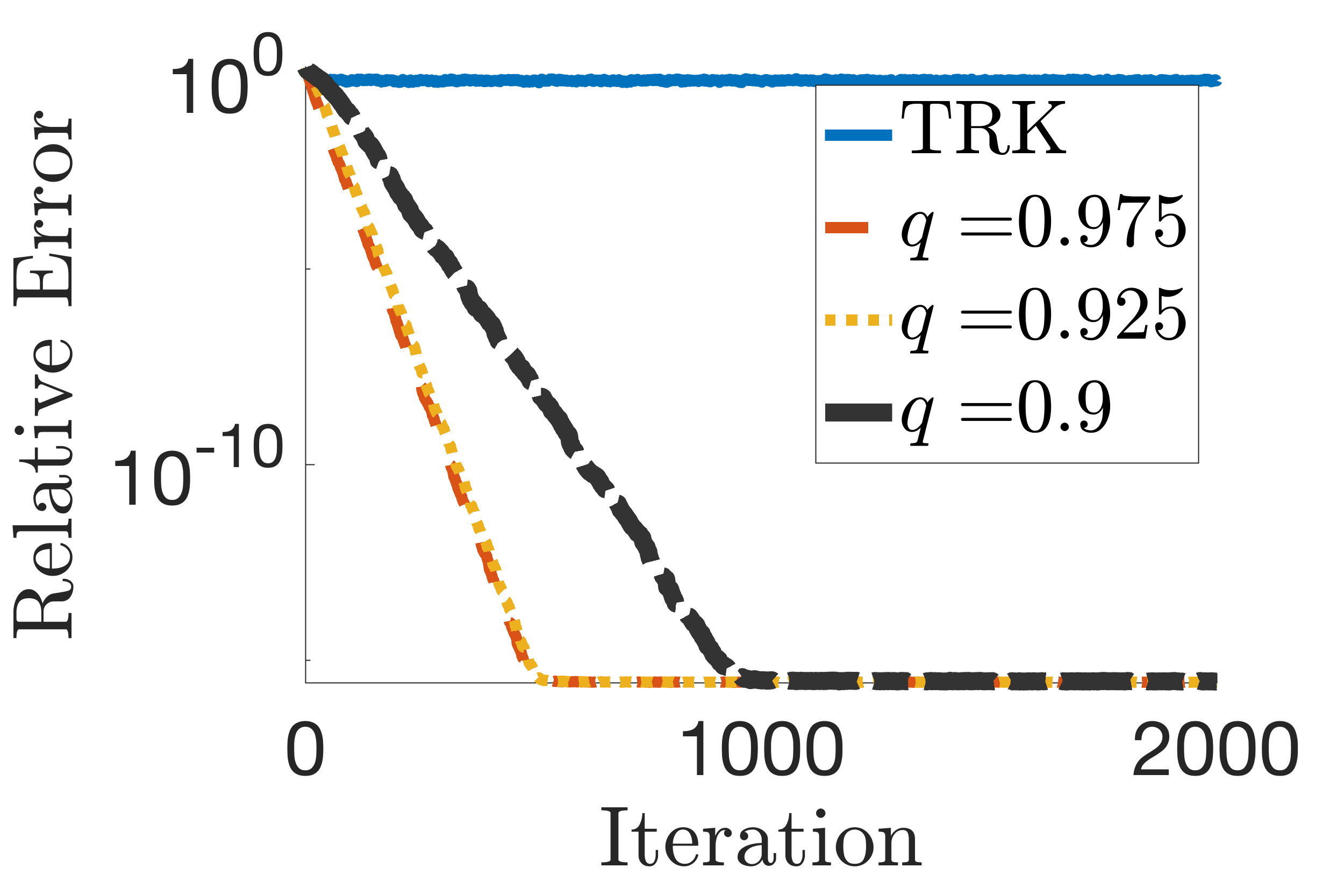}
    \\
    \includegraphics[width=0.3\textwidth]{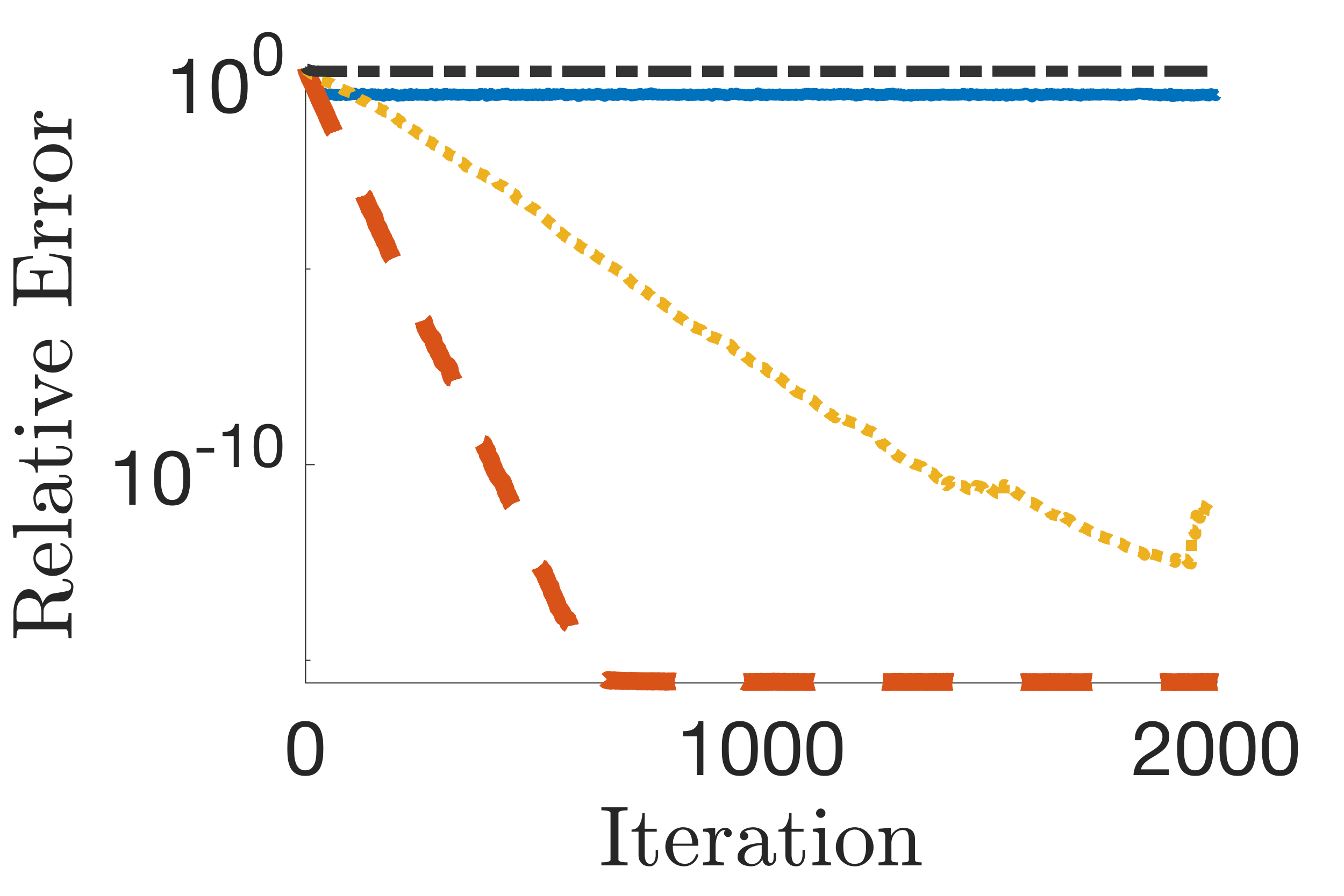}%
    \includegraphics[width=0.3\textwidth]{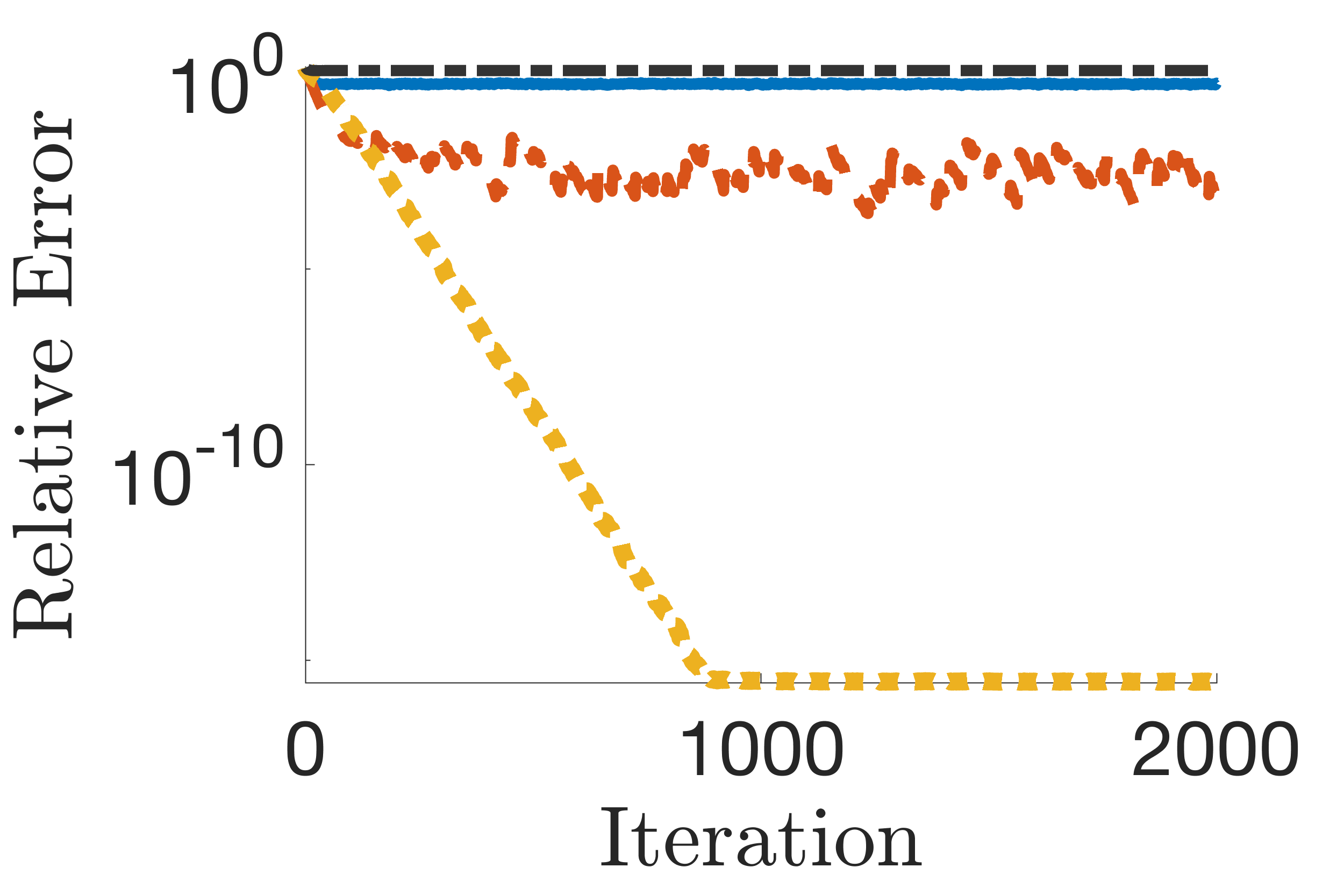}%
    \includegraphics[width=0.3\textwidth]{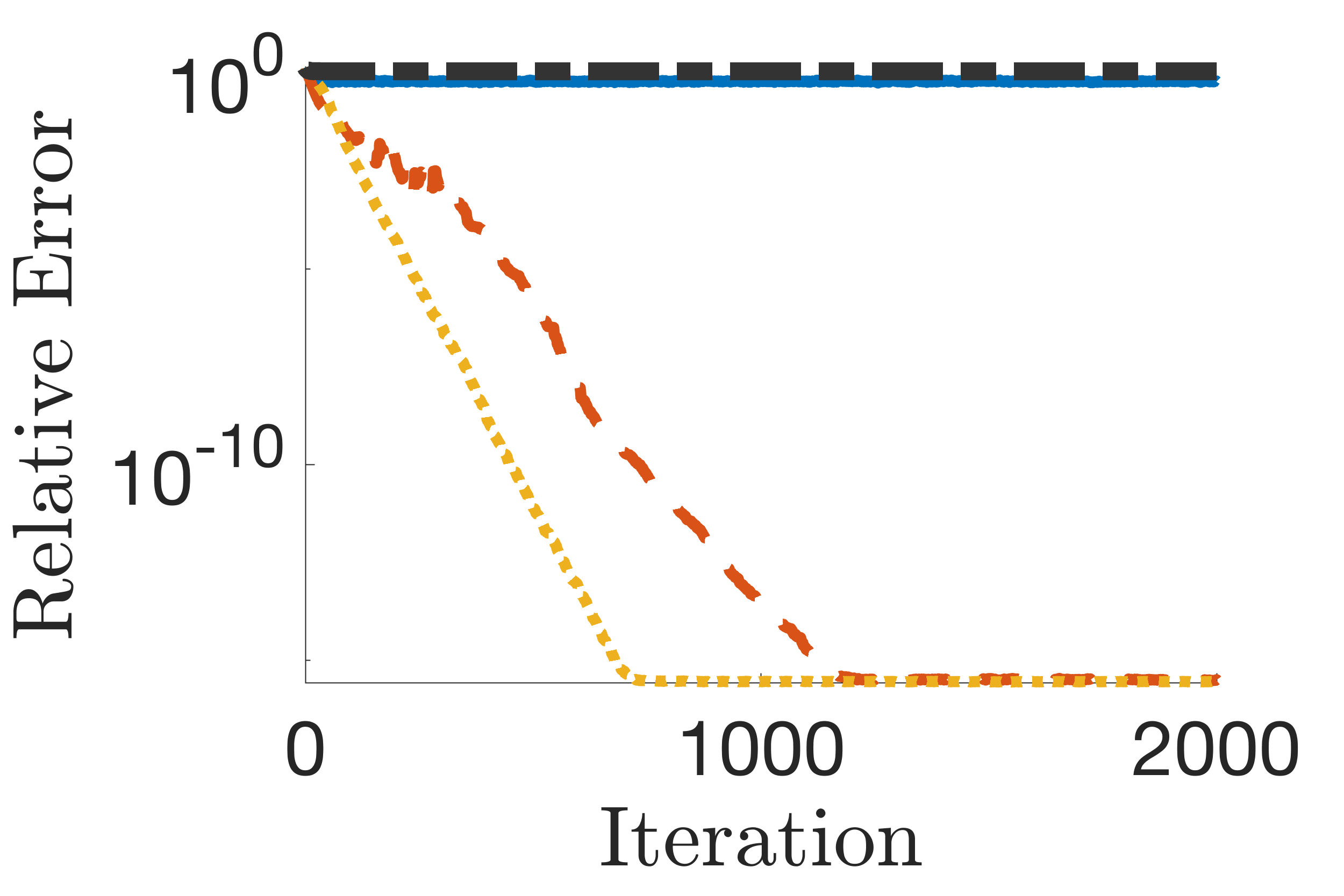}
    \\
    \includegraphics[width=0.3\textwidth]{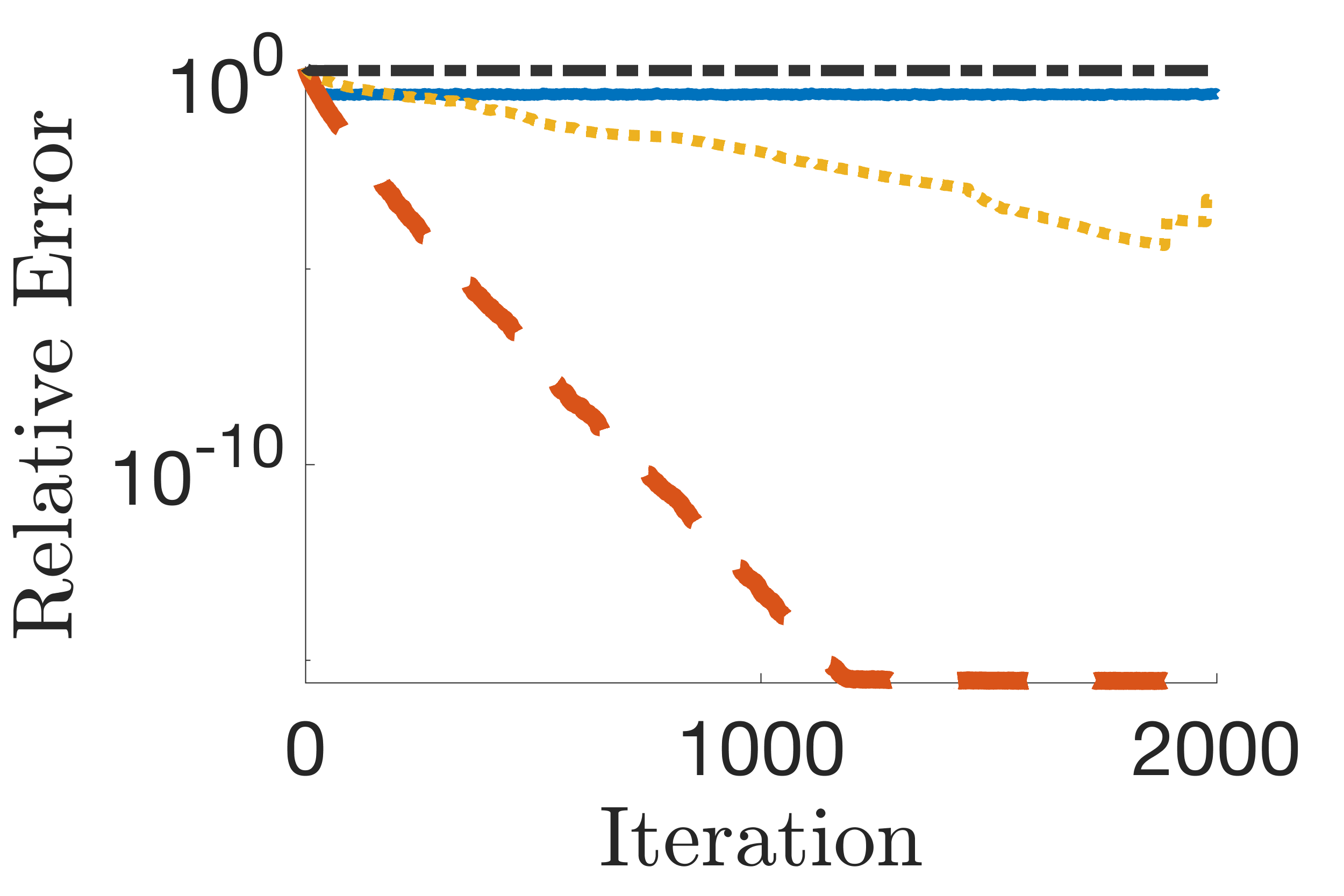}%
    \includegraphics[width=0.3\textwidth]{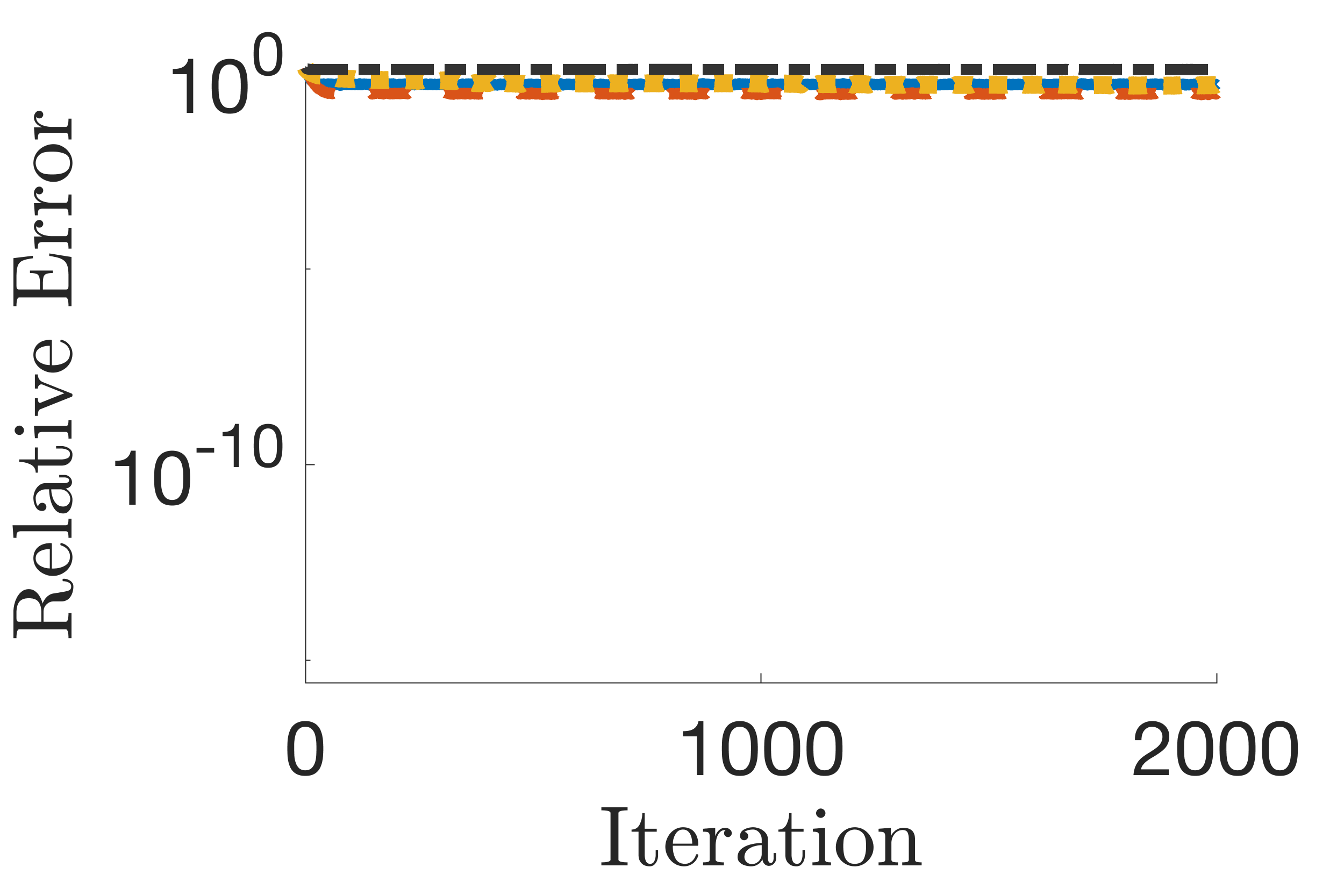}%
    \includegraphics[width=0.3\textwidth]{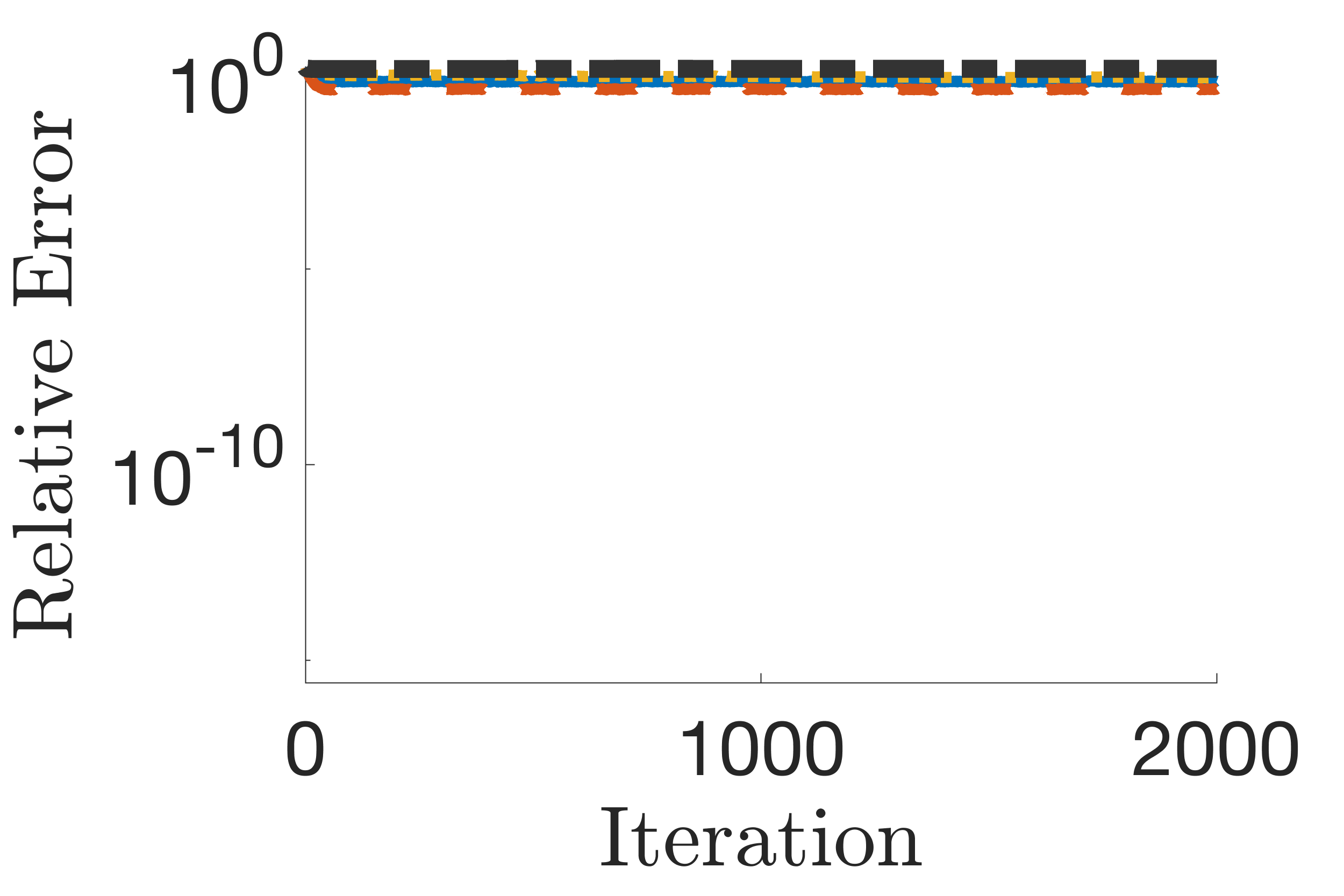}
    \caption{Median relative residual errors of QTRK applied on a system $\tA \tX = \tB$ where $\tA \in \R^{25 \times 5 \times 10}$, $\tB \in \R^{25 \times 4 \times 10}$, and the corruptions are generated from $\mathcal{N}(10,5)$. In the left column plots, $\tbeta = 0.025$, the middle column plots $\tbeta = 0.075$, and the right column plots $\tbeta = 0.1$. In the top row plots, $\tbrow = 0.2$, the middle row plots, $\tbrow = 0.4$, and the bottom row plots $\tbrow = 0.8$. }
     \label{fig:qtrk-small-corr}
\end{figure}

In addition to observation (b) above, in Figure \ref{fig:qtrk-small-corr}, we observe in all cases except for when $\tbeta = 0.1$ and $\tbrow = 0.2$, QTRK does not converge for $q=0.9$.
Interestingly, for a constant value of $q$, when $\tbeta$ is larger, QTRK has better odds of correctly identifying corruptions with larger magnitude and thus may converge when it did not for smaller values of $\tbeta$.
Further, TRK results in slightly lower median relative residual errors compared to when $q = 0.9$. 
We note that by design there is a tradeoff between correctly detecting a corruption and allowing a small but non-devastating corruption to be included in the update.  
Figure \ref{fig:qtrk-small-corr} shows that when the magnitudes of the corruptions create ambiguities in the algorithm, QTRK may ``detect" a corruption in each row slice of $\tB$ and thus ceases to update $\tX^{(k)}$. 
This could happen at the very first iteration.
Re-initialization or warm starts may be a possible way to reduce this risk, and is an interesting direction for future work.

\begin{figure}[h!]
    \centering  
    \includegraphics[width=0.3\textwidth]{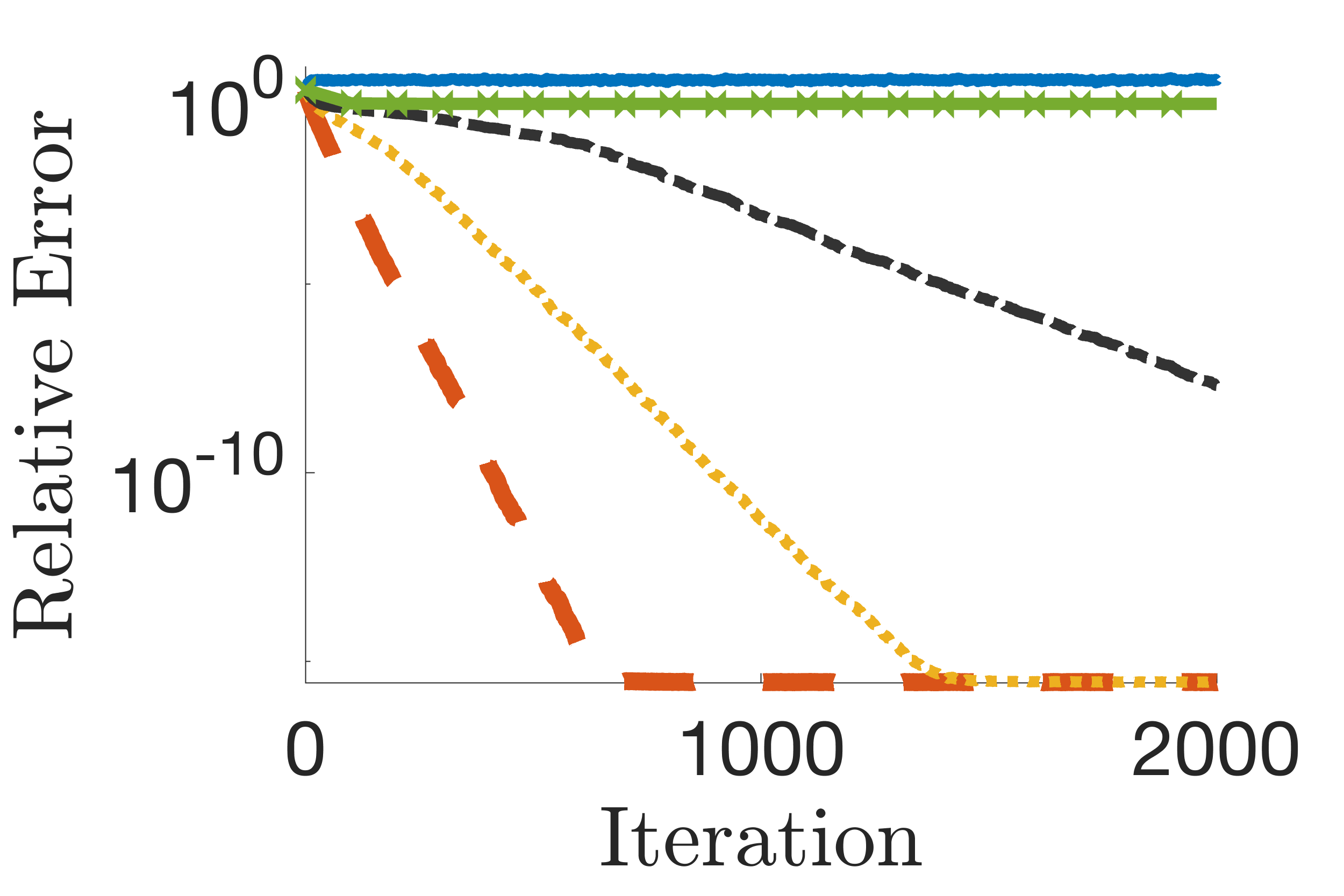}%
    \includegraphics[width=0.3\textwidth]{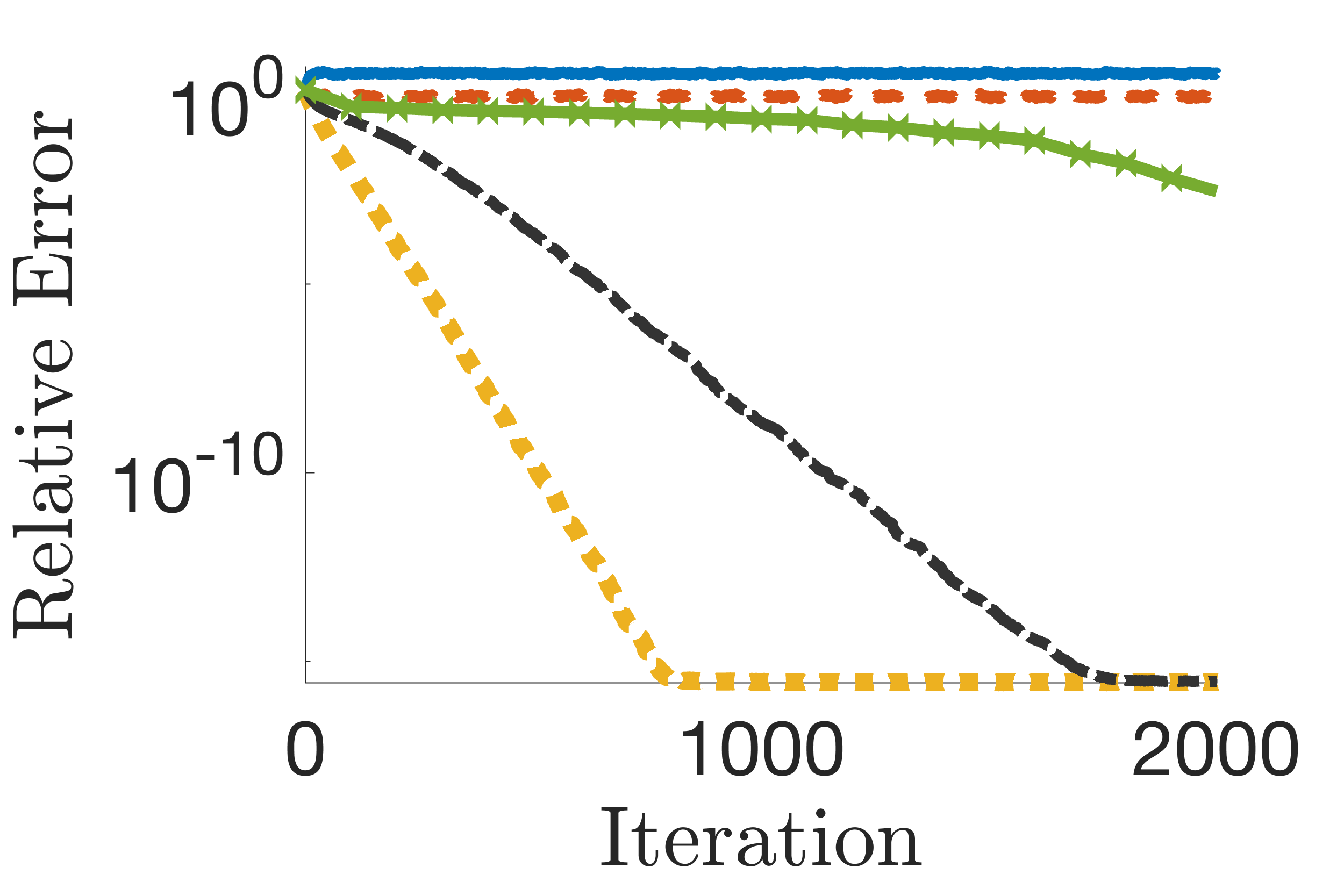}%
    \includegraphics[width=0.3\textwidth]{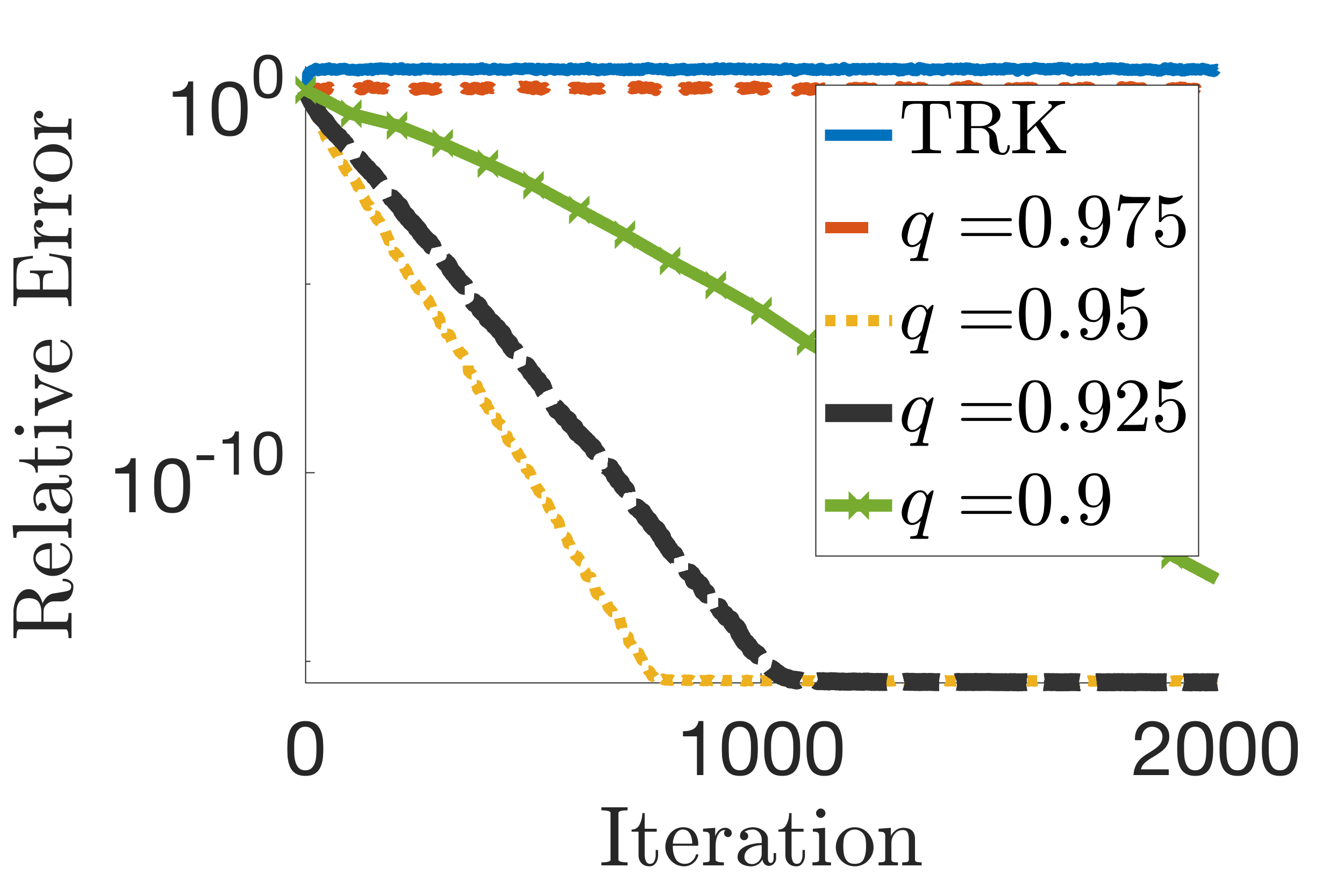}
    \\
    \includegraphics[width=0.3\textwidth]{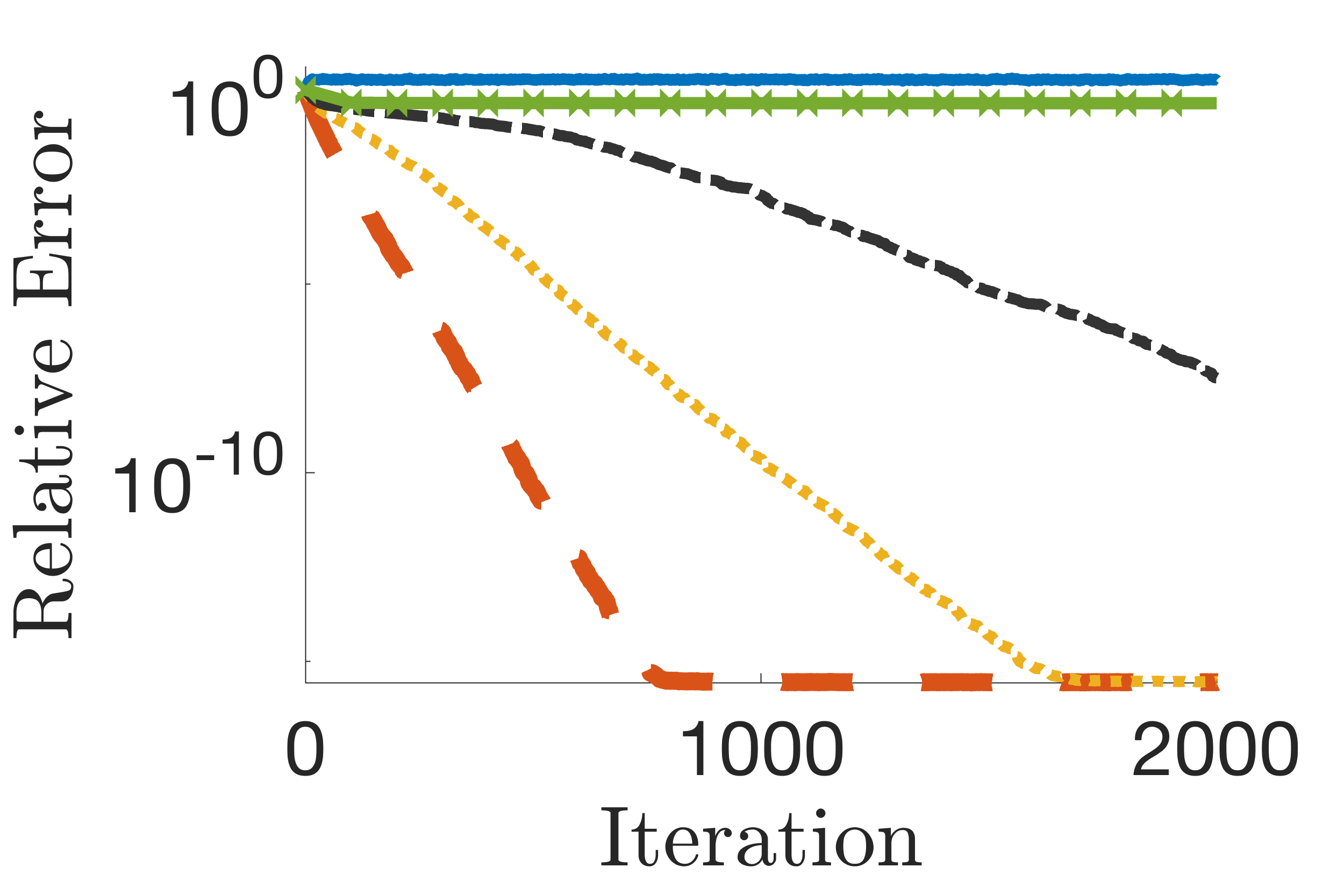}%
    \includegraphics[width=0.3\textwidth]{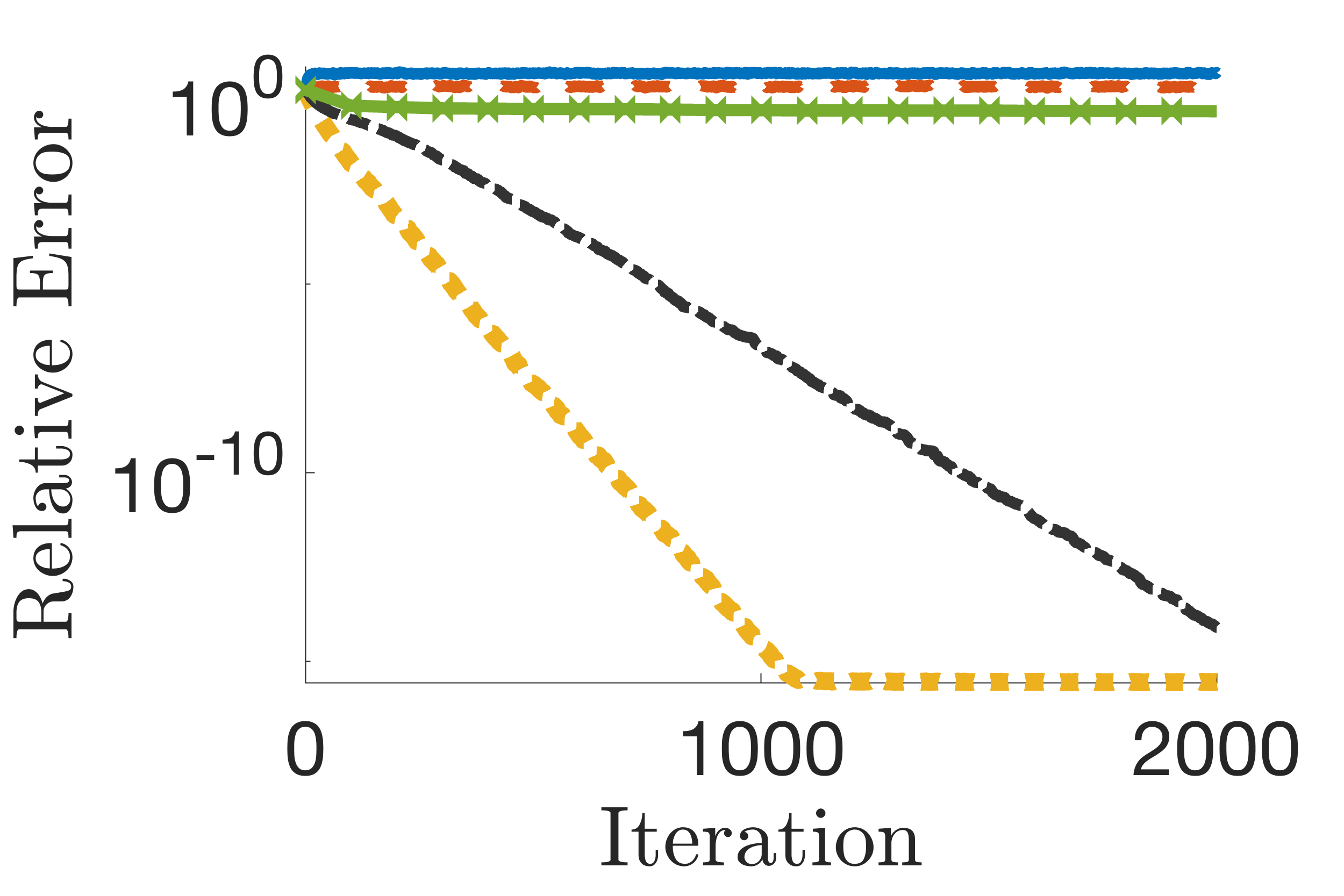}%
    \includegraphics[width=0.3\textwidth]{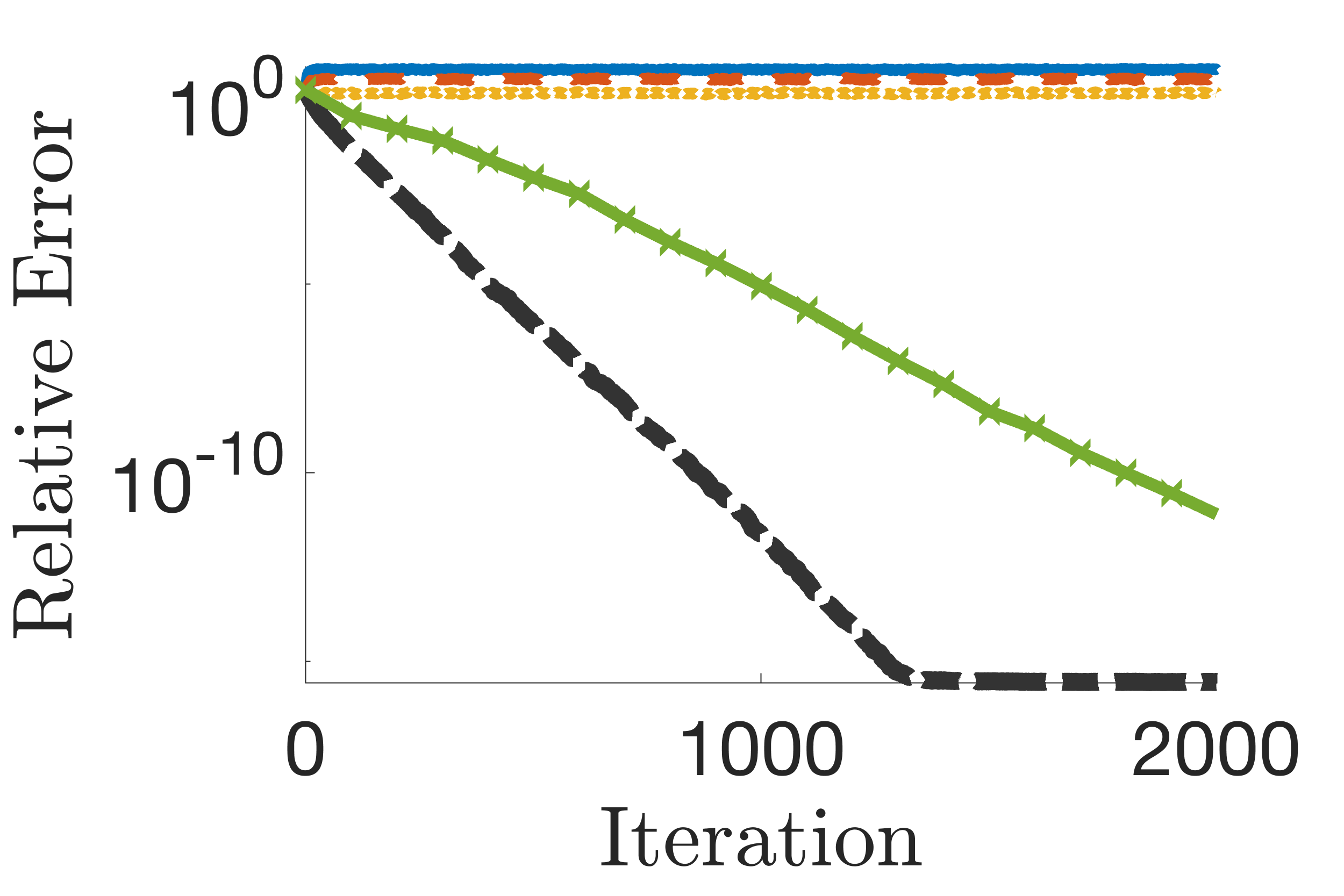}
    \\
    \includegraphics[width=0.3\textwidth]{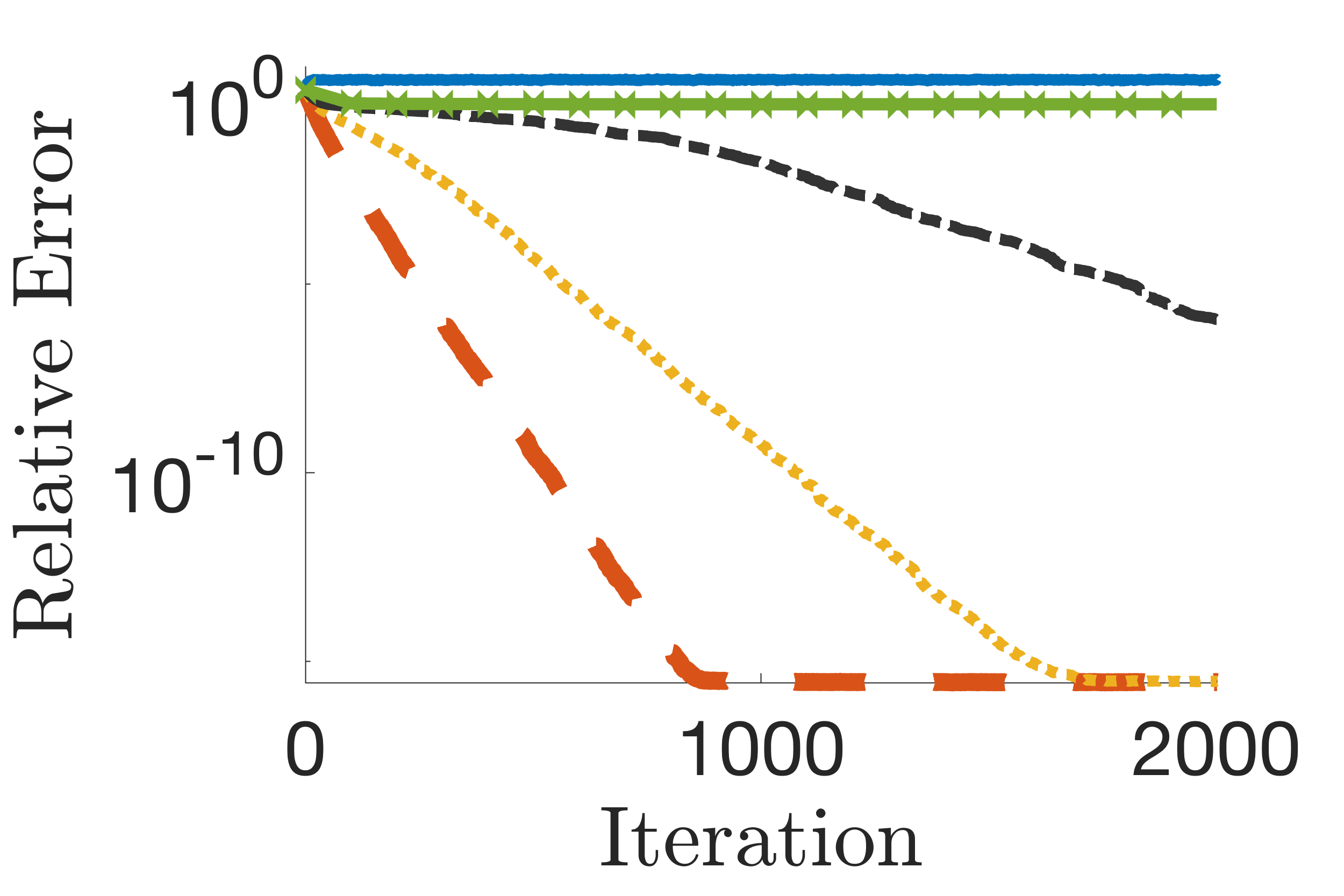}%
    \includegraphics[width=0.3\textwidth]{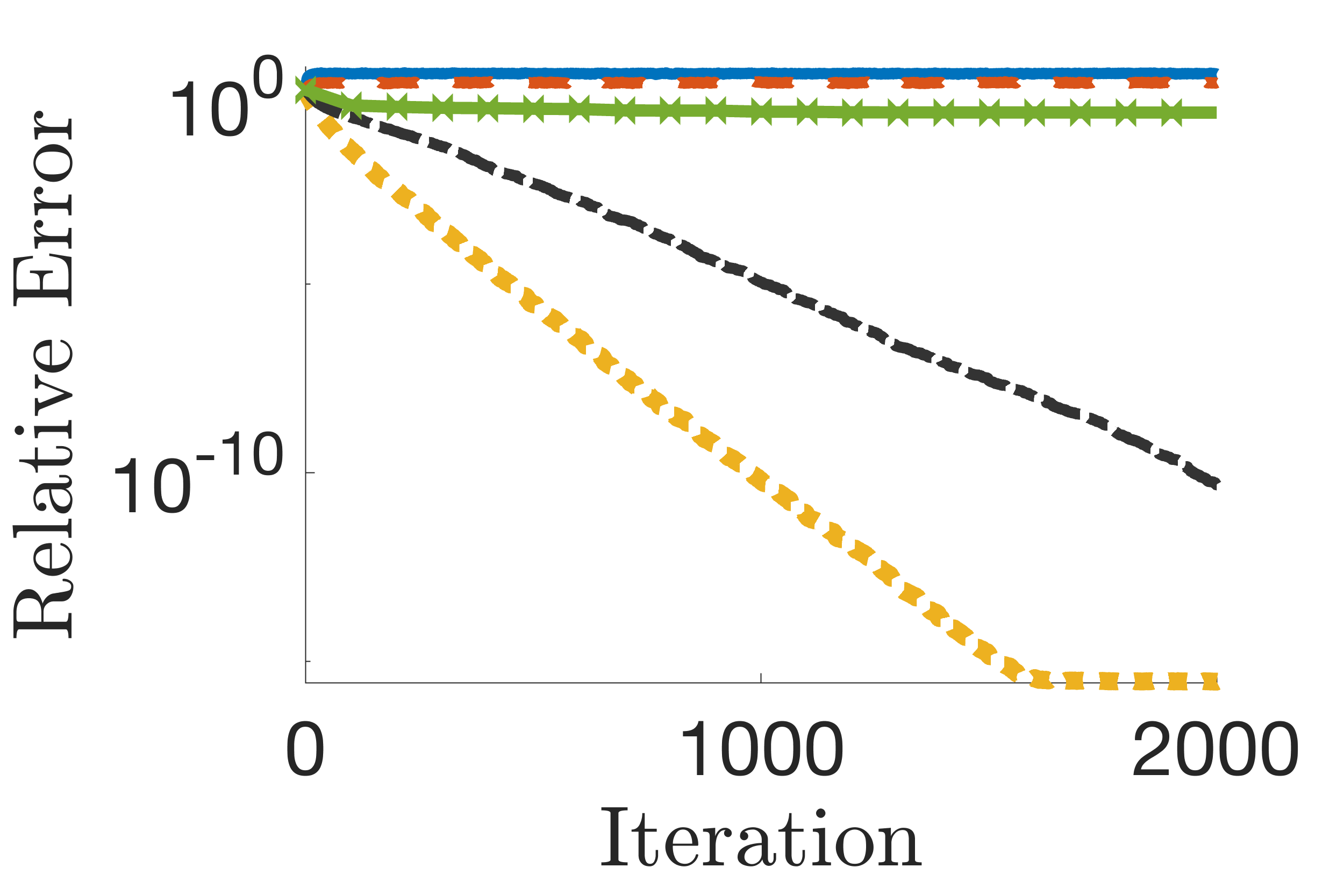}%
    \includegraphics[width=0.3\textwidth]{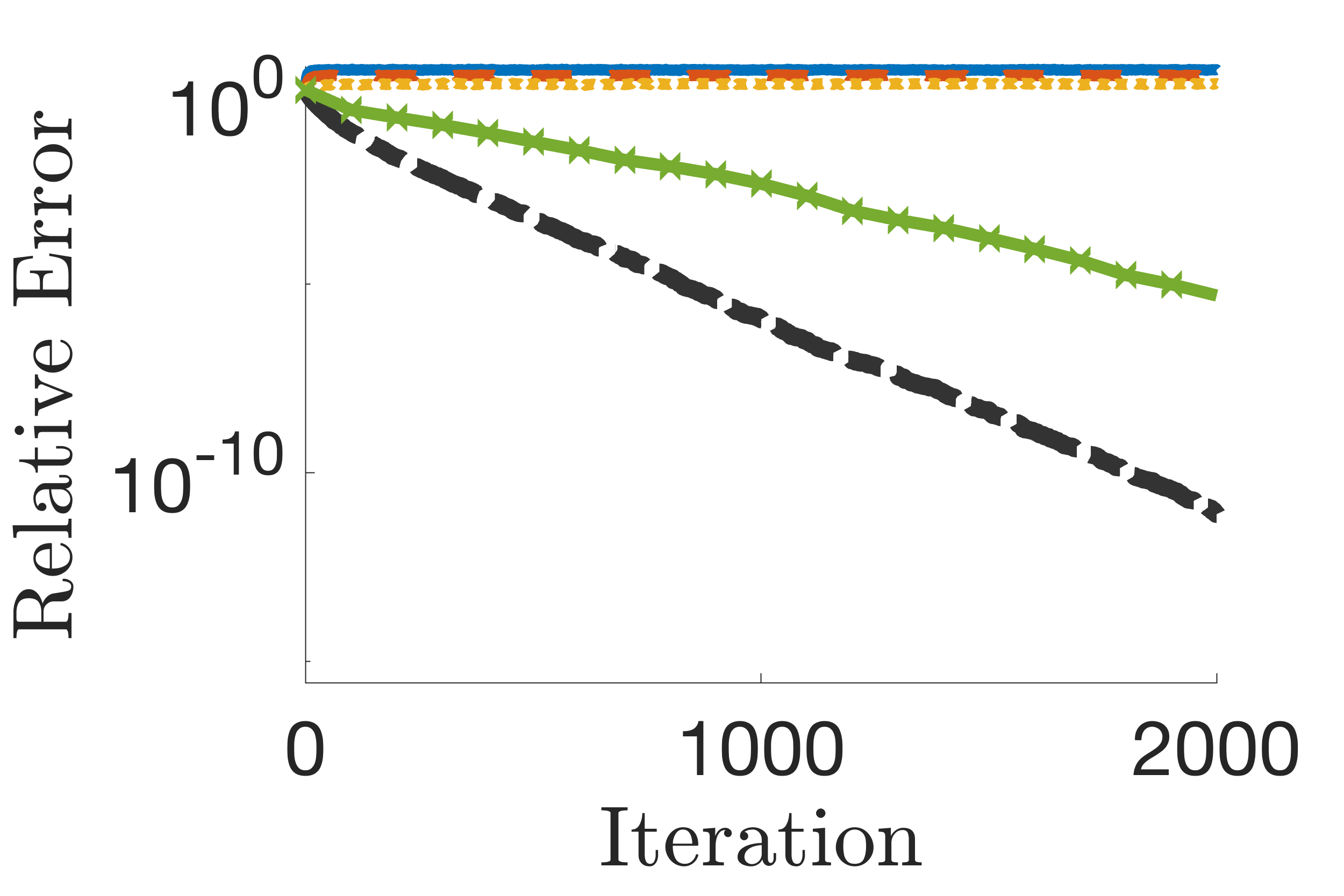}
    \\
    \includegraphics[width=0.3\textwidth]{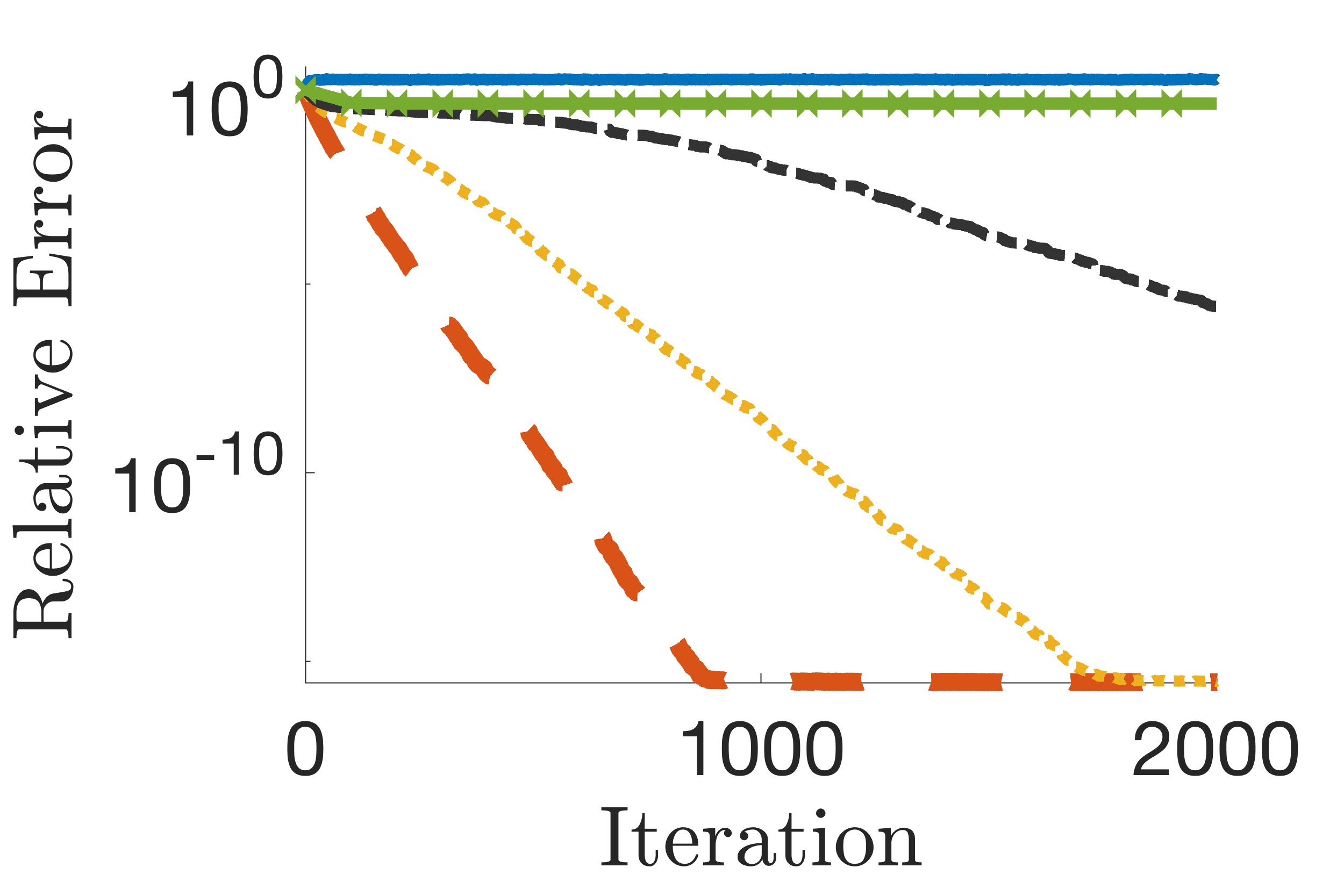}%
    \includegraphics[width=0.3\textwidth]{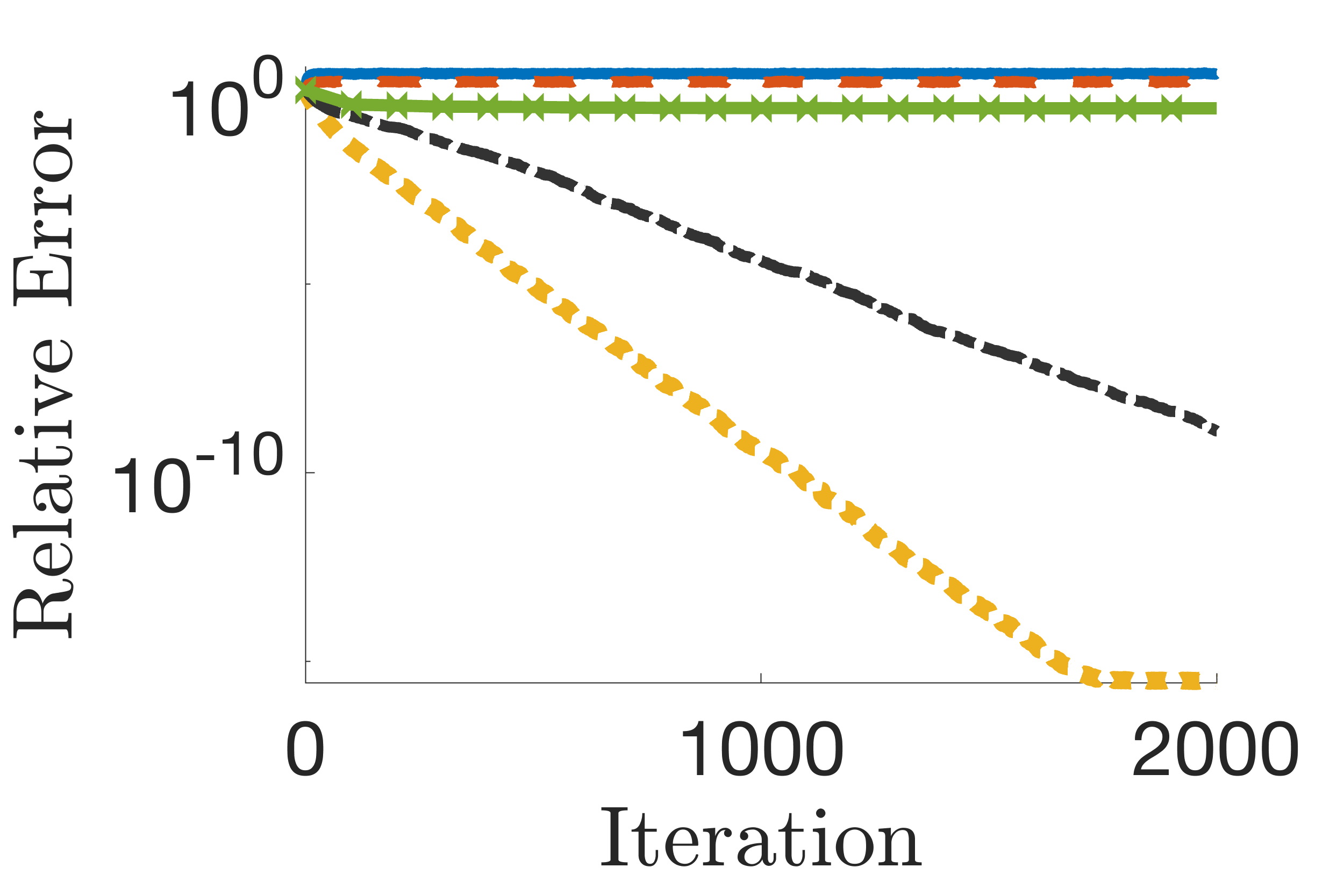}%
    \includegraphics[width=0.3\textwidth]{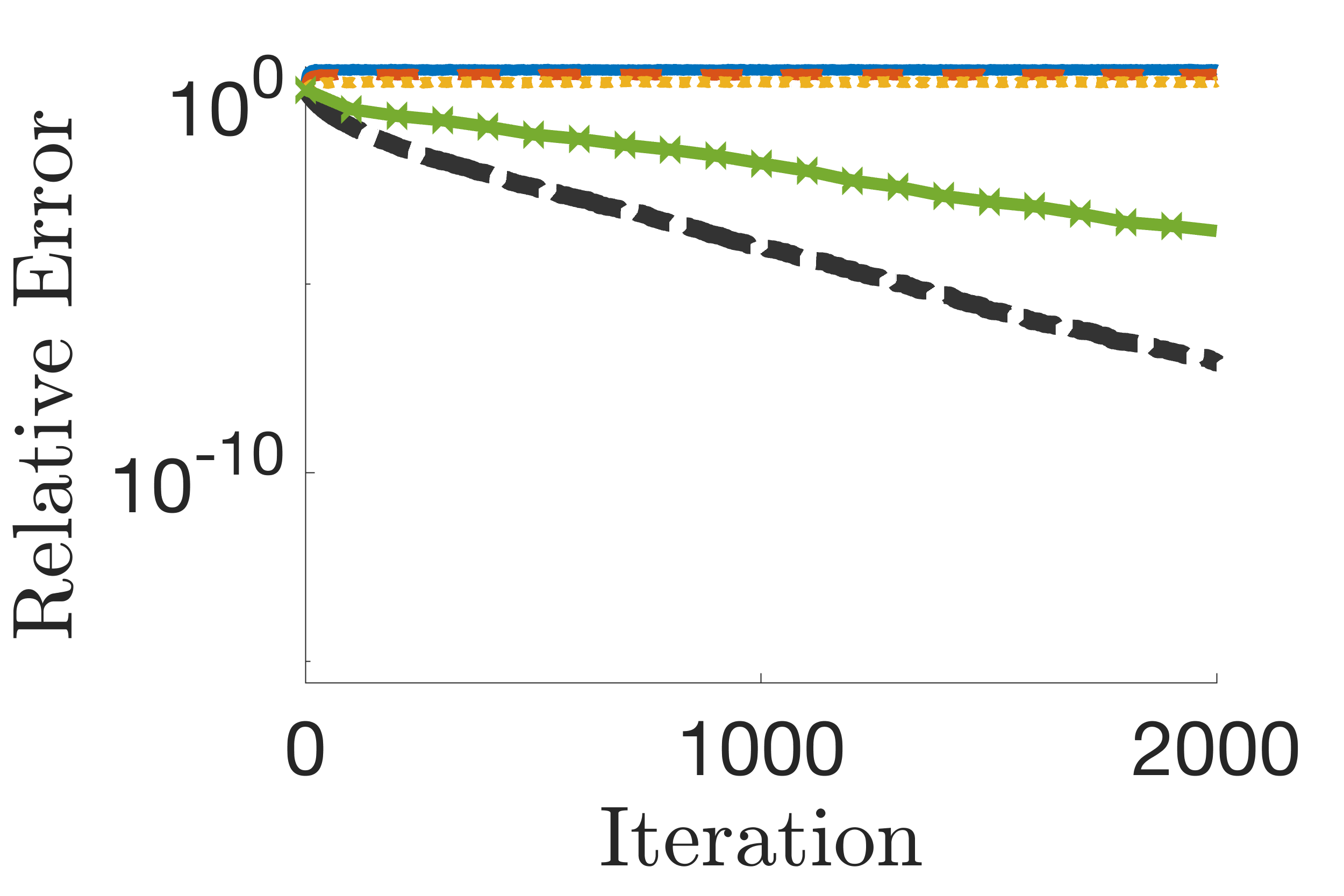}
    \caption{Median relative residual errors of mQTRK applied on a system $\tA \tX = \tB$ where $\tA \in \R^{25 \times 5 \times 10}$, $\tB \in \R^{25 \times 4 \times 10}$, and the corruptions are generated from $\mathcal{N}(100,20)$. In the left column plots, $\tbeta = 0.025$, the middle column plots $\tbeta = 0.05$, and the right column plots $\tbeta = 0.075$. In the top row plots, $\tbrow = 0.2$, the second row plots, $\tbrow = 0.4$, the third row plots $\tbrow = 0.8$, and the bottom row plots $\tbrow = 1$.}
     \label{fig:mqtrk-large-corr}
\end{figure}

\subsection{mQTRK on Synthetic Data}
\label{subsec:mQTRK_experiments}

In the second set of experiments, presented in Figures~\ref{fig:mqtrk-large-corr} and~\ref{fig:mqtrk-small-corr}, we apply mQTRK to solve corrupted tensor linear systems as defined in Section \ref{sec:exp design}.
We consider values for $\tbrow \in \{0.2, 0.4, 0.8, 1\}$ and $\tbeta \in \{ 0.025, 0.05, 0.075\}$. 
As an input for mQTRK, we choose quantile values $q = 1 - \tbeta$ for each value of $\tbeta$, $q=1$, and $q = 0.90$ (an underestimate of $1 - \tbeta$ in all cases). 
We remind the reader that, for quantile values $q = 1 - \tbeta$ the corresponding graphs are thicker.

An appropriate comparison with the performance of QTRK from the previous section would be examining the right-most plots of Figures~\ref{fig:mqtrk-large-corr} and~\ref{fig:mqtrk-small-corr} with the center plots of Figures~\ref{fig:qtrk-large-corr} and~\ref{fig:qtrk-small-corr}, respectively.
In all of these figures, the left-most plots have the same settings.
We will discuss this choice below and present additional comparisons experiments in Section~\ref{subsec:QTRKvsmQTRK}.

\begin{figure}[h!]
    \centering  
    \includegraphics[width=0.3\textwidth]{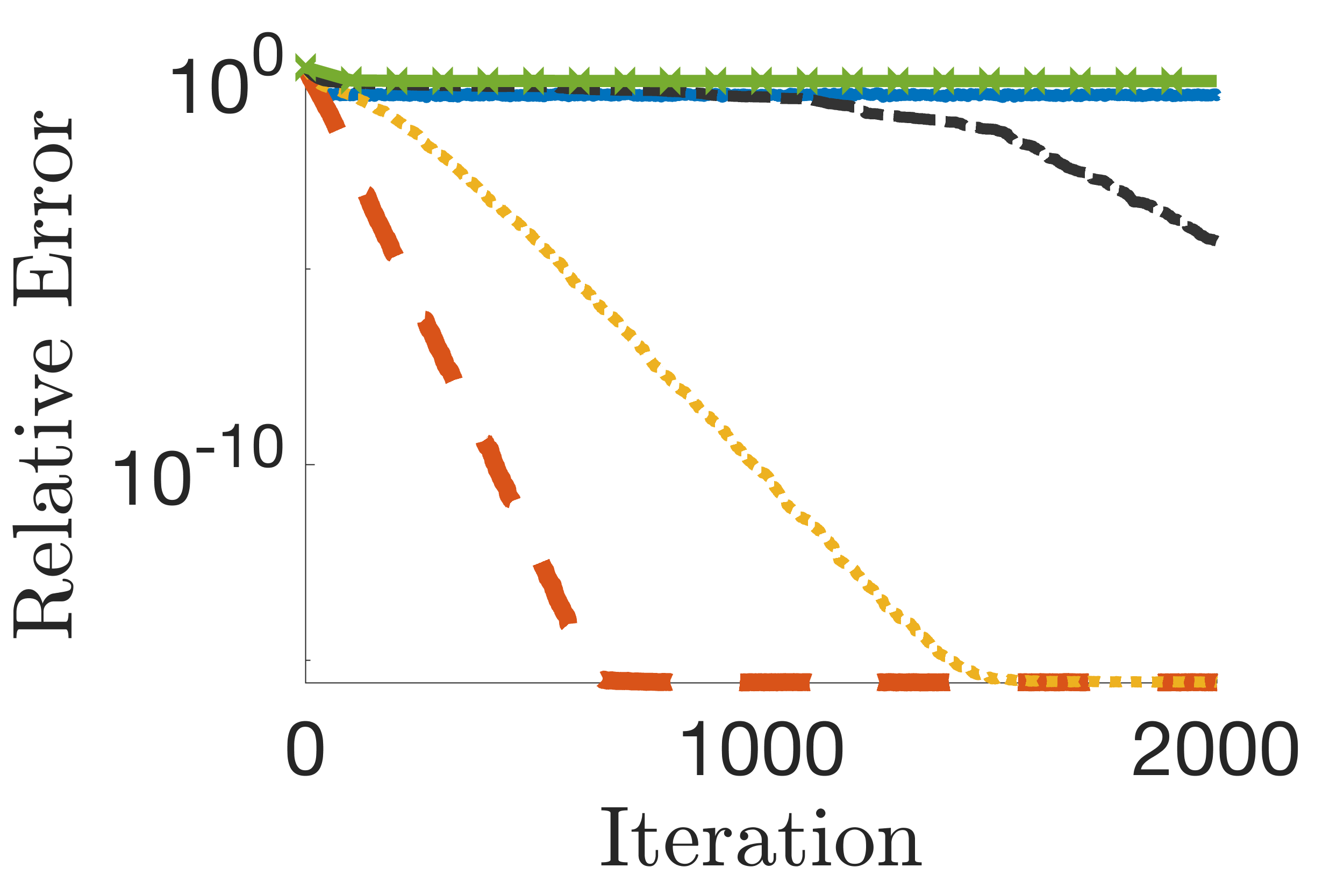}%
    \includegraphics[width=0.3\textwidth]{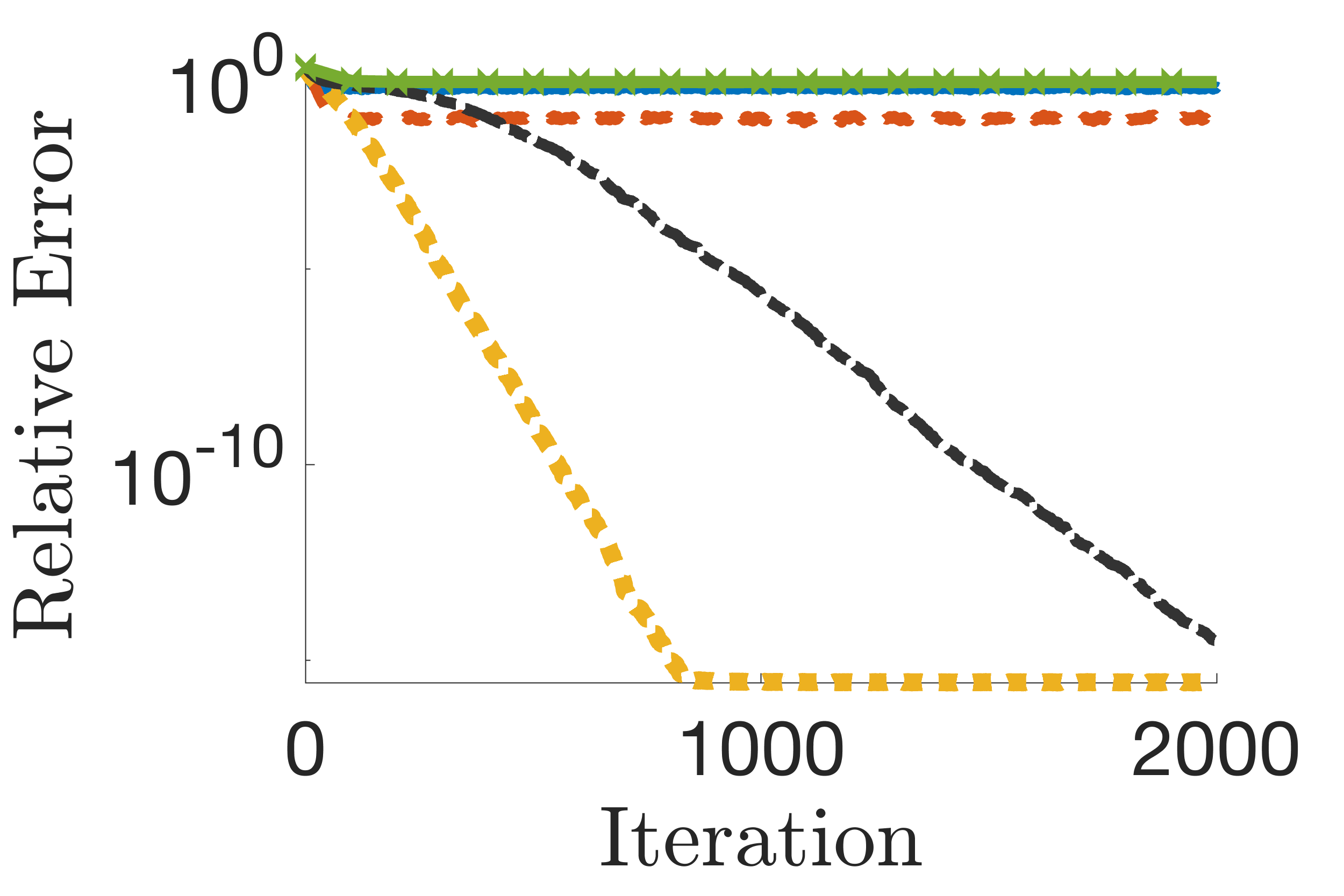}%
    \includegraphics[width=0.3\textwidth]{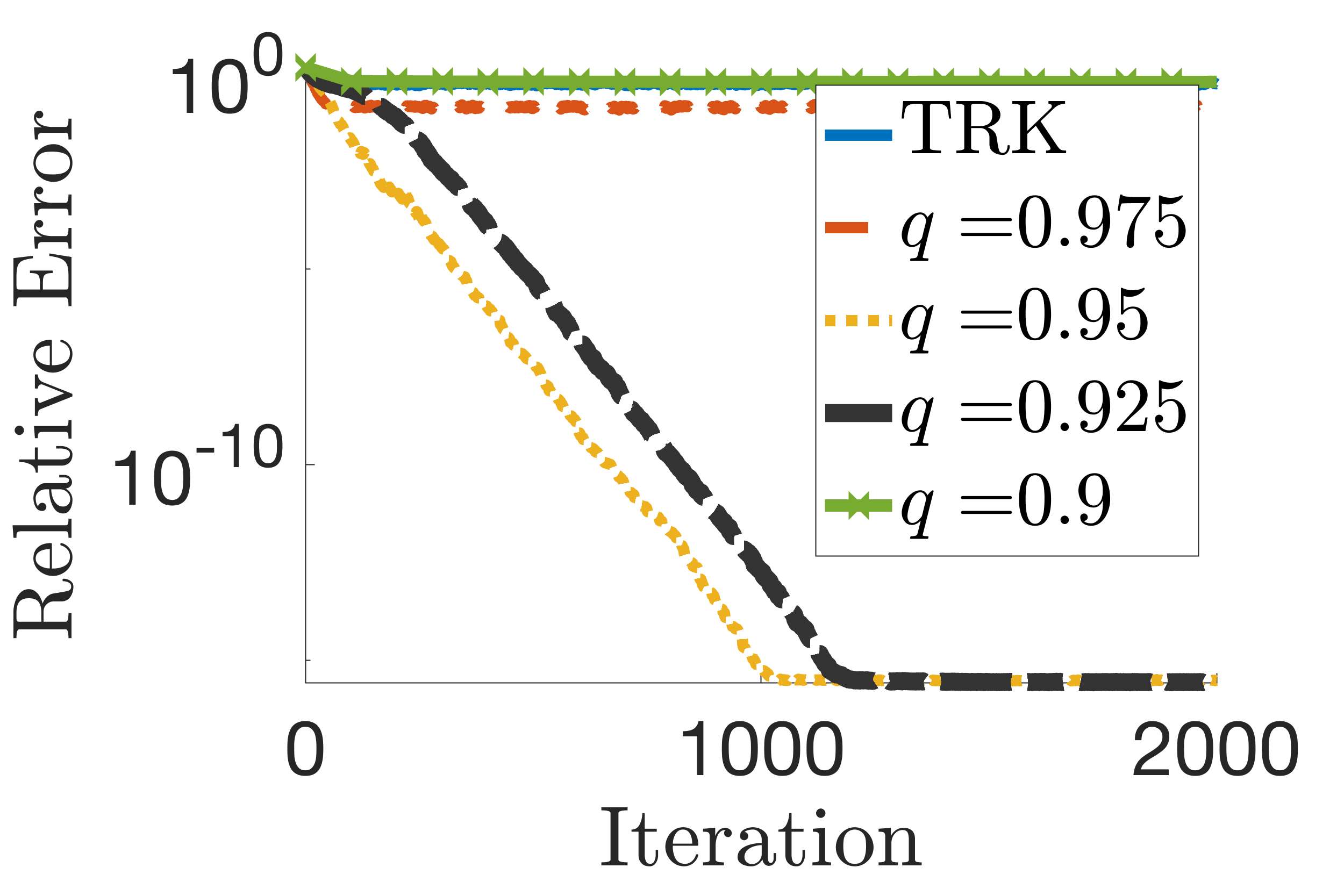}
    \\
    \includegraphics[width=0.3\textwidth]{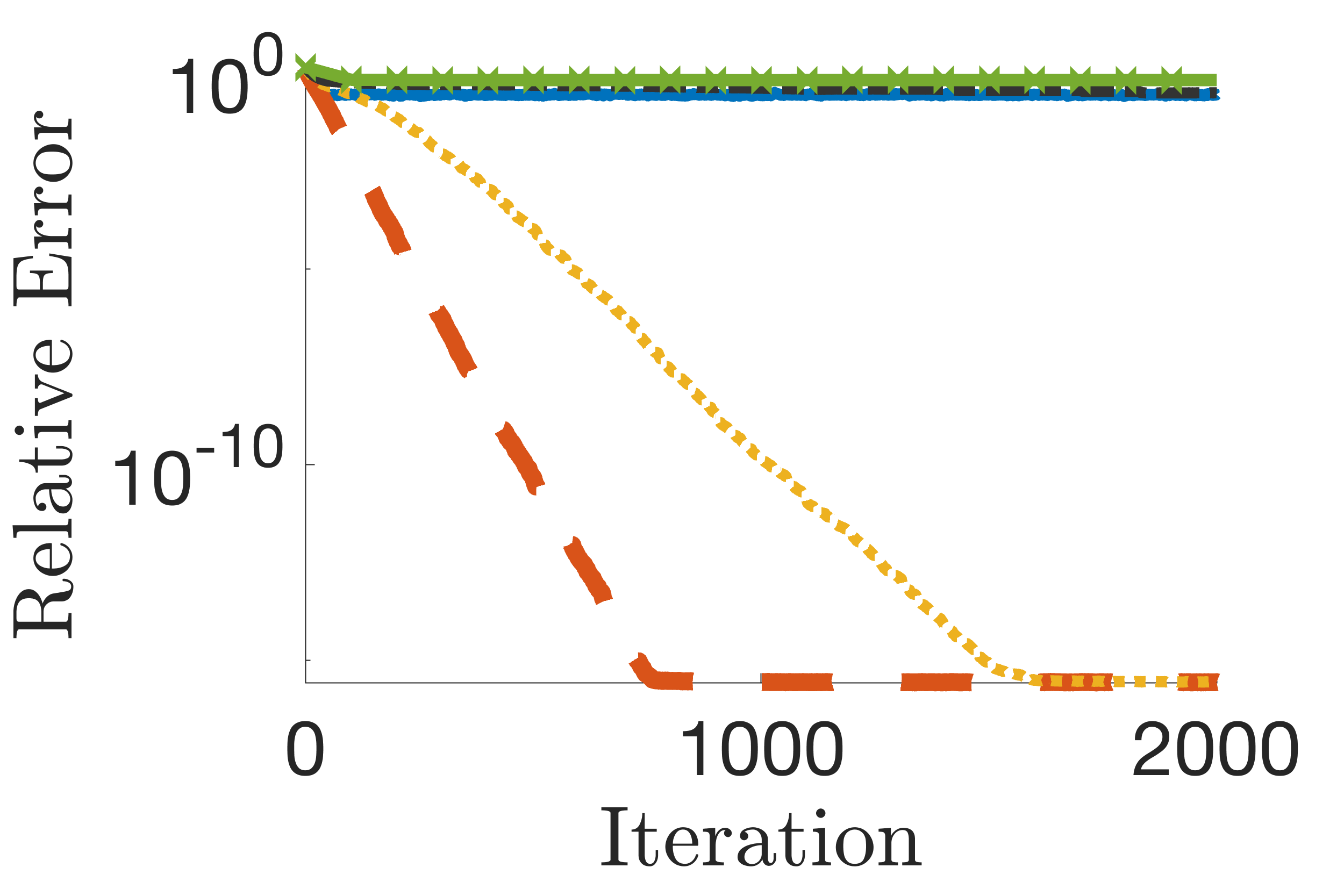}%
    \includegraphics[width=0.3\textwidth]{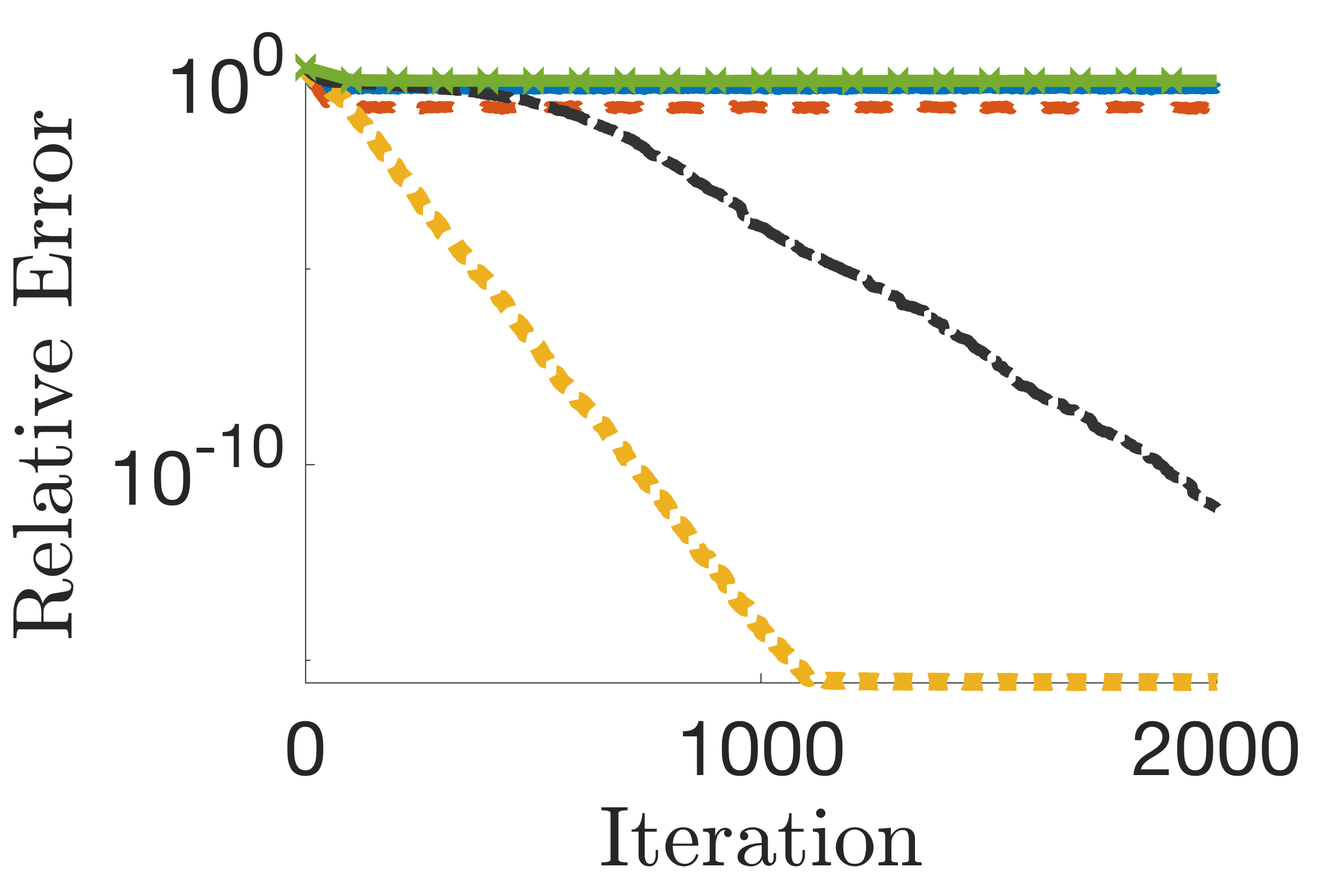}%
    \includegraphics[width=0.3\textwidth]{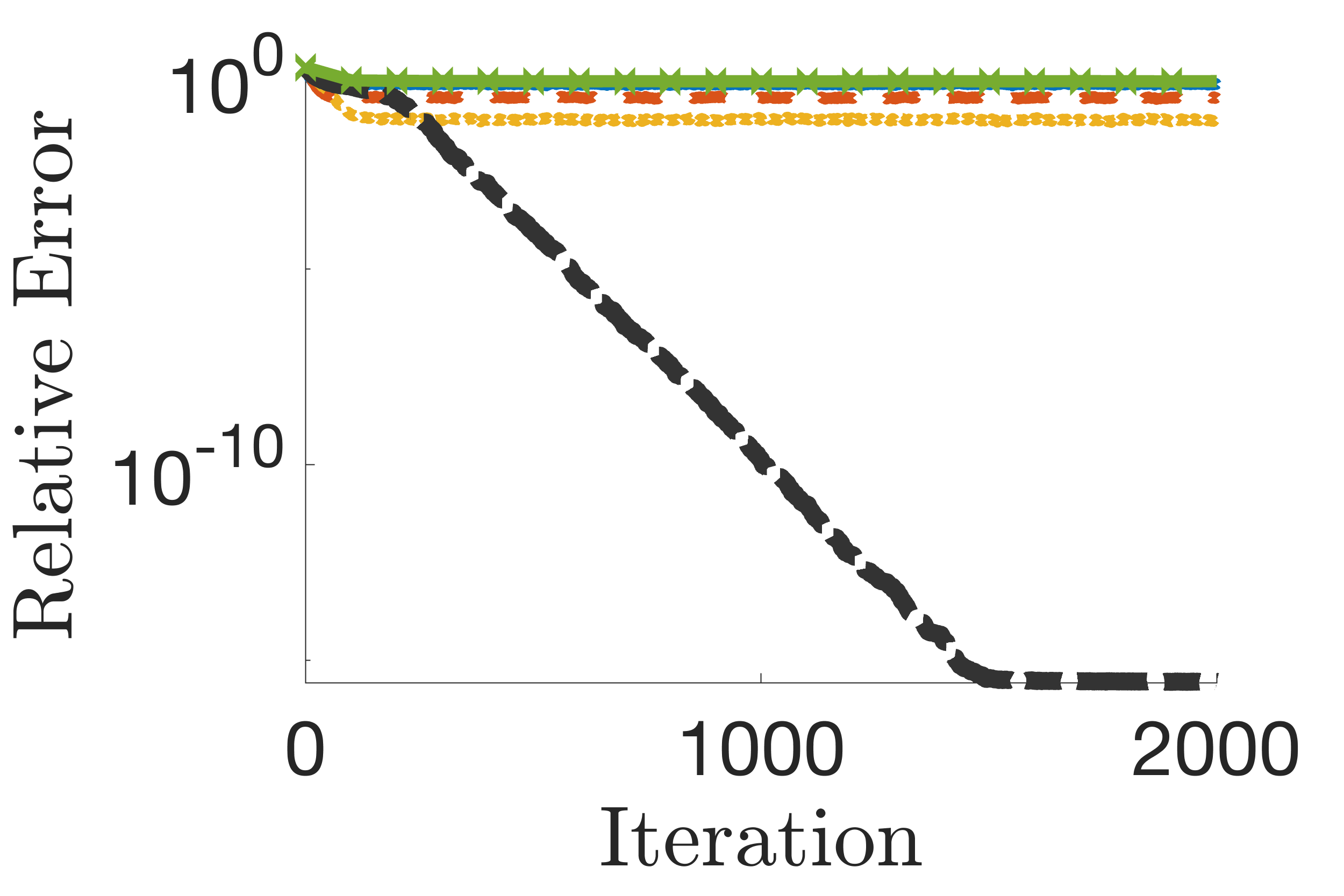}
    \\
    \includegraphics[width=0.3\textwidth]{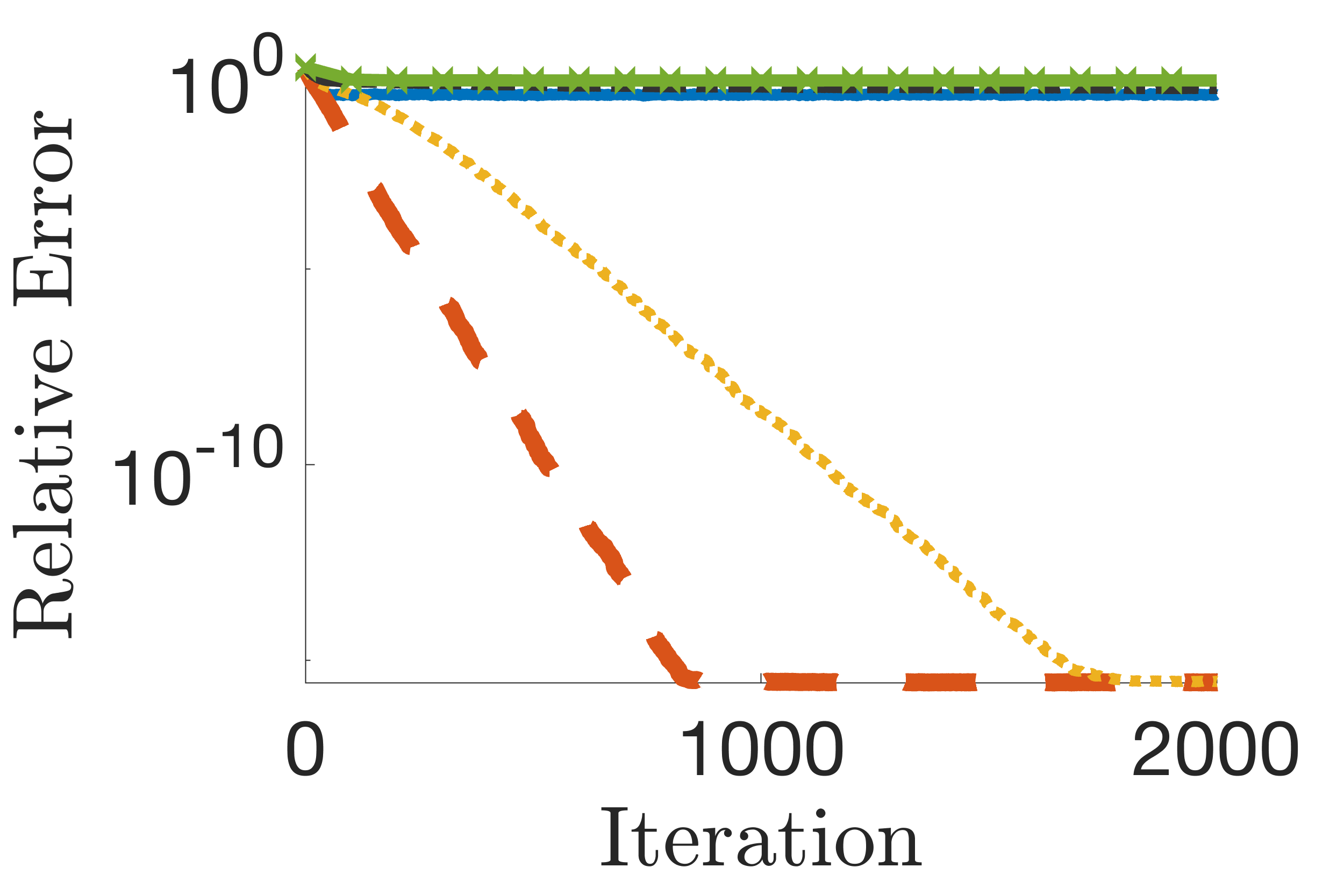}%
    \includegraphics[width=0.3\textwidth]{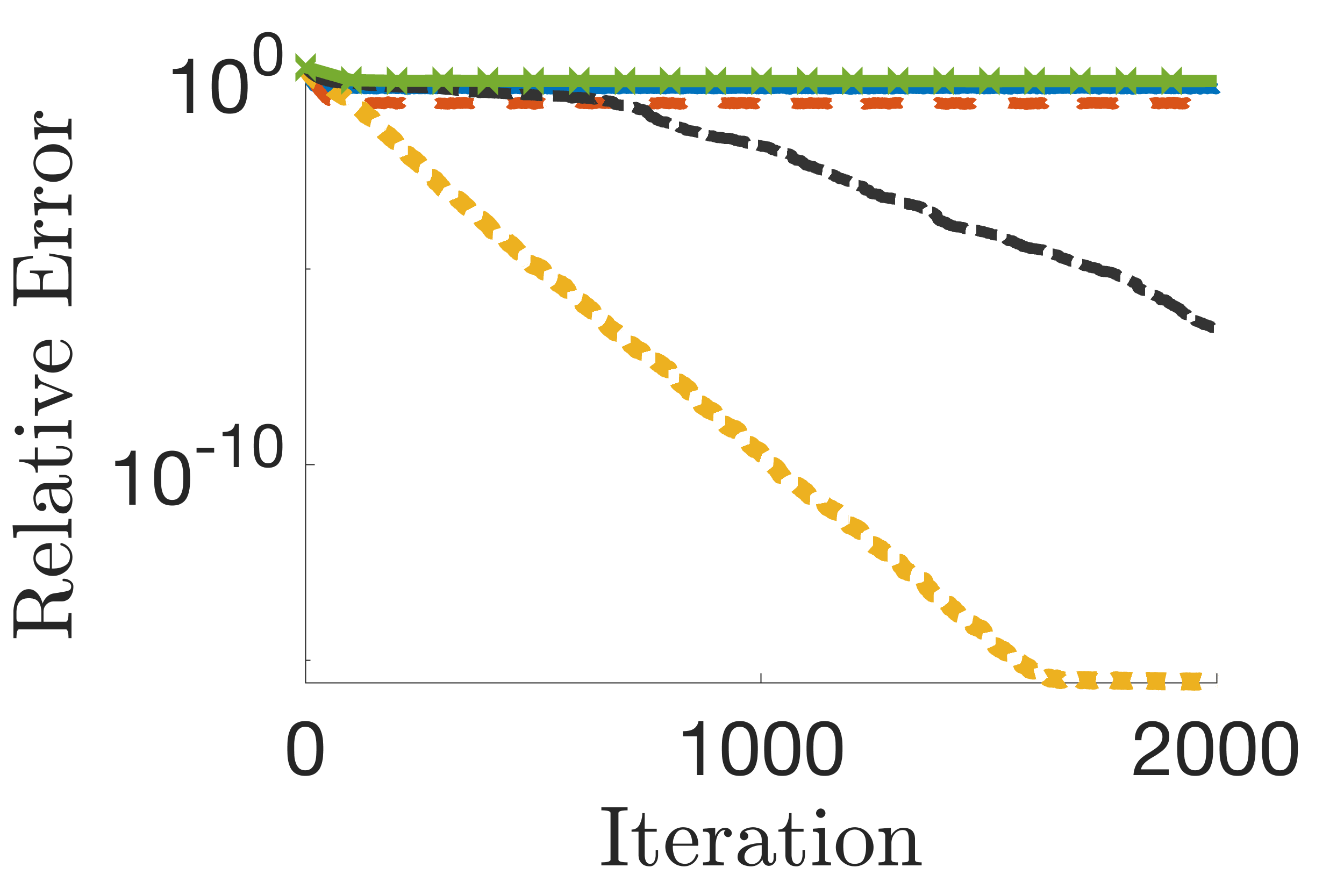}%
    \includegraphics[width=0.3\textwidth]{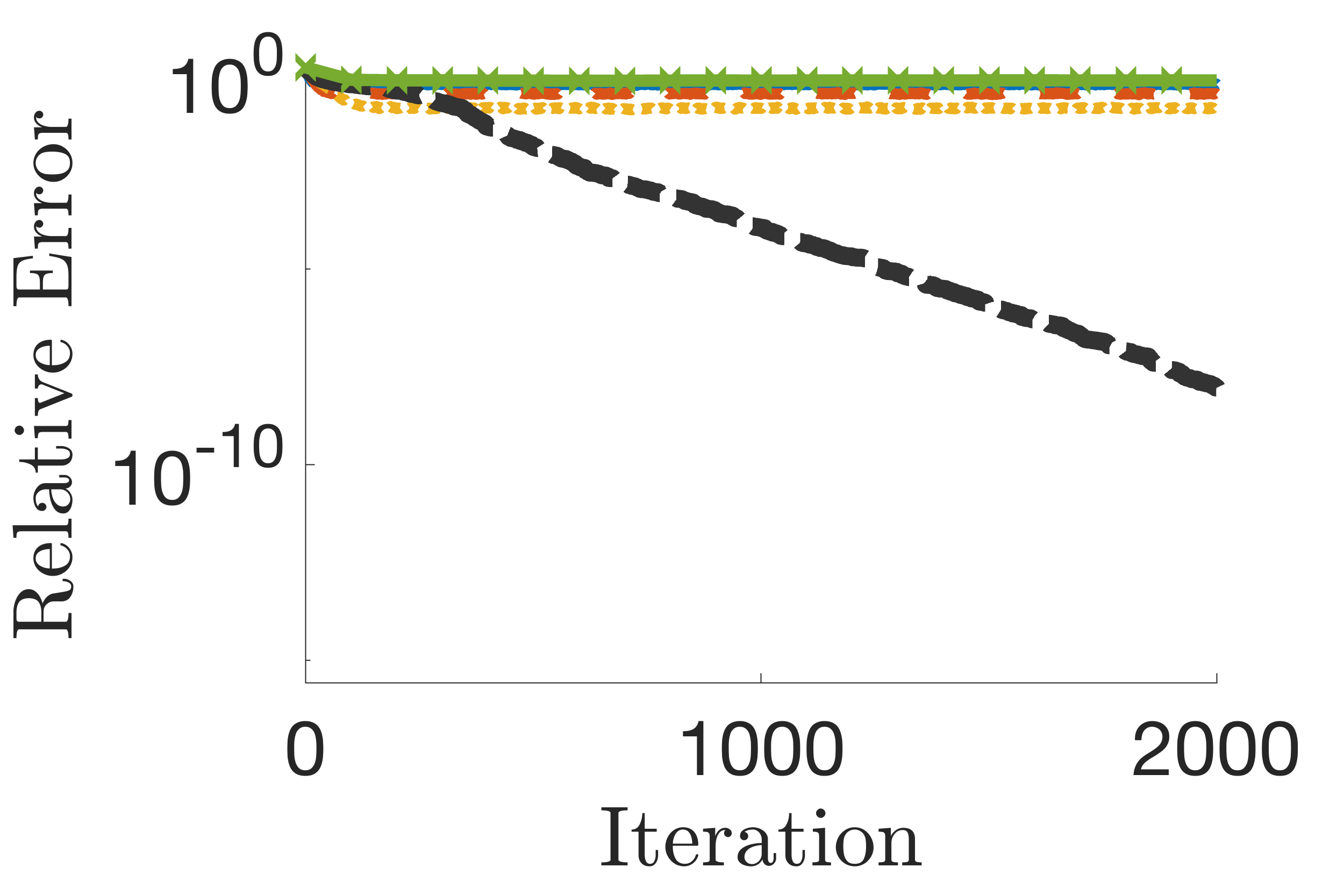}
    \\
    \includegraphics[width=0.3\textwidth]{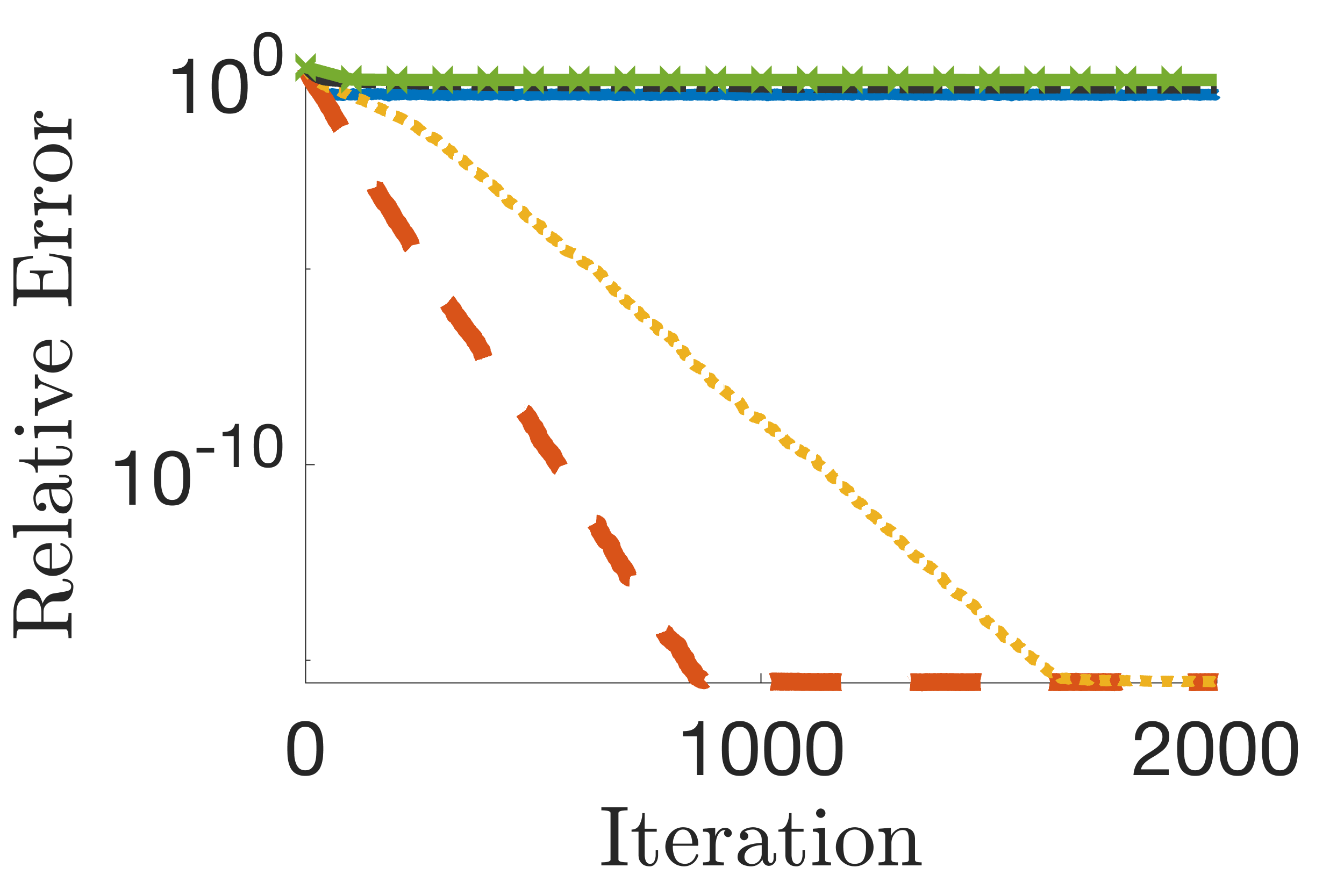}%
    \includegraphics[width=0.3\textwidth]{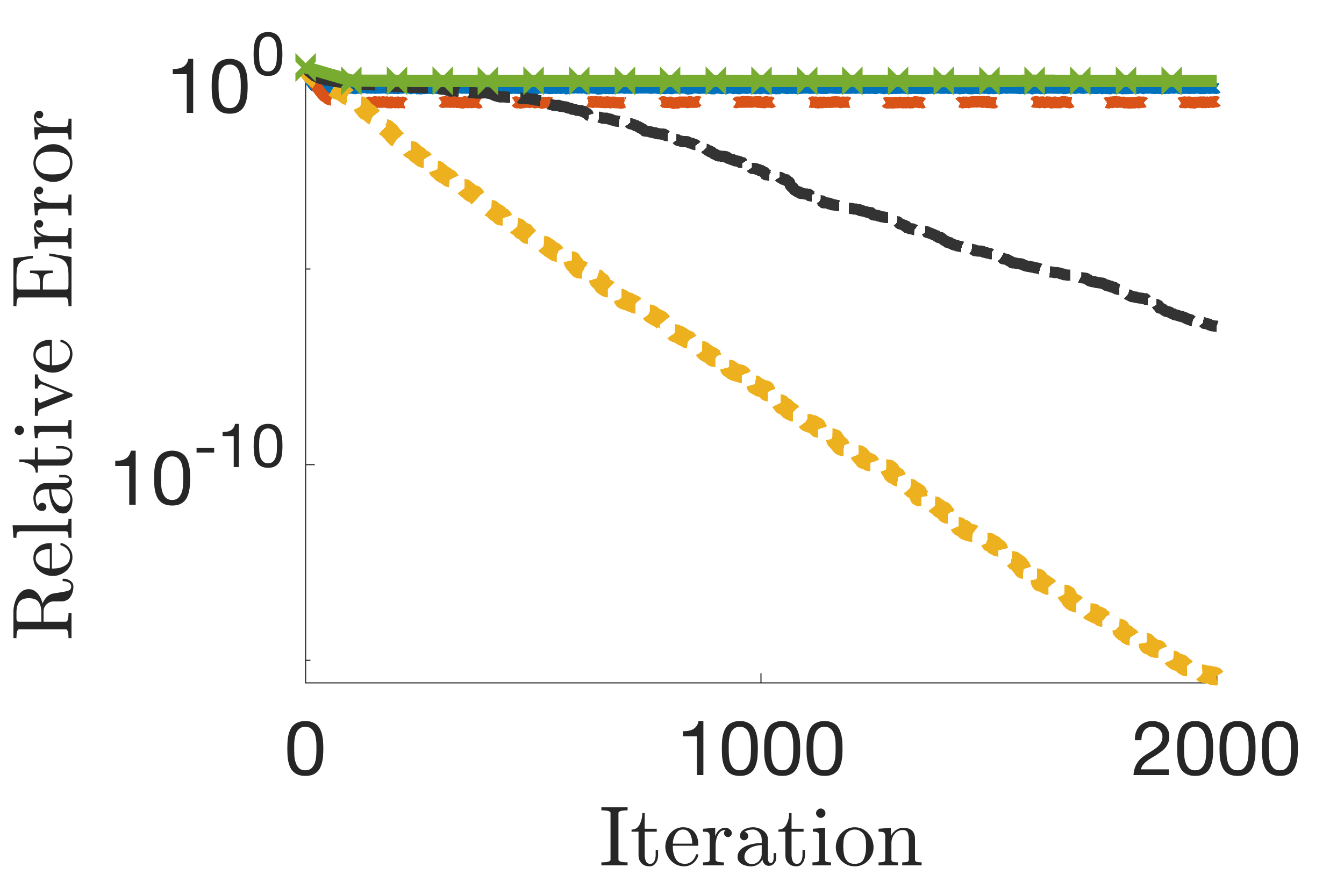}%
    \includegraphics[width=0.3\textwidth]{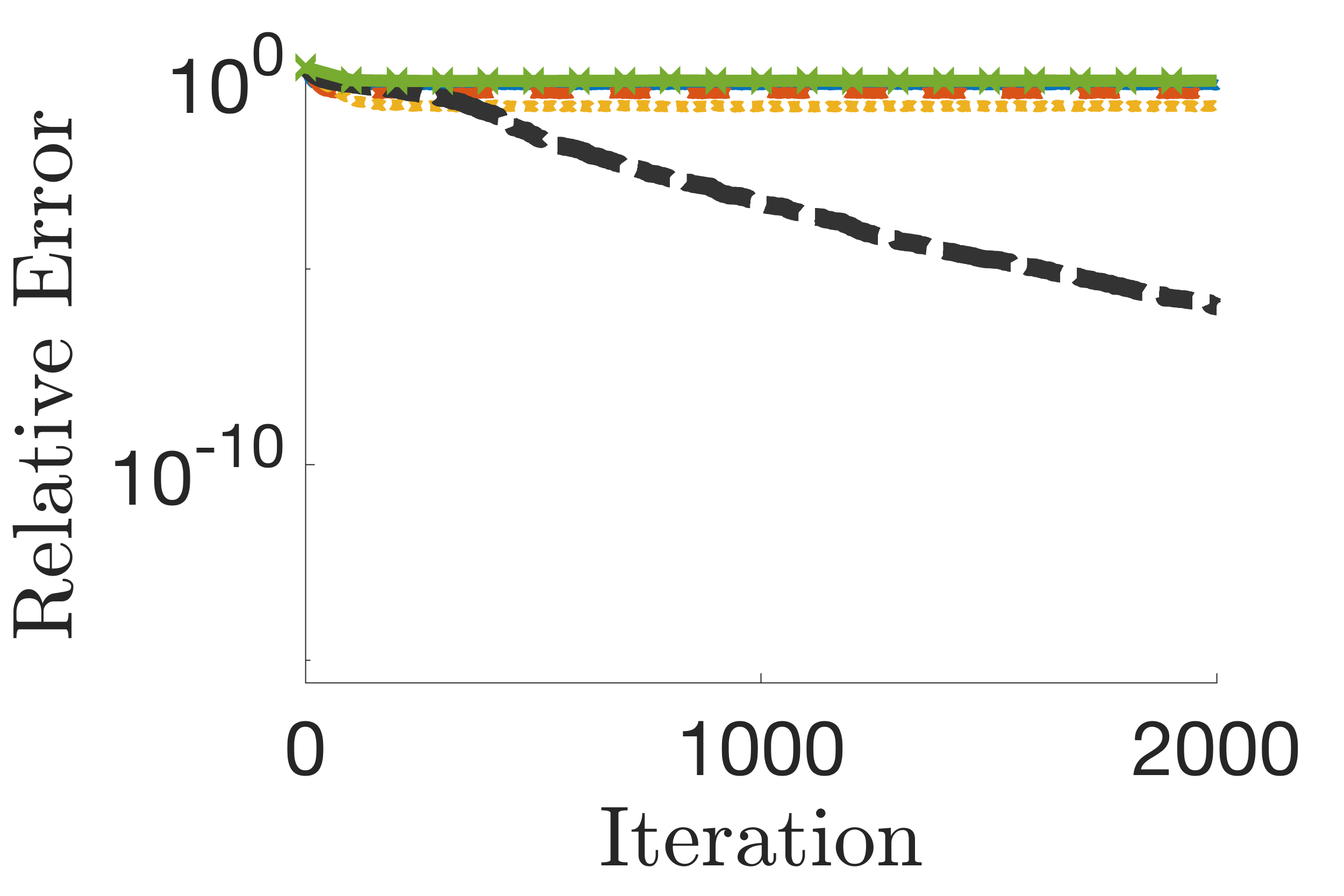}
    \caption{Median relative residual errors of mQTRK applied on a system $\tA \tX = \tB$ where $\tA \in \R^{25 \times 5 \times 10}$, $\tB \in \R^{25 \times 4 \times 10}$, and the corruptions are generated from $\mathcal{N}(10,5)$. In the left column plots, $\tbeta = 0.025$, the middle column plots $\tbeta = 0.05$, and the right column plots $\tbeta = 0.075$. In the top row plots, $\tbrow = 0.2$, the second row plots, $\tbrow = 0.4$, the third row plots $\tbrow = 0.8$, and the bottom row plots $\tbrow = 1$.}
     \label{fig:mqtrk-small-corr}
\end{figure}

In both Figures \ref{fig:mqtrk-large-corr} and \ref{fig:mqtrk-small-corr}, we observe the following:
\begin{enumerate}[label=(\alph*)]

\item For $\beta = $ 0.025 and 0.05, mQTRK converges for at least one of the $q$ values for all $\tbrow \in \{0.2, 0.4, 0.8, 1\}$.
Further, for $\beta =$ 0.025, as $\tbrow \ge 0.2$ increases, the set of $q$ values for which mQTRK converges remains the same. 
This is in contrast to QTRK (in Figures \ref{fig:qtrk-large-corr} and \ref{fig:qtrk-small-corr}) where we do not observe this behavior even without considering the case of  $\tbrow = 1$.
\item On the other hand, for $\tbrow = 0.2$ and 0.4, we observe that as $\tbeta \ge 0.025$ increases the convergence of mQTRK is slower and for some values of $q$ it ceases to converge even when $\tbeta \leq 0.075$. 
We note that this suggests that mQTRK may be most appropriately used when the overall corruption rate is small, although mQTRK is quite robust to scenarios where the corruptions are well distributed across the row slices of the measurement tensor.
\item We note that, although Remark~\ref{rem:mQTRK-bad-behavior} illustrates that scenarios exist where mQTRK can fail to make any progress even for reasonable choices of quantile $q$, empirically we do not see these scenarios arise.  We that see mQTRK makes progress with at least one value of $q$ in all experiments in Figures~\ref{fig:mqtrk-large-corr} and~\ref{fig:mqtrk-small-corr}.

\end{enumerate}

In Figure \ref{fig:mqtrk-large-corr}, we additionally observe the following.
For $\tbrow \geq 0.4$, mQTRK ceases to converge for the underestimates $\tbeta$ (i.e., when mQTRK is not cautious). 
This is in contrast to QTRK as observed in Figures \ref{fig:qtrk-large-corr} and \ref{fig:qtrk-small-corr}. 
On the other hand, it appears that the overestimates of $\tbeta$ are more favorable in the convergence of mQTRK. 
This is also observed in the right-most plots with $q = 0.90$ as an overestimate of $\tbeta = 0.075$.  In summary, mQTRK performs better when $q$ is chosen conservatively; this is in contrast to QTRK which performs better when $q$ is chosen boldly. This perhaps suggests choosing for $q$ slight overestimates of our belief on the value of $\tbeta$ when using mQTRK and and slight underestimate when using QTRK. The figures demonstrate some of these sensitivities.

In Figure \ref{fig:mqtrk-small-corr}, we observe in all the subplots that mQTRK does not converge for $q=0.9$ (corresponding to an overestimate of $\tbeta$).
Similar to what we noted in Section~\ref{subsec:QTRK_experiments} for QTRK, we observe slower convergence for mQTRK when the magnitudes of the corruptions create ambiguities.
While overestimates of $\tbeta$ are more favorable in the convergence of mQTRK when the magnitudes of the corruptions are overall larger (setting of Figure \ref{fig:mqtrk-small-corr}), we do not observe that in Figure \ref{fig:mqtrk-small-corr}.
mQTRK might be too restrictive and ``detect" at least one corruption in the same column slice over (many or all) iterations.
Thus, (repeatedly or completely) not updating the corresponding column in $\tX^{(k)}$.  Mitigating this behavior is an interesting direction for future research;  in this case, restarting may be an advantageous strategy.  Finally, we point out that mQTRK consistently outperforms TRK, again illustrating the promise of quantile-based methods for avoiding the effects of corruption.

\begin{figure}[h!]
    \centering  
    \includegraphics[width=0.3\textwidth]{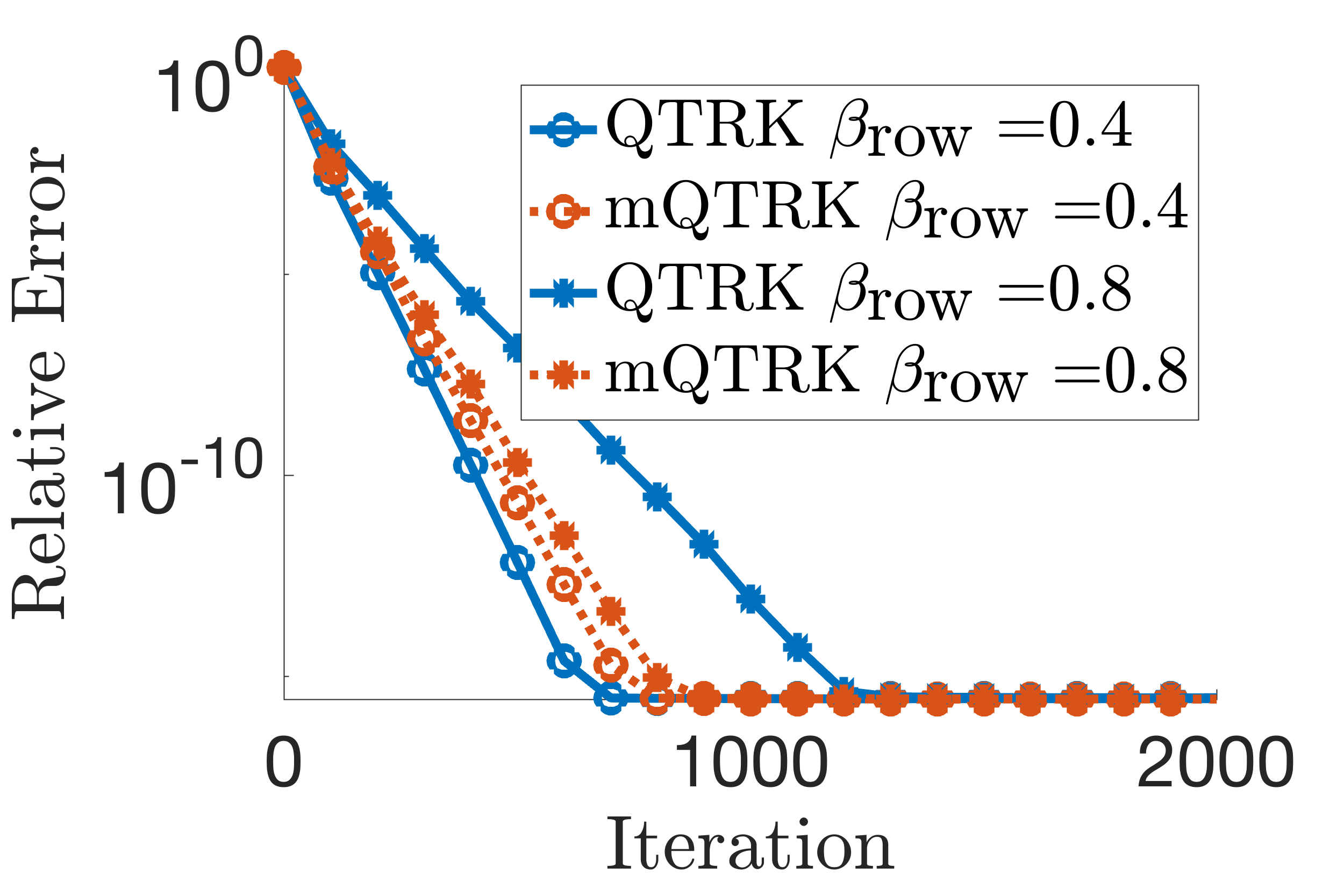}
    \includegraphics[width=0.3\textwidth]{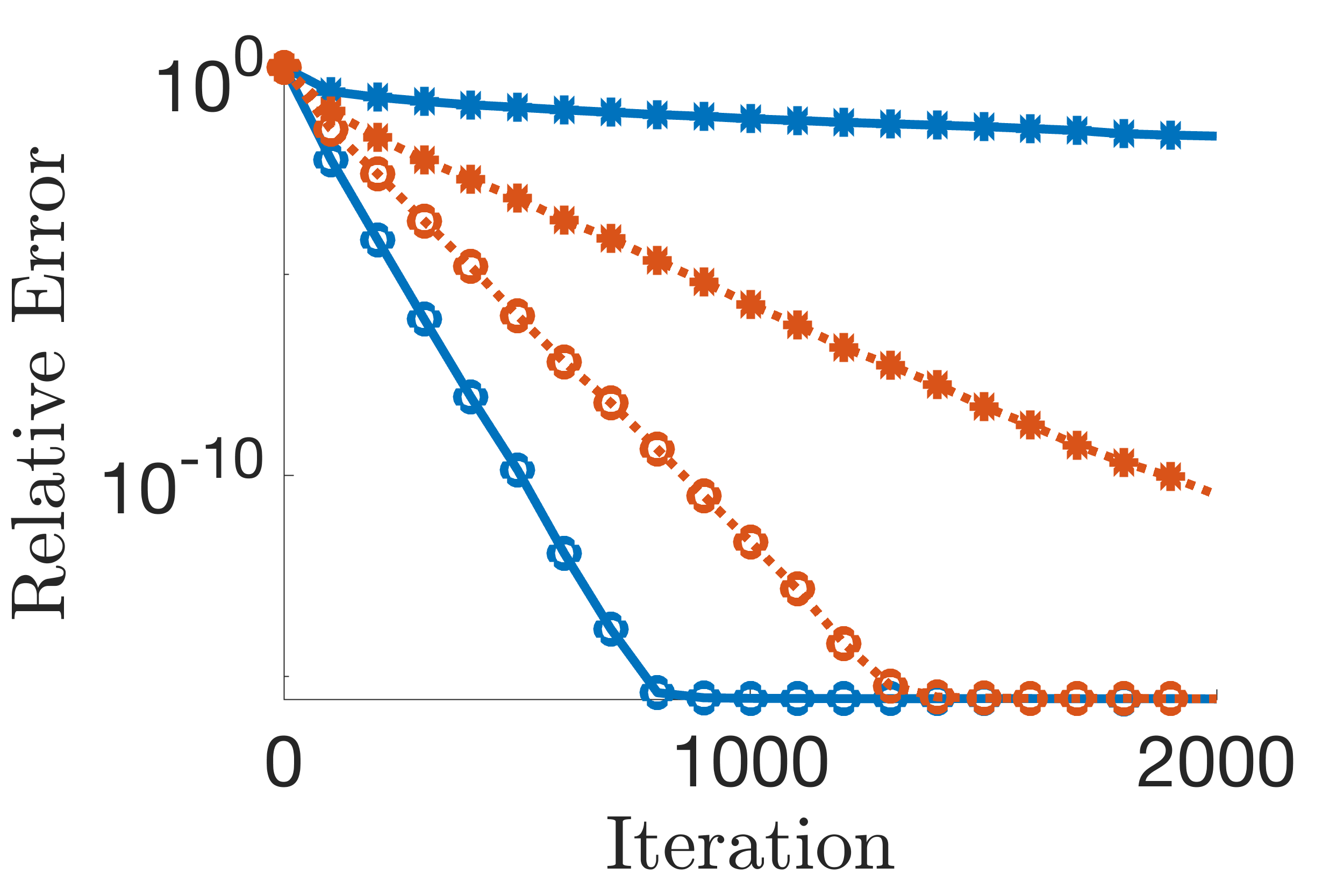}
    \includegraphics[width=0.3\textwidth]{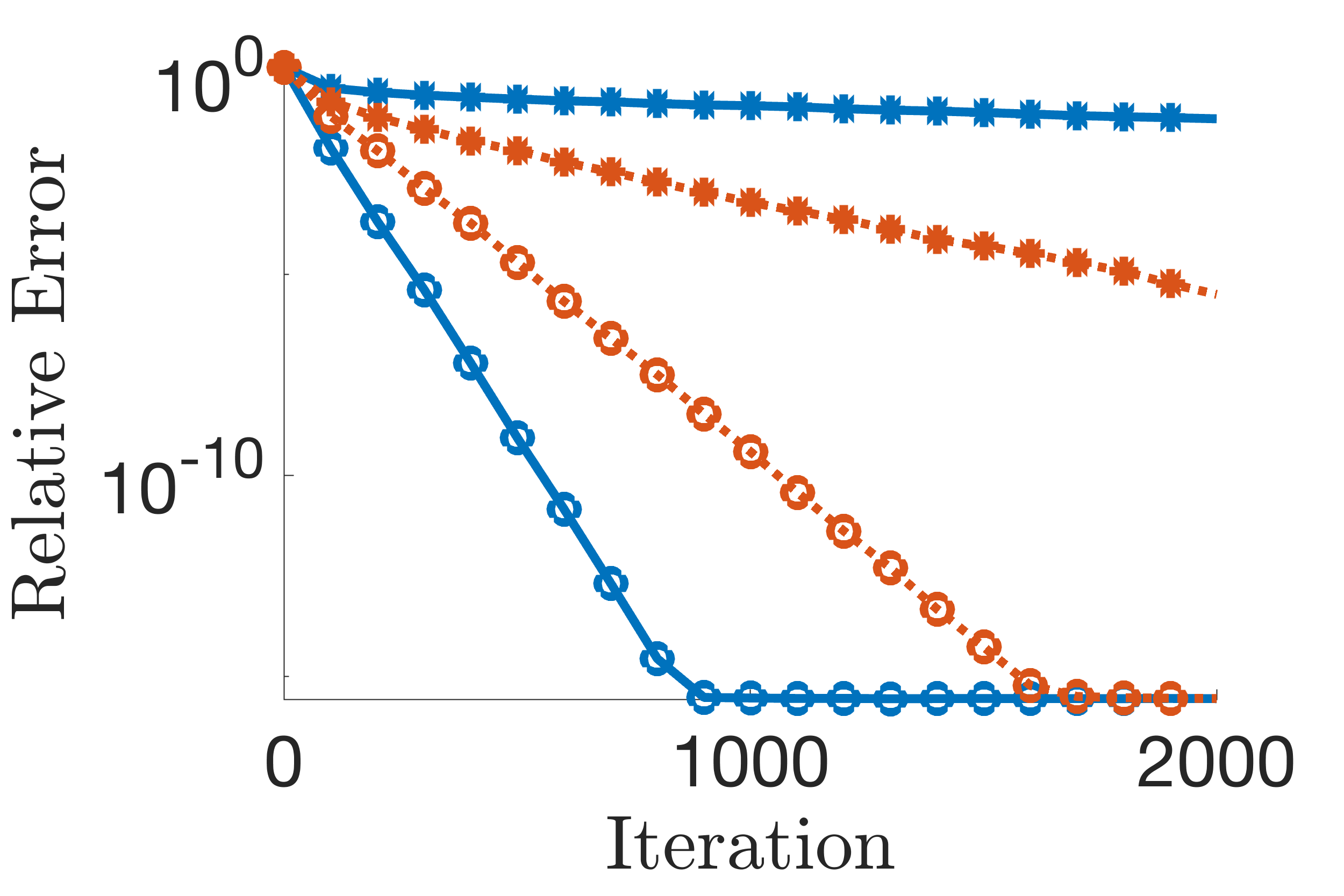}\\  
    \includegraphics[width=0.3\textwidth]{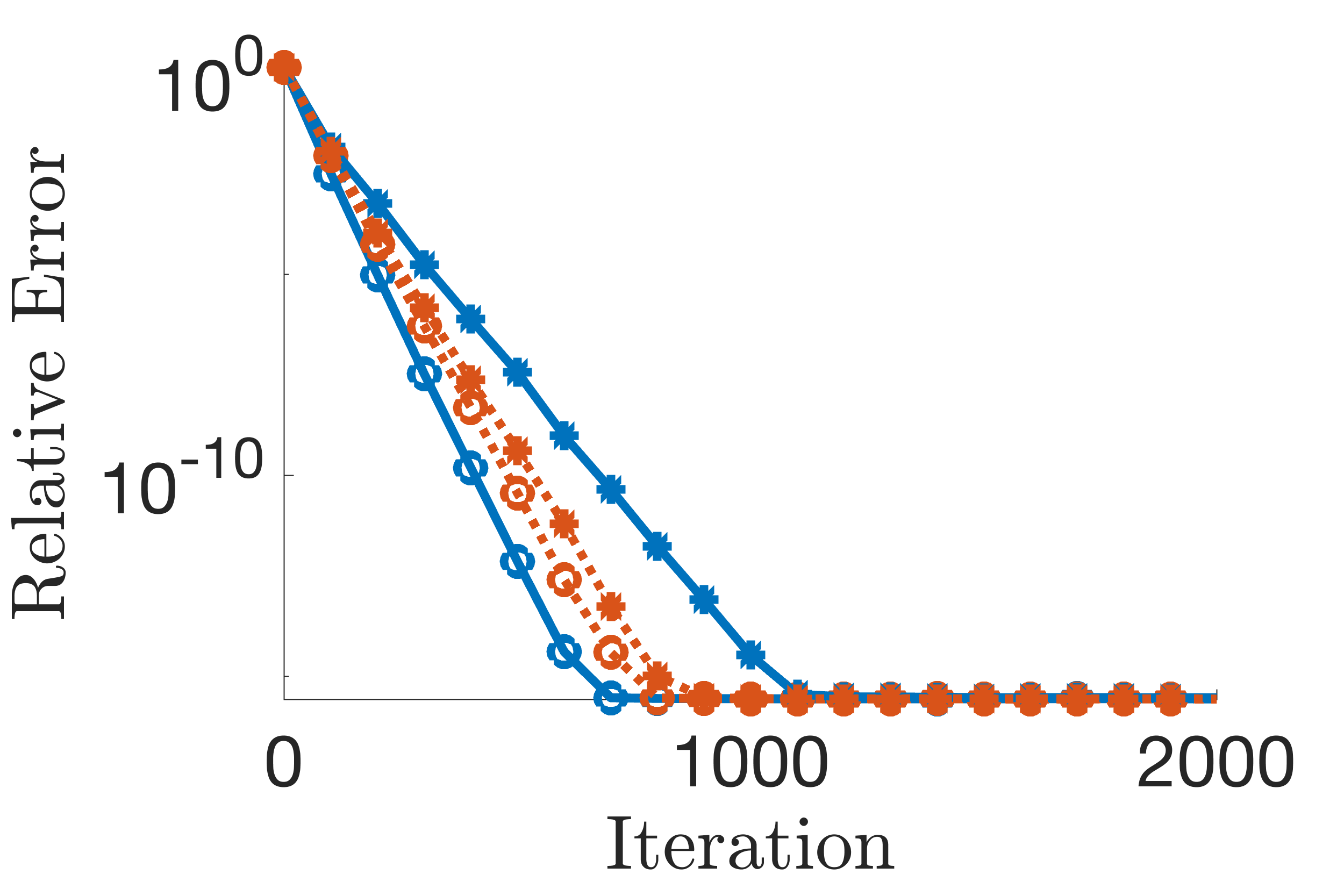}
    \includegraphics[width=0.3\textwidth]{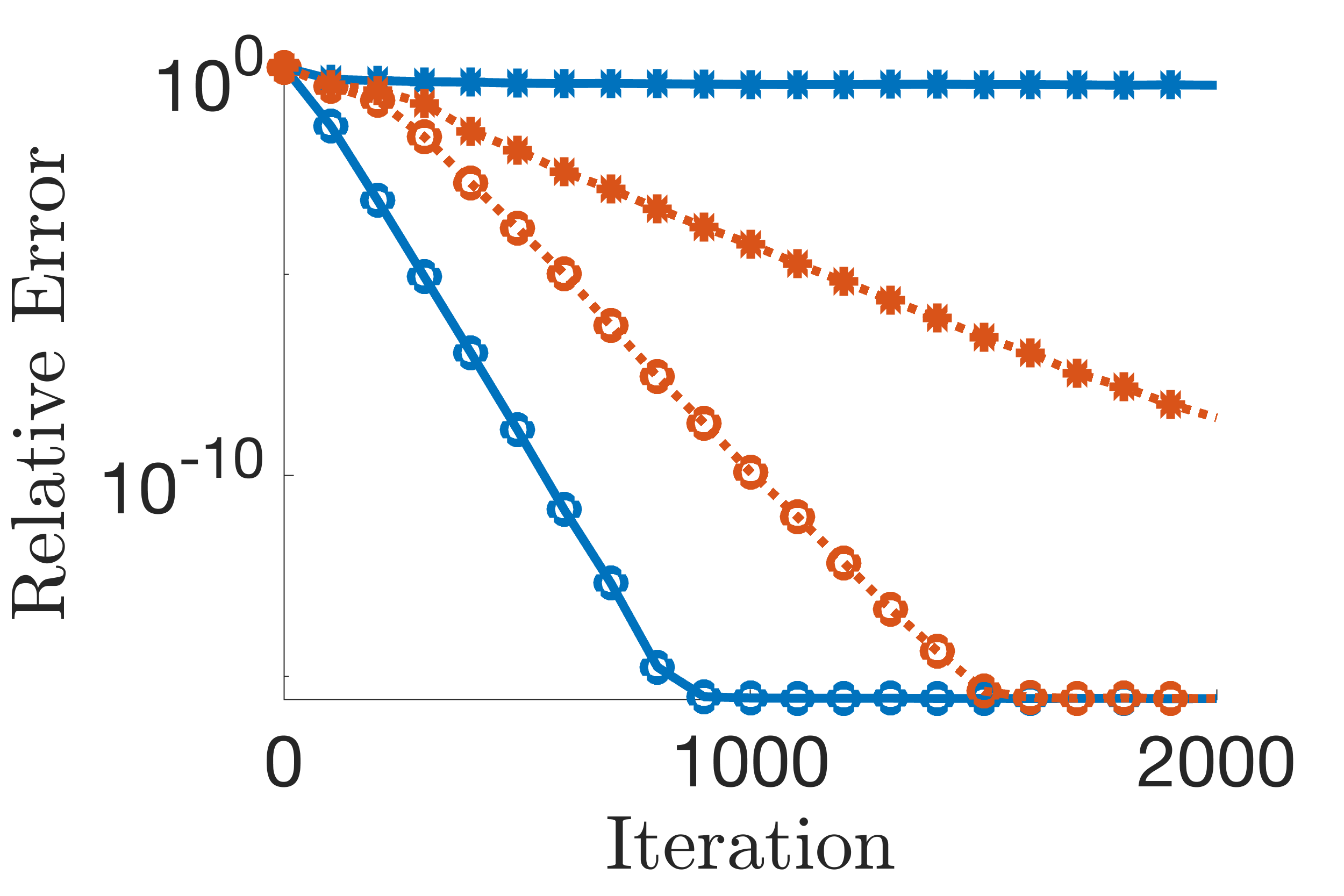}    
    \includegraphics[width=0.3\textwidth]{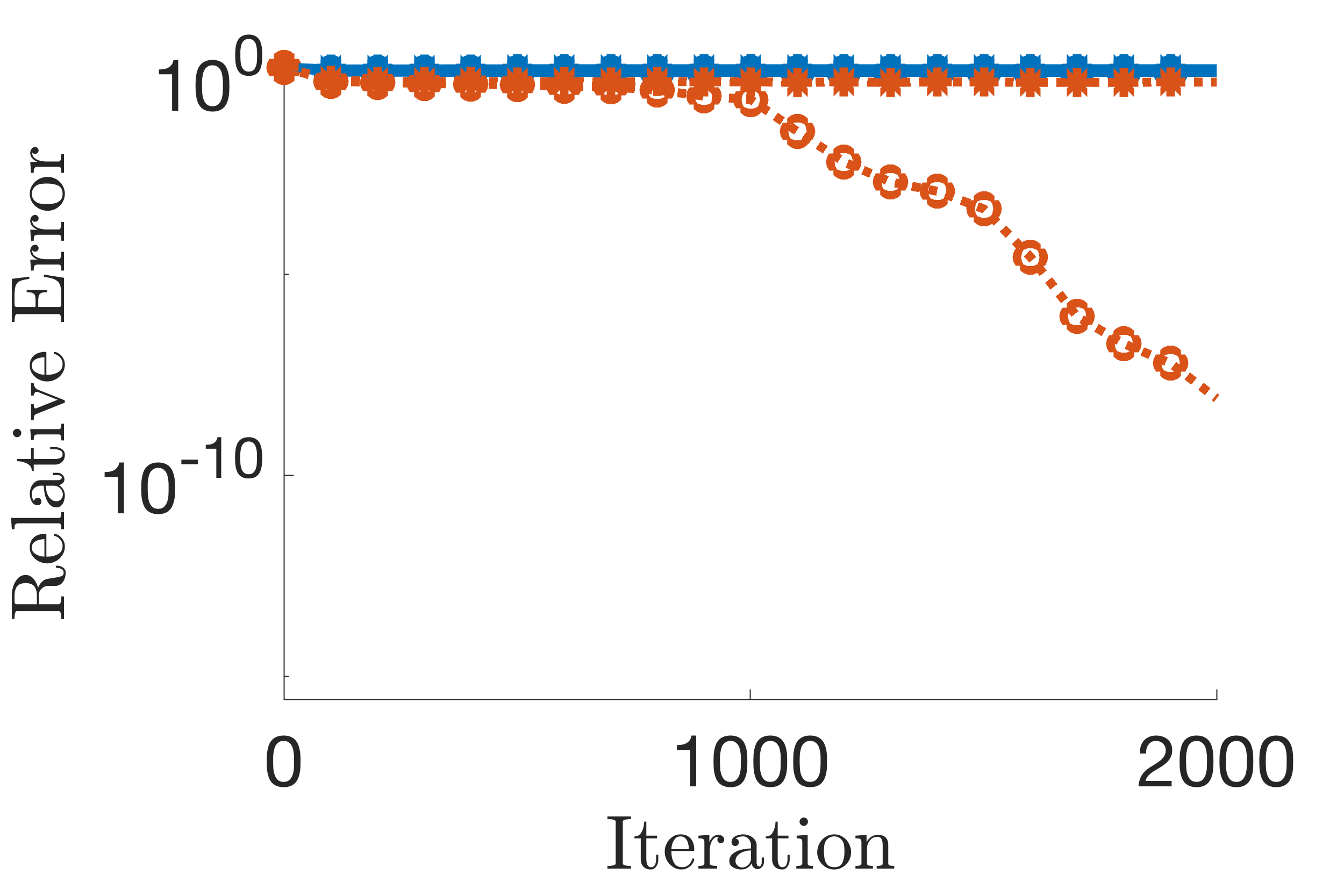}
    \caption{Median relative residual errors of QTRK and mQTRK applied on the same system $\tA \tX = \tB$ where $\tA \in \R^{25 \times 5 \times 10}$ and $\tB \in \R^{25 \times 4 \times 10}$. In the top row plots, the corruptions are generated from $\mathcal{N}(100,20)$, and in the bottom row plots, the corruptions are generated from  $\mathcal{N}(10,5)$.
    In the left column plots, $\tbeta = 0.025$, the middle column plots $\tbeta = 0.075$, and the right column plots $\tbeta = 0.1$. In all experiments, $q = 1 - \tbeta$. }
\label{fig:comp_plots}
\end{figure}

\subsection{Comparison of QTRK and mQTRK}\label{subsec:QTRKvsmQTRK}

In the set of experiments presented in Figure~\ref{fig:comp_plots} we apply QTRK and mQTRK to solve the same tensor linear systems $\tA \tX = \tB$ given the same initialization $\tX^{(0)}$.
We consider values for $\tbrow \in \{0.4, 0.8\}$ and $\tbeta \in \{ 0.025. 0.075, 0.01\}$. 
As an input for QTRK and mQTRK, we choose quantile values $q = 1 - \tbeta$ for each value of $\tbeta$. 
In the top row plots of Figure~\ref{fig:comp_plots}, the magnitudes of the corruptions are sampled from $\mathcal{N}(100,20)$ and in the bottom row plots the magnitudes are sampled from $\mathcal{N}(10,5)$.

In all the plots of Figure \ref{fig:comp_plots}, except for the bottom right plot, QTRK converges faster than mQTRK when $\tbrow = 0.4$.
When $\tbeta = 0.025$ the difference is minor, but that grows when $\tbeta = 0.075$.
In all the plots, except for the bottom right plot, mQTRK converges faster than QTRK when $\tbrow = 0.8$.
These observations are consistent with the results presented in Figures~\ref{fig:qtrk-large-corr} through \ref{fig:comp_plots}. 


\section{Application to Video Deblurring}
\label{subsec:video_deblurring}

In this section, we consider the problem of recovering a true video tensor $\tX \in \R^{l\times p\times n}$ from a blurry video tensor (with known blurring operator) that contains additive corruptions.
We showcase the performance of QTRK and mQTRK applied to the \texttt{mri} image dataset obtained from the Matlab Example Data Sets\footnote{\url{https://www.mathworks.com/help/matlab/import_export/matlab-example-data-sets.html}} after the images are blurred and corrupted.
We first describe how this problem can be formulated as a tensor linear system.

Suppose the video is blurred frame by frame using a circular convolution kernel $\mat{H} \in \R^{l_1 \times p_1}$ where $l_1<l$ and  $p_1<p$. 
By suitably padding the kernel with zeros, we can assume, without loss of generality, that $\mat{H} \in \R^{l \times p}$.
Let $\tH \in \R^{p \times p \times l}$ denote the blurring tensor 
with frontal slices given by $\tH_{::i} = \mathrm{circ}(\mat{H}_{i:}) \in \R^{p\times p}$, a circulant matrix defined on the $i^{th}$ row of $\mat{H}$.

One can then identify this with a tensor linear system
$$ \tH  \tilde{\tX} = \tilde{\tY},$$ 
where $\tilde{\tX}$ and $\tilde{\tY} \in \R^{p \times n \times l}$. Here, $\tilde{\tX}$ and $\tilde{\tY}$ are obtained by reordering the original video and blurry video tensor so that the horizontal slices of $\tX$ and $\tY \in \R^{l\times p \times n}$ now represent the frontal slices of $\tilde{\tX}$ and $\tilde{\tY}$ respectively.

\begin{figure}[h!]
    \includegraphics[width=0.65\textwidth]{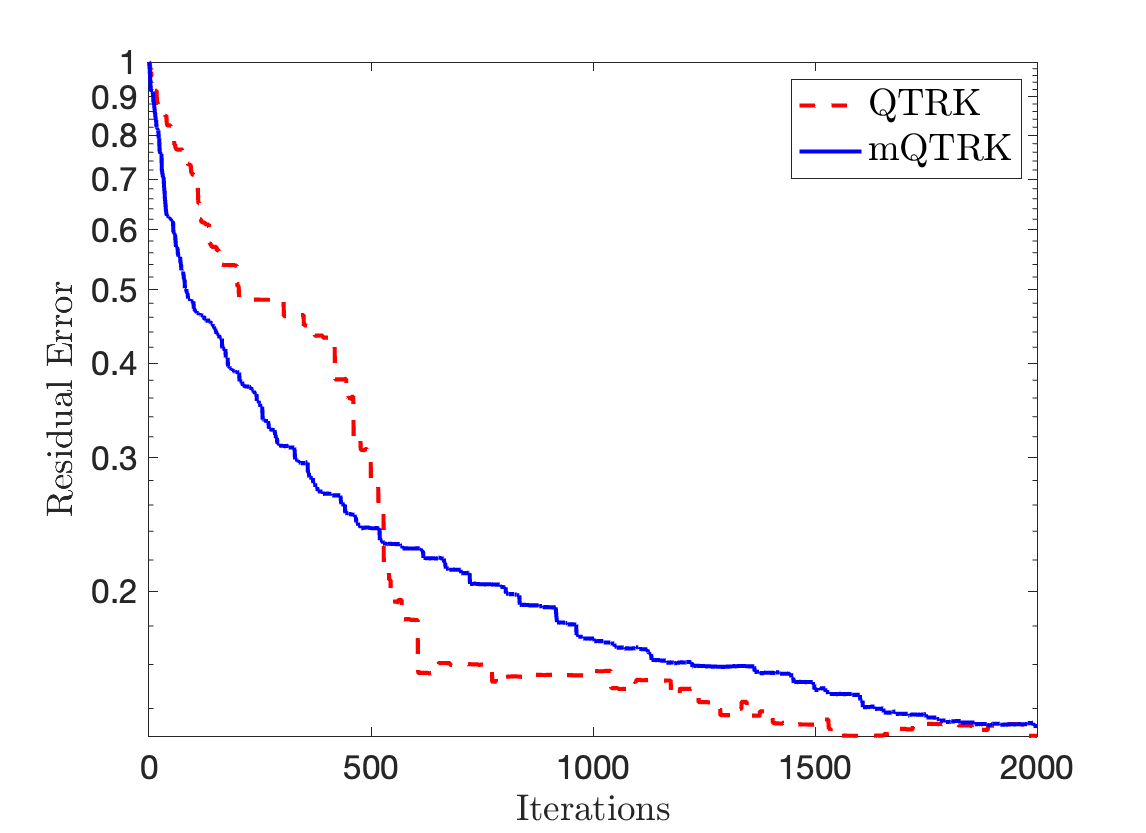}%
    \includegraphics[width=0.31\textwidth]{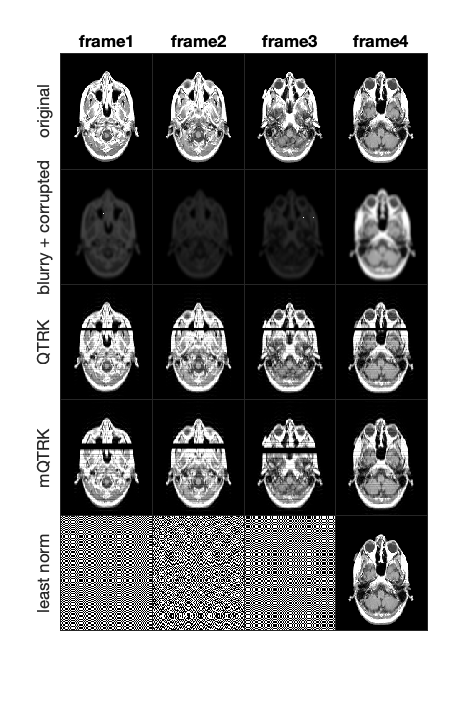}
    \caption{QTRK and mQTRK applied for the recovery of blurred and corrupted MRI video data. In the left plot, the relative residual error of QTRK and mQTRK is reported over 2000 iterations using log scale. In the right plot, the frames from the original video are shown in the top row, the blurred and corrupted frames in the second row, the frames recovered using QTRK and mQTRK in the third and fourth rows resp., and the least norm solution in the bottom row.}\label{fig:deblurring-exp}
\end{figure}

In this work, we are interested in the case where the corrupted blurred video $\widetilde{\tY}_c \in \R^{p \times n \times l}$ is observed instead.  
In the experiment presented in Figure~\ref{fig:deblurring-exp}, we consider the recovery of 12 frames of blurred and corrupted MRI images of size $128 \times 128$.
The images were first circularly blurred by using a $5\times5$ Gaussian filter to obtain $\widetilde{\tY}$ and then additively corrupted to obtain $\widetilde{\tY}_c\in\R^{128\times12\times128}$.
The corruption values are sampled from $|\mathcal{N}(3,2)|$ with 15 corruptions in total restricted to 6 row slices.
We use QTRK and mQTRK to solve the corrupted system given by $ \tH  \tilde{\tX} = \tilde{\tY}_c$ where 
$\tH \in \R^{128 \times 128 \times 128}$ is the Gaussian blurring operator.

In the left plot of Figure~\ref{fig:deblurring-exp}, we plot the relative residual errors $\|\tilde\tY - \tH \tX^{(k)} \|_F/\|\tilde\tY\|_F$ where $\tX^{(k)}$ is the output at the $k$-th iteration of QTRK and mQTRK. 
Both algorithms are ran for 2000 iterations with quantile value $q=0.99$.

As seen on the right of Figure~\ref{fig:deblurring-exp}, these recovered images visibly match the original video data outside of the row slices affected by corruption,   
In this figure, the first row of images gives four frames of the original data, the second row the same four frames after blurring and additive corruption, the third row the same four frames after recovery with QTRK, the fourth row the same four frames after recovery with mQTRK, and the last row the same four frames after recovery using a simple least-norm least-squares solve (which fails dramatically due to the additive corruption). 
We note that this system is not highly overdetermined so it is unlikely that our theoretical results hold in this setting.  However, we see good recovery of uncorrupted slices regardless.

\section{Conclusion}
We present two methods for solving the tensor regression problem in a large-scale setting with arbitrarily large corruptions. These approaches utilize the iterative framework of the Kaczmarz method along with a quantile that aims to detect and ignore corruptions. Interestingly, both the analysis and the behavior of this approach, while motivated from the matrix setting, are quite drastically different from that simpler setting. Indeed, because a single corruption in the observations affects multiple possible projections within the algorithm, greater care is needed when designing, analyzing, and implementing these approaches. To that end, we propose a straightforward quantile tensor randomized Kaczmarz method that, when the proportion of corruptions is small enough, will guarantee convergence. We also propose a more complex ``masked" variant that aims to allow for a larger proportion of corruptions but suffers from the impossibility of guaranteeing theoretical convergence. Nonetheless, we show that both methods perform well in practice, and include a discussion about the likelihood the masked version would encounter the scenario that prevents its theoretical convergence. Interesting future work could include a modification of the masked version that removes this impossibility, or an analysis that tracks the probability of trajectories of the iterates and bounds the catastrophic events.

\section{Acknowledgements}
The initial research for this effort was conducted at the Research Collaboration Workshop for Women in Data Science and Mathematics (WiSDM), August 2023 held at the Institute for Pure and Applied Mathematics (IPAM). Funding for the workshop was provided by IPAM, AWM and DIMACS (NSF grant CCF1144502).

This material is based upon work supported by the National Science Foundation under Grant No. DMS-1928930, while several of the authors were in residence at the Mathematical Sciences Research Institute in Berkeley, California, during the summer of 2024.  

Several of the authors also appreciate support provided to them at a SQuaRE at the American Institute of Mathematics. The authors thank AIM for providing a supportive and mathematically rich environment.

This material is based upon work supported by the National Science Foundation under Grant No. DMS-1929284 while the author was in residence at the Institute for Computational and Experimental Research in Mathematics in Providence, RI, during the ``Randomized Algorithms for Tensor Problems with Factorized Operations or Data" Collaborate@ICERM.

JH was partially supported by NSF DMS \#2211318 and NSF CAREER \#2440040.  DN was partially supported by NSF DMS \#2408912. KYD was partially supported by NSF LEAPS-MPS \#2232344.

\bibliographystyle{ieeetr}
\bibliography{main2}

\newpage

\appendix

\end{document}